\DeclarePairedDelimiter\abs{\lvert}{\rvert}
\newtheorem{exe}{Theorem}
\newtheorem{theorem}{Theorem}
\newtheorem{lemma}[theorem]{Lemma}
\newcommand\norm[1]{\left\|#1\right\|}
\newcommand\bnorm[1]{\left\|#1\right\|}
\newcommand{\fn}{f}
\newcommand{\E}{\mathbb{E}}
\newcommand{\R}{\mathbb{R}}
\newcommand{\1}{\mathbbm{1}}
\newcommand{\Tr}{\mathrm{Tr}}
\newcommand\independent{\protect\mathpalette{\protect\independenT}{\perp}}
\def\independenT#1#2{\mathrel{\rlap{$#1#2$}\mkern2mu{#1#2}}}
\newcommand{\highlight}[1]{\textcolor{black}{#1}}
\journal{Pattern Recognition}
\begin{document}
\begin{frontmatter}

\title{\highlight{On the Sample Complexity of \\ Rank Regression from Pairwise Comparisons}}

\author[1]{Berkan Kad\i o\u{g}lu\corref{cor} \fnref{fn1}}
\ead{kadioglu.b@ece.neu.edu}
\author[1]{Peng Tian\corref{2} \fnref{fn1}}
\ead{pengtian@ece.neu.edu}
\author[1]{Jennifer Dy}
\ead{jdy@ece.neu.edu}
\author[1]{Deniz Erdo\u{g}mu\c s}
\ead{erdogmus@ece.neu.edu}
\author[1]{Stratis Ioannidis}
\ead{ioannidis@ece.neu.edu}

\address[1]{Department of Electrical and Computer Engineering, Northeastern University, Boston, MA, 02115, USA.}
\cortext[cor]{Corresponding author}
\fntext[fn1]{Berkan Kad\i o\u{g}lu and Peng Tian are both first authors of this paper.}

\begin{abstract}
We consider a rank regression setting, in which a dataset of $N$ samples with features in $\mathbb{R}^d$ is ranked by an oracle via $M$ pairwise comparisons. 
Specifically, there exists a latent total ordering of the samples; when presented with a pair of samples, a noisy oracle identifies the one ranked higher with respect to the underlying total ordering.
A learner observes a dataset of such comparisons and wishes to regress sample ranks from their features.
\highlight{We show that to learn the model parameters with $\epsilon > 0$ accuracy, it suffices to conduct $M \in \Omega(dN\log^3 N/\epsilon^2)$ comparisons uniformly at random when $N$ is $\Omega(d/\epsilon^2)$.}
\end{abstract}

\begin{keyword}
sample complexity \sep rank regression \sep pairwise comparisons \sep features.
\end{keyword}
\end{frontmatter}
\section{Introduction}
We consider a rank regression setting, in which a dataset of samples with features in $\mathbb{R}^d$ is ranked by an oracle via pairwise comparisons. 
Specifically, there exists a latent total ordering of the samples; when presented with a pair of samples, the (possibly noisy) oracle identifies the one ranked higher w.r.t. the underlying total ordering. 
A learner observes a dataset of such comparisons and wishes to regress sample ranks.

Rank regression has a broad range of applications in fields as diverse as social science \citep{schultz2004learning,zheng2009mining,koren2011ordrec}, economics \citep{mcfadden1973conditional,ryzin1999relationship}, and medicine \citep{tian2019severity,yildiz2019classification,guo2019variational}, to name a few. 
For example, disease severity can be regressed from patient records by presenting pairs to a medical expert and asking her to rank them. 
A dataset of such pairwise comparisons is more informative than a dataset with class labels containing diagnostic outcomes because comparisons reveal \emph{intra-class, relative} severity within, e.g., the healthy or diseased class, that cannot be inferred from class labels alone.
As an additional practical benefit, comparison labels also often exhibit lower variability across experts: experts are more likely to agree when comparing pairs rather than making absolute diagnoses: 
this has been observed in a variety of domains, including medicine \citep{campbell2016plus,kalpathy2016plus, stewart2005absolute}, movie recommendations \citep{brun2010towards, desarkar2010aggregating, desarkar2012preference, liu2014ordinal}, travel recommendations~\citep{zheng2009mining}, music recommendations \citep{koren2011ordrec}, and web page recommendations \citep{schultz2004learning}.
These advantages make learning from comparisons quite advantageous in practice; 
in an extreme example illustrating this, \citet{yildiz2019classification} used comparisons among just 80 images to train a neural network of 5.9 million parameters that attained a 0.92 AUC on a test set.

This empirical success motivates us to study the sample complexity of algorithms for learning from comparisons. 
However, doing so poses a significant challenge. 
In contrast to the standard probably approximately correct (PAC) learning setting, where samples 
are assumed to be i.i.d, learning from comparisons necessarily leads to a violation of independence. 
Even in a simple generative model where (a) samples are drawn independently and (b) pairs presented to the oracle are selected uniformly at random, any two pairs sharing a sample are correlated. 
This dependence complicates the application of concentration inequalities such as, e.g., Chernoff bounds in this setting. 

\highlight{
The main contributions of our work are as follows. 
We propose an estimator for the parameters of a generalized linear parametric model, which encompasses classical preference models such as Bradley-Terry~\citep{bradley1952rank} and Thurstone~\citep{thurstone1927law}.
We overcome the aforementioned violation of independence and prove a sample complexity guarantee on model parameters. In particular, assuming Gaussian distributed features, we characterize the convergence of the estimator to a rescaled version of the model parameters w.r.t. the ambient dimension $d$, the number of samples $N$, and the number of comparisons $M$ presented to the oracle. 
We show that to attain an accuracy $\epsilon>0$ \highlight{in model parameters}, it suffices to conduct $\Omega(dN\log^3 N/\epsilon^2)$ comparisons when the number of samples is $\Omega{(d/\epsilon^2)}$. 
Finally, we confirm this dependence with experiments on synthetic data.
}

\section{Related Work}
\label{subs:rel_work}
In \emph{rank aggregation}~\citep{fligner1993probability,dwork2001rank,cattelan2012models,marden2014analyzing}, subsets of samples are ranked by a noisy oracle, and a learner attempts to reconstruct a total ordering from these noisy rankings without access to sample features. 
Works on noisy sorting assume that the observed pairwise comparisons deviate from an existing underlying ordering via i.i.d.~Bernoulli noise.
\citet{braverman2008noisy} propose a tractable active learning algorithm that requires $\Omega(N\log(N))$ comparisons to recover the underlying ordering with high probability.
\citet{jamieson2011active} actively rank samples with $\Omega(d\log^2N)$ pairwise comparisons when samples are embedded into an unobserved $d$-dimensional space.
In the passive learning setting, assuming that the comparisons are samples from an unknown distribution over the underlying ordering, \citet{ammar2011ranking} propose a maximum entropy method with $\Omega(N^2)$ pairwise comparisons.
Under the same non-parametric model, \citet{negahban2012iterative} learn the ordering via an iterative rank aggregation algorithm requiring a total of $\Omega(N\log N)$ comparisons in which each pair needs to be repeated $\Omega(\log N)$ times. 
\citet{shah2016stochastically} show that a minimax optimal estimator can estimate the preference matrix with $O(\log^2 N/N)$ error.
By showing that the preference matrix has rank $r \ll N$ under a suitable transformation, \citet{rajkumar2016can} show that $\Omega(rN\log N)$ comparisons suffice. 
\citet{saha2019many} use pairwise comparisons to construct a graph $G([N], E)$, where the nodes are samples and edges represent the comparison labels.
Assuming that the neighboring samples in the graph are proximal in the ordering, they propose a support vector machine (SVM) algorithm that discovers a consistent total ordering with high probability.
This algorithm has a sample complexity of $\Omega(N^2\chi(\overline G))^\frac{2}{3}$, where $\chi(\overline G)$ is the \emph{chromatic number} of the complement graph $\overline G$.


Among parametric models, \citet{hajek2014minimax} show that the maximum likelihood estimator under Plackett-Luce model ~\citep{plackett1975analysis} requires $\Omega(N\log N)$ comparisons to learn Plackett-Luce scores. 
\citet{vojnovic2016parameter} show that estimating Thurstone ~\citep{thurstone1927law} scores via MLE requires $O(N\log N/\lambda)$ comparisons, where $\lambda$ is the smallest nonzero eigenvalue of the Laplacian of a graph generated by comparisons.
Assuming comparison labels are independent, \citet{ailon2012active} proposes an active learning algorithm that requires $\Omega(\epsilon^{-6}N\log^5N)$ comparison labels for a risk of $\epsilon$ times the optimal risk, where risk is a function that is minimized at the correct ordering. 
Spectral ranking methods also learn sample scores with theoretical guarantees.
\citet{negahban2017rank} show that the rank centrality algorithm learns scores in $\Theta(N\log^3N)$ comparisons, while several algorithms generalize this setting and improve upon this bound \citep{maystre2015fast, agarwal2018accelerated}.
For example, ASR \citep{agarwal2018accelerated} learns scores in $\Omega(\xi^{-2} m^3 N\text{poly}(\log N))$ $m$-way comparisons, error on BTL parameters where $\xi$ is the spectral gap of the graph Laplacian.

The \emph{rank regression} setting we study departs from the above works in regressing rankings from sample features. 
Even though inference algorithms for ranking regression and applications abound \citep{joachims2002optimizing, pahikkala2009efficient, tian2019severity, burges2005learning, chang2016automatic, yildiz2019classification, dubey2016deep}, in contrast to rank aggregation, sample complexity results are sparse.
Using independent pairwise comparisons, \citet{canonne2015testing} propose an algorithm over sample pairs that tests whether the empirical distribution is close to a target distribution.
\citet{kane2017active} propose an active learning algorithm to infer class labels via a special pairwise comparison oracle, that indicates which sample is closer to the separating hyperplane of class labels.

Our model encompasses the Bradley-Terry \citep{bradley1952rank} and Thurstone \citep{thurstone1927law} models; under both, our setting can be seen as learning a linear classifier over sample differences. 
Learning linear classifiers is of course classic in both the standard PAC learning setting \citep{vapnik1971uniform,valiant1984theory,ehrenfeucht1989general,kearns1994introduction, vapnik2006estimation, balcan2013active, balcan2017sample} and variants, including agnostic \citep{kalai2008agnostically,mammen1999smooth} and active \citep{hanneke2007bound,cavallanti2011learning,balcan2013active,balcan2017sample,awasthi2016learning,zhang2018efficient} learning. 
We stress that all of the above works operate on linear classifiers under the assumption of i.i.d.~samples, and therefore do not readily generalize or apply to our setting. This is precisely because pairs of samples are correlated, a phenomenon that is not present in standard PAC learning.

\highlight{
Closest to our setting, \citet{niranjan2017inductive} and \citet{chiang2017rank} analyze pairwise rank regression and provide sample complexity bounds.
\citet{niranjan2017inductive} recover the correct ranking with $N=\Omega(d^2)$ samples and a number of comparisons that are polylogarithmic in $N$, while \citet{chiang2017rank} provide a guarantee that depends on the $\ell_2$-distortion (due to noise) of the pairwise comparison matrix.
Nevertheless, both works ignore dependence across sample pairs. 
In particular, they analyze the concentration of labels over pairs of samples using Rademacher complexity bounds from \cite{bartlett2002rademacher}, that apply only if sample pair differences $\bm{x}_i-\bm{x}_j$ are independent. 
As a result, guarantees provided in both \citep{niranjan2017inductive} and \citep{chiang2017rank} only hold if every sample  appears in only a single pair. 
We depart by explicitly addressing this, and providing guarantees in the (more realistic) setting where samples can be compared more than once.
}

\section{Problem Formulation}
\label{sec:prob_for}

\noindent \textbf{Notation.}
For $N\in\mathbb{N}$, we denote by $[N] \equiv \{1, 2, \dots, N \}\subset \mathbb{N}$ the set of integers from $1$ to $N$, and use $\norm{\cdot}$ for Euclidean (spectral) norm of vectors (matrices).
The minimum and maximum singular values of a matrix $\bm A$ is denoted with $\lambda_{\min}(\bm A)$ and $\lambda_{\max}(\bm A)$, respectively.
We denote by $\1_{\mathcal A}$ the indicator function of a predicate $\mathcal A$, i.e., $\1_{\mathcal{A}}= 1$ if $\mathcal A$ is true and $0$ otherwise.

\noindent \textbf{Generative Model.}
We consider a setting in which an expert is presented with pairs of samples from a dataset. 
The expert produces a (possibly noisy) \emph{comparison label} for each pair, \emph{i.e.}, she selects among two samples the one ranked higher with respect to an underlying total ordering of the samples.
Formally, we are given a dataset of {$2N$} samples, each denoted by {$i \in [2N]$}. 
Each sample $i$ has a corresponding feature vector $\bm X_i \in \R^d$. 
{Using the first half of the dataset (i.e., $[N]$), the} expert is presented with $M$ pairs of samples {$(I_m, J_m) \in [N]\times[N]$} where $m \in [M]$ and produces a comparison label $Y_m \in \{+1, -1\}$ where $Y_m = +1$ if $I_m$ ranks higher than $J_m$ and $-1$ otherwise. 
We denote the dataset of all comparisons by $\mathcal{D}=\{(I_m, J_m, Y_m) \}_{m=1}^{M}$.

We assume that the feature vectors $\bm X_i \in \R^d$ are independent and identically distributed (i.i.d.) {Gaussian vectors with mean $\bm \mu \in \R^d$ and positive definite covariance $\bm \Sigma \in \R^{d\times d}$, i.e., $\bm X_i \sim \mathcal{N}\big(\bm \mu, \bm \Sigma\big)$. 
We assume that the eigenvalues of $\bm \Sigma$ are ordered so that $\lambda_1 \geq \lambda_2 \geq \dots \geq \lambda_d > 0$.} 
Furthermore, we assume that $I_m, J_m$ are sampled uniformly at random from {$[N]$} and are independent of each other and {$\{\bm X_i\}_{i=1}^{2N}$.}
Labels $Y_m$ are independent of all other variables conditioned on $I_m, J_m, \bm X_{I_m}, \bm X_{J_m}$ {and are distributed according to the following model:} there exists a $\bm \beta \in \R^d$ such that the conditional distribution of $Y_m$ is given by
\begin{equation}
    \label{eq:conditional}
    \Pr(Y_m = 1 |\bm X_{I_m}=\bm x, \bm X_{J_m} = \bm y) = \fn(\bm \beta^\top(\bm x - \bm y)),
\end{equation}
where the function $\fn : \R \rightarrow [0, 1]$ is (a) non-decreasing, continuously differentiable 
and (b) satisfies
\begin{equation}
    \label{eq:properties}
    \lim_{x\rightarrow\infty} \fn(x) = 1, ~~~ \lim_{x\rightarrow-\infty} \fn(x) = 0, ~~~ \fn(-x) = 1-\fn(x).
\end{equation}
For example, $\fn(x)$ could be the sigmoid function, i.e. $\fn(x) = 1/(1+e^{-x})$, which results in the well known Bradley-Terry model \citep{bradley1952rank}.
Alternatively, $\fn$ could be the cumulative distribution function of standard normal distribution, i.e. $\fn(x) = (1+\text{erf}(x))/2$, which corresponds to the Thurstone model~\citep{thurstone1927law}. 
Both of these examples satisfy the aforementioned properties (a) and (b).
\begin{figure}
\centering
\subcaptionbox{\label{fig:data_unwhitten}}{\includegraphics[width=0.25\textwidth]{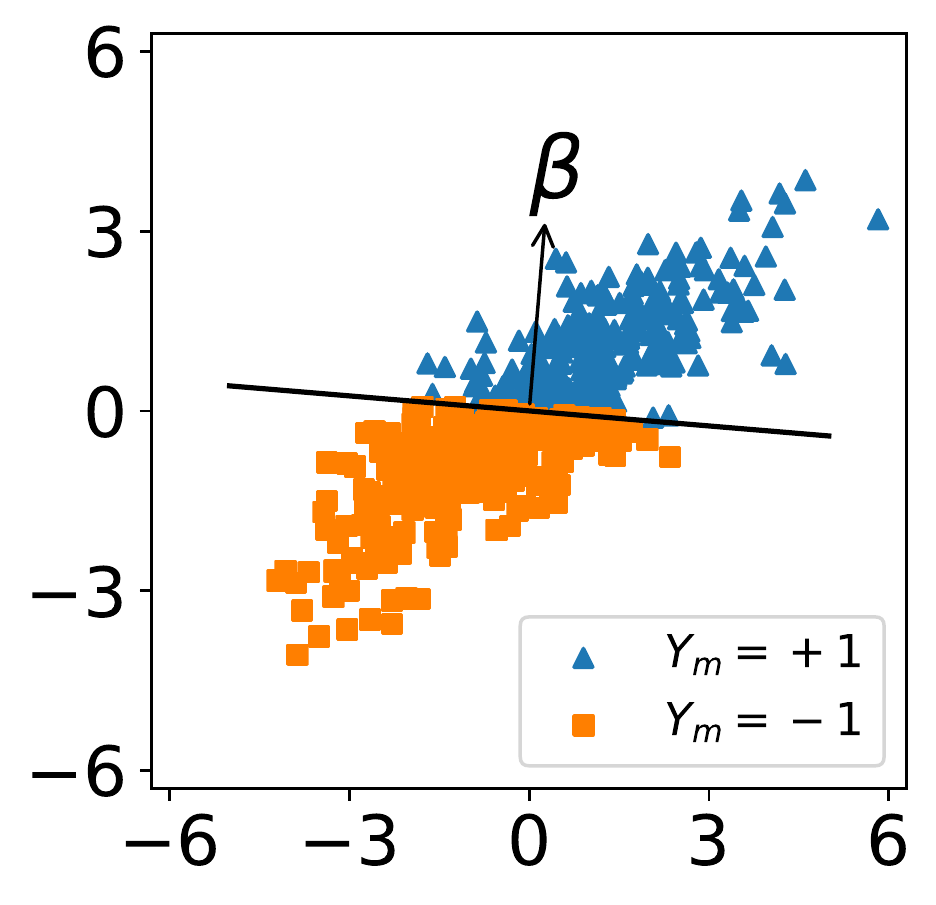}}%
\subcaptionbox{\label{fig:data}}{\includegraphics[width=0.25\textwidth]{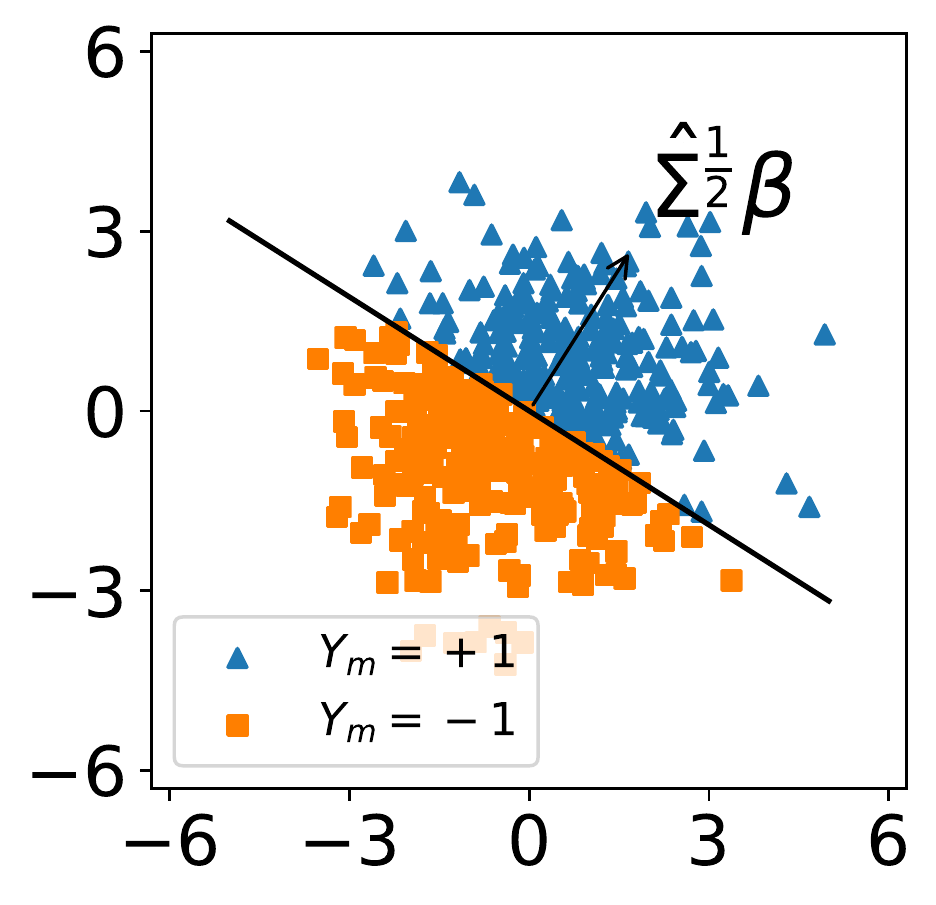}}%
\subcaptionbox{\label{fig:data_mirror}}{\includegraphics[width=0.25\textwidth]{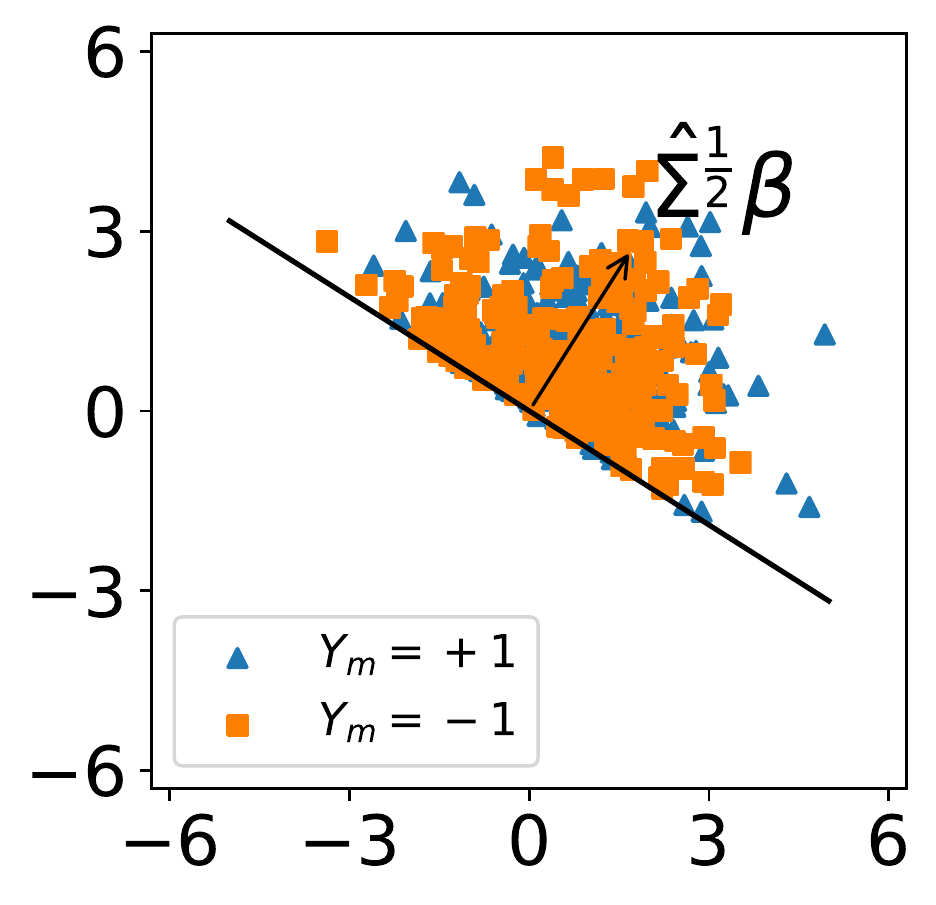}}%
\subcaptionbox{\label{fig:data_mirror_est}}{\includegraphics[width=0.25\textwidth]{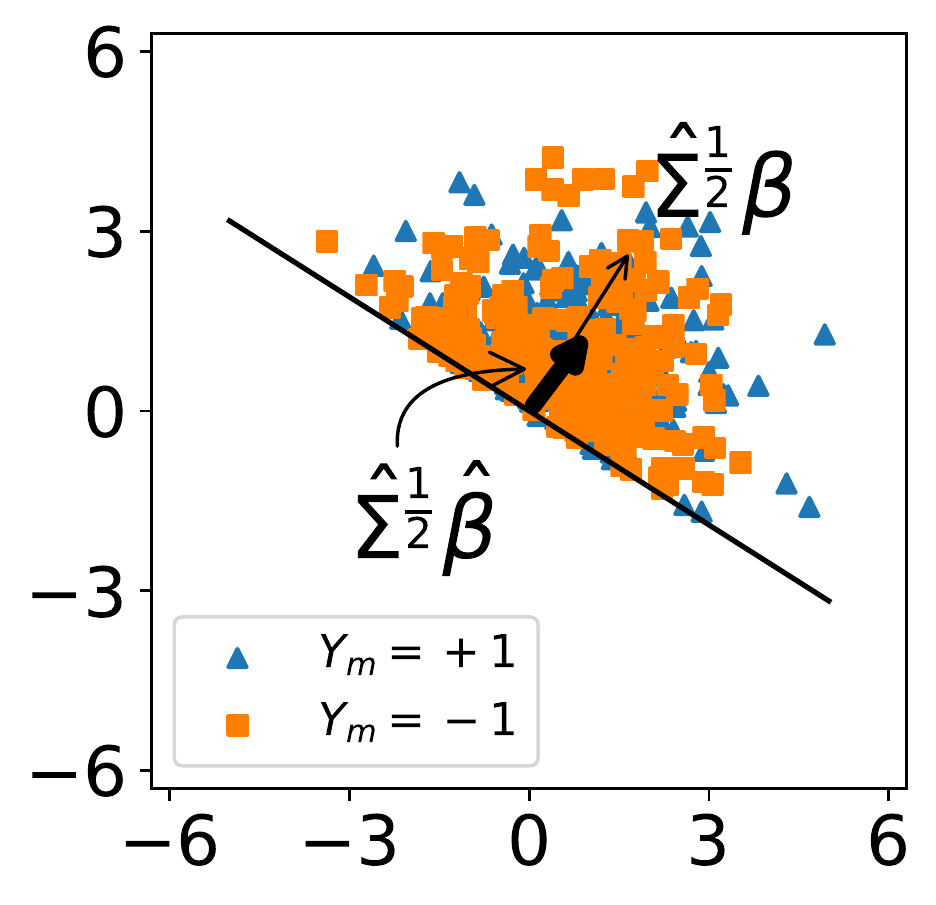}}%
	\caption{Intuition behind the estimator in Eq.~\eqref{eq:estimator}. 
	We consider a dataset of i.i.d. Gaussian samples $\{\bm X_i\}_{i=1}^{2N}$.
	Differences $\bm{X}_{I_m}-\bm{X}_{J_m}$ are shown in Fig.~\ref{fig:data_unwhitten}, along with ${\bm\beta}$ and the corresponding separating hyperplane. Colors indicate labels $Y_m\in\{-1,+1\}$. 
	We can rewrite Eq.~\eqref{eq:estimator} as $\hat{\bm{\beta}} =\bm{\hat{\Sigma}}^{-\frac{1}{2}} \cdot \frac{1}{M}\sum_{m=1}^M Y_m \hat{\bm{\Sigma}}^{-\frac{1}{2}} (\bm{X}_{I_m}-\bm{X}_{J_m})$. 
	Multiplying vectors $\bm{X}_{I_m}-\bm{X}_{J_m}$ with $\hat{\bm \Sigma}^{-\frac{1}{2}}$ gives the whitened sample pairs in Fig.~\ref{fig:data}; in this coordinate system, the separating hyperplane has normal $\bm{\hat \Sigma}^{\frac{1}{2}}\bm{\beta}$. 
	The resulting whitened pairs are multiplied by the labels $Y_m$ in Fig.~\ref{fig:data_mirror}; this results in a ``mirroring'' over the separating hyperplane defined by $\bm{\hat \Sigma}^{\frac{1}{2}}\bm{\beta}$. 
    Their average (i.e., $\hat{\bm \Sigma}^{\frac{1}{2}} \hat{\bm \beta}$) is approximately co-linear with $\bm{ \hat \Sigma}^{\frac{1}{2}}\bm{\beta}$.
    The final multiplication with $\hat{\bm \Sigma}^{-1/2}$ recovers ${\bm \beta}$ (up to a multiplicative constant).}
\label{fig:intuition}
\end{figure}

\noindent \textbf{Parameter Estimation.}
The learner observes $\mathcal{D}$ and estimates $\bm \beta$ via:
\begin{equation}
\label{eq:estimator}
	\hat{\bm \beta} =\textstyle\frac{1}{M} \sum_{m = 1}^M Y_m {\bm{\hat\Sigma}^{-1}}(\bm X_{I_m} - \bm X_{J_m})\in \mathbb{R}^d,
\end{equation}
where $\bm{\hat\Sigma}$ is an estimator of $\bm{\Sigma}$, computed over the second half of the samples through:
\begin{align}
    \bm{\hat\Sigma} &=\textstyle \frac{1}{N-d-2}\sum_{i=N+1}^{2N}(\bm X_i - \bm{\hat\mu})(\bm X_i - \bm{\hat\mu})^\top, ~ \text{where}~ \bm{\hat\mu} = \frac{1}{N}\sum_{i=N+1}^{2N}\bm X_i. \label{eq:sigma_hat}
\end{align}
Note that $\E[\bm{\hat \Sigma}^{-1}] = \bm \Sigma^{-1}$ (see, e.g., \cite{hartlap2007your}).
We separate the dataset in two halves to ensure the independence of $\bm{\hat{\Sigma}}$ from labels in $\mathcal{D}$. 
Eq.~\eqref{eq:estimator} resembles a two-class linear discriminant analysis (LDA) estimator (see, e.g., \cite{friedman2001elements}) and is indeed unbiased up to a positive multiplicative constant (see Lemma~\ref{lem:unbiased}); this is a consequence of Stein's Lemma \citep{stein1973estimation}, stated formally in Section~\ref{app:technical}.
Fig.~\ref{fig:intuition} provides some intuition as to why this is the case. 
Despite the simplicity of our proposed estimator, characterizing its sampling complexity poses a significant challenge. 
Non-asymptotic bounds establishing consistency typically rely on i.i.d.~assumptions; this is indeed natural to assume for samples $\{\bm{X}_i\}_{i=1}^{2N}.$ 
However, pairwise comparisons introduce correlations in labels $\{\bm{Y}_m\}_{m=1}^M$: this is precisely because samples are re-used in pairs. 
We stress that conditioning on $\{\bm{X}_i\}_{i=1}^{2N}$ \emph{does not} resolve this issue, as labels are still dependent through random variables $({I}_m,{J}_m)$.
\begin{table}[t!]
\caption{Summary of Notation}
\label{tab:notation}
\centering
{\scriptsize
\renewcommand{\arraystretch}{1.1}
\begin{tabular}{c|l|c|l}
\hline
$N$ & number of samples & $\bm{X}_i$ & Gaussian feature vector \\
$M$ & number of comparisons & $d$ & dimensionality of a feature vector \\
$\norm{\cdot}$ & $\ell_2$ (spectral) norm of vectors (matrices) & $i,n$ & sample index in $[N]$\\
$Y_m$ & comparison label & $m$ & comparison index in $[M]$\\
$I_m, J_m$ & uniform random variables in $[N]$ & $\mathcal{D}$ & comparison dataset \\
$[N]$ & set of integers from $1$ to $N$ & $\bm{\beta}$ & parameter vector/model in $\mathbb{R}^d$ \\
$c_i$ & constants & \\
\hline
\end{tabular}
}
\vskip -0.1in
\end{table}

\section{Technical Preliminary}\label{app:technical}
\noindent In this section, we review some known results.
The first is a variant of Stein's lemma from \citet{liu1994siegel}; we use this to show that our estimator is unbiased up to a constant. 
\begin{lemma}
[Stein's Lemma \citep{stein1973estimation,liu1994siegel}]\label{lemma:stein} Let $\bm{X} \in \R^d$, $\bm{X}' \in \R^{d'}$ be jointly Gaussian random vectors. Let the function $h:\R^{d'} \rightarrow \R$ be differentiable almost everywhere and satisfy $\E\left[| \partial h\left(\bm{X}'\right)/\partial X_i|\right] < \infty$, $i \in [d']$, then $\mathrm{Cov}\left(\bm{X}, h(\bm{X}')\right) = \mathrm{Cov}\left(\bm{X}, \bm{X}'\right)\E\left[ \nabla h\left( \bm{X}' \right) \right]$.
\end{lemma}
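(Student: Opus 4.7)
The plan is to reduce the statement to the classical one-dimensional Stein identity and then lift it to the multivariate case via the conditional Gaussian decomposition. First I would note that covariances are translation invariant, so without loss of generality I can assume $\E[\bm{X}]=\bm{0}$ and $\E[\bm{X}']=\bm{0}$. Under joint Gaussianity, I can then write
\begin{equation*}
\bm{X} = \bm{A}\bm{X}' + \bm{Z}, \qquad \bm{A} = \mathrm{Cov}(\bm{X},\bm{X}')\,\mathrm{Cov}(\bm{X}')^{-1},
\end{equation*}
where $\bm{Z}$ is a Gaussian vector independent of $\bm{X}'$ and mean zero (handling a singular $\mathrm{Cov}(\bm{X}')$ by a standard pseudoinverse/limiting argument at the end). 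With this decomposition,
\begin{equation*}
\mathrm{Cov}(\bm{X}, h(\bm{X}')) = \E[\bm{X}\,h(\bm{X}')] = \bm{A}\,\E[\bm{X}'\,h(\bm{X}')] + \E[\bm{Z}]\E[h(\bm{X}')] = \bm{A}\,\E[\bm{X}'\,h(\bm{X}')],
\end{equation*}
which reduces the problem to the univariate-style identity $\E[\bm{X}'\,h(\bm{X}')] = \mathrm{Cov}(\bm{X}')\,\E[\nabla h(\bm{X}')]$.

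To prove this residual identity, I would use the fact that the mean-zero Gaussian density $\phi_{\bm{\Sigma}'}$ of $\bm{X}'$ satisfies $\nabla \phi_{\bm{\Sigma}'}(\bm{x}') = -\mathrm{Cov}(\bm{X}')^{-1}\bm{x}'\,\phi_{\bm{\Sigma}'}(\bm{x}')$. Multiplying both sides by $h$ and integrating gives
\begin{equation*}
\int \bm{x}'\,h(\bm{x}')\,\phi_{\bm{\Sigma}'}(\bm{x}')\,d\bm{x}' = -\mathrm{Cov}(\bm{X}')\int h(\bm{x}')\,\nabla \phi_{\bm{\Sigma}'}(\bm{x}')\,d\bm{x}',
\end{equation*}
and then I would apply integration by parts coordinate-wise. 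The boundary terms vanish because Gaussian densities decay exponentially and the integrability hypothesis $\E[|\partial h(\bm{X}')/\partial X_i|]<\infty$ justifies dropping them (for differentiable-almost-everywhere $h$, one first truncates to a ball, applies the divergence theorem, and then lets the ball radius tend to infinity using dominated convergence). This yields $\E[\bm{X}'\,h(\bm{X}')] = \mathrm{Cov}(\bm{X}')\,\E[\nabla h(\bm{X}')]$, and combining with the first display gives the claim after substituting $\bm{A}\,\mathrm{Cov}(\bm{X}') = \mathrm{Cov}(\bm{X},\bm{X}')$.

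The main obstacle is the rigorous justification of the integration-by-parts step under the relatively weak hypothesis that $h$ is only differentiable almost everywhere, together with handling a possibly singular $\mathrm{Cov}(\bm{X}')$. A clean way around both issues is to first mollify $h$ with a Gaussian kernel so it becomes smooth, prove the identity for the smooth approximation where all manipulations are unambiguous, and then pass to the limit using the integrability of the partial derivatives to invoke dominated convergence; the singular covariance case is handled by adding a vanishing multiple of the identity and taking the limit. Everything else is mechanical linear algebra, so I expect this analytic step to be the only delicate part of the argument.
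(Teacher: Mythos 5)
The paper does not prove this lemma at all: it is quoted verbatim as a known result and attributed to \citet{stein1973estimation} and \citet{liu1994siegel}, so there is no in-paper argument to compare against. Your proposal is the standard textbook derivation and is correct in substance: the reduction via the conditional Gaussian decomposition $\bm{X}=\bm{A}\bm{X}'+\bm{Z}$ with $\bm{Z}\independent\bm{X}'$ cleanly isolates the multivariate structure, and the residual identity $\E[\bm{X}'h(\bm{X}')]=\mathrm{Cov}(\bm{X}')\E[\nabla h(\bm{X}')]$ follows from the Gaussian score relation $\nabla\phi_{\bm\Sigma'}=-\bm\Sigma'^{-1}\bm{x}'\phi_{\bm\Sigma'}$ plus integration by parts, exactly as you describe. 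One caveat worth flagging: the hypothesis ``differentiable almost everywhere with integrable partials'' is, strictly speaking, too weak to license integration by parts --- a function with a jump (or a Cantor-type singular part) is differentiable a.e.\ with integrable derivative, yet the identity fails, and your mollification step would not rescue it because the derivatives of the mollified functions need not converge to the a.e.\ derivative in $L^1$ (they pick up the singular part). The correct hypothesis is absolute continuity along coordinate lines (as in Stein's original statement), under which your truncate-then-dominated-convergence argument goes through. This gap is inherited from the lemma as stated in the paper rather than introduced by you, and it is immaterial for the paper's application, where $h(\bm{x}')=2\fn(\bm\beta^\top\bm{x}')-1$ with $\fn$ continuously differentiable; but a careful write-up should either strengthen the hypothesis or note that the a.e.\ derivative is assumed to coincide with the distributional one.
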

\noindent The second lemma we utilize bounds the tail of the norm of standard Gaussian vectors.
\begin{lemma}[Centralized Chi-Squared Tail Bound \citep{dasgupta2003elementary}]\label{lem:chi_tail}
Let $F_X(x;k)$ be the CDF of centralized chi-square distribution with $k$ degrees of freedom.
Then, $1 - F_X(zk; k) \leq (z e^{1 - z})^{k/2}$ for $z > 1$.
\end{lemma}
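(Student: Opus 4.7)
The plan is to establish this tail bound via the standard Chernoff / moment generating function argument. Let $X \sim \chi^2_k$ with cumulative distribution function $F_X(\cdot;k)$. The goal is to upper bound $\Pr[X \geq zk] = 1 - F_X(zk;k)$ for $z>1$.

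First I would apply Markov's inequality to the exponential moment: for every $t \in (0, 1/2)$,
\begin{equation*}
\Pr[X \geq zk] = \Pr[e^{tX} \geq e^{tzk}] \leq e^{-tzk}\, \E[e^{tX}].
\end{equation*}
Next, I would invoke the well-known moment generating function of the chi-squared distribution, namely $\E[e^{tX}] = (1-2t)^{-k/2}$, valid for $t < 1/2$. This gives
\begin{equation*}
\Pr[X \geq zk] \leq e^{-tzk} (1-2t)^{-k/2}.
\end{equation*}

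Then I would optimize the right-hand side over $t \in (0, 1/2)$. Differentiating $g(t) = -tzk - (k/2)\log(1-2t)$ and setting $g'(t)=0$ yields $-zk + k/(1-2t) = 0$, so the optimizer is $t^{\ast} = (z-1)/(2z)$. Because $z > 1$, we have $t^{\ast} \in (0, 1/2)$, so this choice is admissible; this is precisely where the hypothesis $z>1$ is used. Substituting back, $1 - 2t^{\ast} = 1/z$, so $(1-2t^{\ast})^{-k/2} = z^{k/2}$ and $-t^{\ast}zk = -k(z-1)/2$. Combining,
\begin{equation*}
\Pr[X \geq zk] \leq z^{k/2}\, e^{-k(z-1)/2} = \bigl(z\, e^{1-z}\bigr)^{k/2},
\end{equation*}
which is the claimed inequality.

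The argument is essentially a textbook calculation, so there is no deep obstacle; the only subtle step is verifying that the unconstrained optimizer $t^{\ast}$ lies inside the interval $(0,1/2)$ on which the MGF is finite, which exactly matches the stated regime $z>1$. If one wished to avoid invoking the chi-squared MGF directly, an alternative is to write $X = \sum_{i=1}^k Z_i^2$ with $Z_i \sim \mathcal{N}(0,1)$ i.i.d.\ and apply the Chernoff bound coordinatewise using $\E[e^{tZ_i^2}] = (1-2t)^{-1/2}$, arriving at the same estimate.
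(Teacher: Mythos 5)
Your proof is correct: the paper does not prove this lemma itself but cites it from \citet{dasgupta2003elementary}, and your Chernoff/MGF argument with the optimizer $t^{\ast}=(z-1)/(2z)$ is precisely the calculation in that reference. The verification that $t^{\ast}\in(0,1/2)$ for $z>1$ is the only delicate point, and you handle it correctly.
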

\noindent A consequence of the way we select random pairs is that the joint distribution of the number of times each sample is selected is multinomial. 
The next inequality provides a bound for such variables:
\begin{lemma}[Bretagnolle-Huber-Carolle Inequality \citep{wellner2013weak}]
\label{lemma:huber_carol}
Let $\{M_i\}_{i=1}^N$ be multinomially distributed r.v.s with parameters $M$ , $\{p_i\}_{i=1}^N$. Then $\Pr\left( \sum_{i = 1}^N \left| \frac{M_i}{M} - p_i \right| \geq \epsilon \right) \leq 2^N e^{-\frac{\epsilon^2M}{2} }.$
\end{lemma}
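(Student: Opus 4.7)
The plan is to reduce the multinomial $\ell_1$ deviation to a supremum of \emph{binomial} deviation events indexed by subsets of $[N]$, and then close with a union bound. The key tool is the standard identity between $\ell_1$ distance and total variation for discrete measures: writing $\hat p_i := M_i/M$, we have
$$\tfrac{1}{2}\sum_{i=1}^N \left|\hat p_i - p_i\right| \;=\; \sup_{A\subseteq [N]} \left|\tfrac{S_A}{M}-p_A\right|,$$
where $S_A := \sum_{i\in A} M_i$ and $p_A := \sum_{i\in A} p_i$. Hence the event $\{\sum_i |M_i/M - p_i| \geq \epsilon\}$ coincides with the event that some $A\subseteq [N]$ satisfies $|S_A/M - p_A| \geq \epsilon/2$.

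Next I would exploit two structural facts. First, partitioning multinomial trials into ``$A$'' versus ``$A^c$'' shows that $S_A \sim \mathrm{Bin}(M, p_A)$ for each fixed $A$, so classical binomial concentration applies. Second, by complementation $S_A + S_{A^c} = M$ and $p_A + p_{A^c} = 1$, so $S_A/M - p_A = -(S_{A^c}/M - p_{A^c})$; consequently
$$\sup_{A\subseteq[N]} \left|\tfrac{S_A}{M}-p_A\right| \;=\; \sup_{A\subseteq [N]} \left(\tfrac{S_A}{M}-p_A\right),$$
so a \emph{one-sided} deviation bound already captures the absolute value. For each fixed $A$, the one-sided Hoeffding inequality for a binomial yields $\Pr(S_A/M - p_A \geq \epsilon/2) \leq e^{-M\epsilon^2/2}$, and a union bound over the $2^N$ subsets of $[N]$ gives the claimed $2^N e^{-M\epsilon^2/2}$.

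The main obstacle here is purely constant-tracking: the total-variation/$\ell_1$ identity is what converts a threshold of $\epsilon$ on the $\ell_1$ side into a threshold of $\epsilon/2$ per subset (producing the exponent $M\epsilon^2/2$ via Hoeffding), and the complementation symmetry is what saves a factor of $2$ in the prefactor, yielding $2^N$ rather than $2^{N+1}$. The union bound is loose in general—many of the $2^N$ subset events are highly correlated, and complementary pairs are redundant so one could in principle write $2^{N-1}$—but this is immaterial for the stated form. No independence beyond the multinomial structure is needed, which is why this lemma can later be applied to the pair counts $\{M_i\}$ in $\mathcal{D}$ despite the sample-reuse correlations that drive the rest of the paper's analysis.
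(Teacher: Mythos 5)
Your proof is correct. Note that the paper does not actually prove this lemma: it is imported by citation from \citep{wellner2013weak} as a known result in the technical preliminaries, so there is no in-paper argument to compare against. Your derivation is the standard one for this inequality: reduce the $\ell_1$ deviation to $2\sup_{A\subseteq[N]}\left(\frac{S_A}{M}-p_A\right)$ via the total-variation identity, observe that aggregating multinomial cells gives $S_A\sim\mathrm{Bin}(M,p_A)$ for each fixed $A$, apply the one-sided Hoeffding bound at threshold $\epsilon/2$ to obtain $e^{-M\epsilon^2/2}$ per subset, and union bound over the $2^N$ subsets. All steps and constants check out.

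One small quibble with a side remark: in the one-sided formulation you adopt, the events indexed by $A$ and $A^c$ are \emph{not} redundant---they correspond to the upper and lower tails of $S_A/M-p_A$ respectively, which is exactly why the complementation symmetry lets you drop the absolute value in the first place---so the prefactor cannot be tightened to $2^{N-1}$ by merging complementary pairs (at best one discards the two trivial sets $\emptyset$ and $[N]$). This is immaterial to the stated bound, which your argument establishes as written.
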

\noindent We also state the following classic inequality:
\begin{lemma}[Hoeffding's Inequality \citep{hoeffding1994probability}]
\label{lem:hoeffding}
Let $X = \frac{1}{N}\sum^N_{i=1}X_i$, where $a_i \leq X_i \leq b_i$ and $X_i$ are independent, and $\mu = \E[X]$. Then $\Pr(|X \!-\! \mu|\! \geq\! \epsilon) \leq 2 e^{-2N^2\epsilon^2/\sum_{i=1}^N(b_i - a_i)}$.
\end{lemma}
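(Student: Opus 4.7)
The plan is to prove this via the standard Chernoff-bound approach combined with a moment generating function (MGF) estimate for bounded random variables (Hoeffding's lemma). First, for any $s>0$, Markov's inequality applied to $e^{s(X-\mu)}$ gives $\Pr(X-\mu\geq\epsilon)\leq e^{-s\epsilon}\,\E[e^{s(X-\mu)}]$. Using independence of the $X_i$, the MGF factorizes as $\E[e^{s(X-\mu)}]=\prod_{i=1}^{N}\E[e^{(s/N)(X_i-\E[X_i])}]$, which reduces the problem to controlling a single bounded, centered MGF.

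The key auxiliary step I would carry out is Hoeffding's lemma: if $Z$ is zero-mean and $Z\in[a,b]$, then $\E[e^{sZ}]\leq \exp(s^2(b-a)^2/8)$. I would prove this by the convexity bound $e^{sz}\leq \frac{b-z}{b-a}e^{sa}+\frac{z-a}{b-a}e^{sb}$ valid for $z\in[a,b]$; taking expectations, setting $p=-a/(b-a)$ and $u=s(b-a)$, and letting $\varphi(u)=-pu+\log(1-p+pe^{u})$, a direct computation shows $\varphi(0)=\varphi'(0)=0$ and $\varphi''(u)\leq 1/4$ uniformly in $u$, so Taylor's theorem with Lagrange remainder gives $\varphi(u)\leq u^{2}/8$, which is the claimed bound.

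Applying Hoeffding's lemma to each $X_i-\E[X_i]$, which lies in an interval of length $b_i-a_i$, with parameter $s/N$, yields $\E[e^{s(X-\mu)}]\leq \exp\!\bigl(s^{2}\sum_{i}(b_i-a_i)^{2}/(8N^{2})\bigr)$. Optimizing the resulting Chernoff bound $\exp(-s\epsilon+s^{2}\sum_{i}(b_i-a_i)^{2}/(8N^{2}))$ over $s>0$ by choosing $s=4N^{2}\epsilon/\sum_{i}(b_i-a_i)^{2}$ gives the one-sided tail bound $\Pr(X-\mu\geq\epsilon)\leq \exp\!\bigl(-2N^{2}\epsilon^{2}/\sum_{i}(b_i-a_i)^{2}\bigr)$. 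Repeating the argument for $-X$ and taking a union bound doubles the constant in front, producing the two-sided bound in the statement (with the denominator naturally coming out as $\sum_{i}(b_i-a_i)^{2}$, which matches the classical form of Hoeffding's inequality).

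The only nontrivial step is the MGF estimate in Hoeffding's lemma; the rest is essentially a mechanical exponential-Markov-plus-optimization argument. The main subtlety there is obtaining the sharp constant $1/8$, which is what propagates through to the sharp constant $2$ inside the exponent of the final bound. Once the sub-Gaussian-type MGF bound is in place, the Chernoff optimization and the union bound over the two tails are immediate.
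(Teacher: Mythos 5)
The paper does not prove this statement: it is quoted as a known preliminary (Hoeffding's inequality) with a citation to the original source, so there is no in-paper proof to compare against. Your argument --- exponential Markov, factorization of the MGF by independence, Hoeffding's lemma via the convexity bound $e^{sz}\leq\frac{b-z}{b-a}e^{sa}+\frac{z-a}{b-a}e^{sb}$ and the $\varphi''\leq 1/4$ computation, then optimizing $s$ and a union bound over the two tails --- is the standard and correct proof of the classical result. One point worth flagging: your derivation (correctly) yields the denominator $\sum_{i=1}^N(b_i-a_i)^2$, whereas the lemma as printed in the paper writes $\sum_{i=1}^N(b_i-a_i)$ without the square; this is evidently a typo in the paper's statement, and the squared form is the one that should be (and implicitly is) used downstream, e.g.\ where the $Y_m\in[-1,1]$ terms give interval lengths $2$ and hence $(b_i-a_i)^2=4$ in the exponent.
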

\noindent Recall that a random variable $X\in\mathbb{R}$ is sub-gaussian if there exists a $c>0$ for all $t\geq0$ s.t. $\Pr(|X| > t) \leq 2 \exp(-t^2/c)$. Then, we define the sub-gaussian norm of $X$, denoted by $\norm{X}_{\psi_2}$ as $\norm{X}_{\psi_2} = \inf\left\{ t > 0: \E\left[ e^{X^2/t^2} \right] \leq 2 \right\}$.
Moreover, a random vector $\bm X\in \mathbb{R}^d$ is called sub-gaussian if one dimensional marginals $\bm v^\top \bm X$ are sub-gaussian for all $ \bm v \in S^{d-1}$, where $S^{d-1} = \{\bm x \in \R^d: \norm {\bm x} = 1 \}$. 
The sub-gaussian norm of $\bm X$ is then defined as $\norm{\bm X}_{\psi_2} = \sup_{\bm v \in S^{d-1}}\norm{\bm v^\top \bm X}_{\psi_2}$.
\highlight{
The next lemma provides lower and upper bounds for the singular values of random design matrices.
}
\begin{lemma}
[Theorem 5.39 of \citet{vershynin2012}]
\label{th:singvalbound}
Let $\bm A \in \R^{N\times d}$ be a matrix whose rows $\bm A_i$ are independent sub-gaussian isotropic random vectors. Then for every $t\geq0$, with probability at least $1-2e^{-ct^2}$ one has $\sqrt{N} - C\sqrt d - t \leq \lambda_{\min}[\bm A] \leq \lambda_{\max}[\bm A] \leq \sqrt{N} + C\sqrt d + t$ where $c, C > 0$ depend only on the maximum sub-gaussian norm $\max_i \norm{\bm A_i}_{\psi_2}$ of the rows.
\end{lemma}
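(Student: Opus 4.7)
The plan is to adapt the classical $\epsilon$-net plus Bernstein argument for the extreme singular values of a random matrix with independent rows. Recall that $\lambda_{\max}(\bm A) = \sup_{\bm v \in S^{d-1}} \|\bm A \bm v\|$ and $\lambda_{\min}(\bm A) = \inf_{\bm v \in S^{d-1}} \|\bm A \bm v\|$, so the statement reduces to showing that $\|\bm A \bm v\|$ lies uniformly close to $\sqrt{N}$ over all unit vectors $\bm v$.

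First, I would obtain a pointwise concentration inequality. For a fixed $\bm v \in S^{d-1}$, write $\|\bm A \bm v\|^2 = \sum_{i=1}^N (\bm A_i^\top \bm v)^2$. Isotropy of the $\bm A_i$ gives $\E[(\bm A_i^\top \bm v)^2] = 1$, and the sub-gaussian assumption makes each scalar $\bm A_i^\top \bm v$ sub-gaussian with $\psi_2$-norm bounded by $K := \max_i \|\bm A_i\|_{\psi_2}$; hence $(\bm A_i^\top \bm v)^2 - 1$ is a centred sub-exponential random variable with $\psi_1$-norm of order $K^2$. Applying Bernstein's inequality for independent sums of centred sub-exponentials yields, for some $c_1 > 0$ depending only on $K$,
\[
\Pr\!\left(\left|\tfrac{1}{N}\|\bm A \bm v\|^2 - 1\right| \geq \max(u, u^2)\right) \leq 2\exp\!\left(-c_1 N \min(u, u^2)\right),
\]
valid for every $u \geq 0$.

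Second, I would upgrade this to a uniform bound over the sphere via a covering argument. Choose a $1/4$-net $\mathcal N \subset S^{d-1}$ of cardinality $|\mathcal N| \leq 9^d$ (standard volumetric bound). Taking a union bound of the pointwise estimate over $\bm v \in \mathcal N$ with the choice $u = C\sqrt{d/N} + t/\sqrt{N}$ makes the overall failure probability at most $2\cdot 9^d \exp(-c_1(C\sqrt d + t)^2)$; choosing $C$ large enough (depending only on $K$) so that $c_1 C^2 > \log 9$ absorbs the entropy cost $9^d$ and reduces the bound to $2\exp(-c t^2)$ for a suitable $c > 0$. A short approximation lemma then transfers the bound from $\mathcal N$ to all of $S^{d-1}$: for any $\bm v$ there exists $\bm v_0 \in \mathcal N$ with $\|\bm v-\bm v_0\|\leq 1/4$, and writing $\|\bm A\bm v\| \leq \|\bm A\bm v_0\| + \|\bm A\|\cdot \|\bm v - \bm v_0\|$ yields $\|\bm A\| \leq (4/3)\sup_{\bm v_0\in \mathcal N}\|\bm A\bm v_0\|$ (with a symmetric argument for the infimum). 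Combined with the elementary inequality $\bigl|\|\bm A \bm v\|-\sqrt N\bigr| \leq \tfrac{1}{\sqrt N}\bigl|\|\bm A \bm v\|^2 - N\bigr|$, this produces the claimed sandwich $\sqrt{N} - C\sqrt{d} - t \leq \lambda_{\min}(\bm A) \leq \lambda_{\max}(\bm A) \leq \sqrt{N} + C\sqrt{d} + t$.

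The main obstacle is calibrating the constants so that the $9^d$ entropy cost of the net is absorbed by the Bernstein exponent while the resulting tail $\exp(-c t^2)$ remains dimension-free. It is precisely this trade-off between covering number and deviation rate that forces the centre of the deviation to shift by $C\sqrt d$ rather than some $d$-independent quantity, which is why the statement carries its characteristic $C\sqrt d$ term.
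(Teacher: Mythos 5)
The paper does not actually prove this statement: it is quoted verbatim as Theorem 5.39 of \citet{vershynin2012} and used as a black box (e.g., in the proof of Lemma~\ref{lem:precision}). Your proposal reconstructs precisely the proof given in that reference --- pointwise concentration via Bernstein's inequality for the centred sub-exponential variables $(\bm A_i^\top\bm v)^2-1$, a union bound over a $1/4$-net of $S^{d-1}$ whose $9^d$ entropy cost is absorbed by choosing $C$ large, and an approximation lemma to pass from the net to the whole sphere --- so the approach is correct and is essentially the same as the source's.

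Two calibration details are slightly off, both arising only in the regime $u\geq 1$. First, with the deviation threshold $\max(u,u^2)$, Bernstein's inequality yields the exponent $-c_1N\min\{\max(u,u^2)^2,\max(u,u^2)\}=-c_1Nu^2$ in \emph{both} regimes, not $-c_1N\min(u,u^2)$; the weaker form you wrote, with exponent $-c_1Nu$ for $u\geq1$, does not absorb the $9^d$ factor when $d\gg N$ or $t\gg\sqrt N$, which matters for the upper bound on $\lambda_{\max}$. Second, the inequality $\bigl|\|\bm A\bm v\|-\sqrt N\bigr|\leq N^{-1/2}\bigl|\|\bm A\bm v\|^2-N\bigr|$ converts a deviation of $\max(u,u^2)$ on the squared quantity into a deviation of $\max(u,u^2)$, not $u$, on the unsquared one, which is lossy when $u>1$; the correct tool is the elementary implication $|z-1|\geq\delta\Rightarrow|z^2-1|\geq\max(\delta,\delta^2)$ for $z\geq0$ (Lemma 5.36 of the same reference), applied in the contrapositive. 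Both fixes are one-line substitutions and leave the structure of your argument intact.
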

\highlight{
We use Lemma~\ref{th:singvalbound} to bound the eigenvalues of the feature covariance matrix.
Lastly, the next lemma is used for bounding the norm of sub-gaussian random vectors.
}
\begin{lemma}[Theorem 1 of \citet{hsu2012tail}]
\label{th:subgnorm}
Let $\bm A \in \R^{m\times n}$ be a matrix, and let $\Sigma \equiv \bm A ^\top \bm A$. Suppose that $\bm x \in \R^d$ is a sub-gaussian random vector with mean $\bm \mu \in \R^d$ and $\sigma = \norm{\bm x}_{\psi_2}$. For all $t > 0$, $\Pr( \norm{\bm A \bm x}^2 > \sigma^2( \Tr(\Sigma)+2\sqrt{\Tr(\Sigma^2)t} + 2\norm{\Sigma}t) + \norm{\bm A \bm \mu}^2( 1 + 4( \frac{\norm{\Sigma}^2}{\Tr(\Sigma^2)}t)^{1/2} + \frac{4\norm{\Sigma}^2}{\Tr(\Sigma^2)}t)^{1/2}) \leq e^{-t}$.
\end{lemma}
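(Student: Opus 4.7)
The plan is to obtain the stated tail bound on $\norm{\bm A \bm x}^2$ by a Chernoff-type argument, since high-probability bounds on quadratic forms in sub-gaussian vectors follow by controlling the moment generating function (MGF) and then optimizing the Markov parameter. Concretely, for any $\lambda>0$ in a suitable range, Markov's inequality gives $\Pr(\norm{\bm A \bm x}^2 > u) \leq e^{-\lambda u}\, \E[\exp(\lambda \norm{\bm A \bm x}^2)]$, and the problem reduces to producing a clean upper bound on this MGF that yields, after solving for $u = u(t)$, the stated combination of $\Tr(\Sigma)$, $\sqrt{\Tr(\Sigma^2) t}$, and $\norm{\Sigma} t$ terms, together with the $\norm{\bm A \bm\mu}^2$ correction.

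First I would center by writing $\bm x = \bm\mu + \bm z$ with $\bm z \equiv \bm x - \bm\mu$ mean-zero and sub-gaussian (with $\psi_2$-norm at most $\sigma$ up to a universal constant), and expand
\begin{equation*}
\norm{\bm A \bm x}^2 = \norm{\bm A \bm z}^2 + 2\langle \bm A \bm\mu, \bm A \bm z\rangle + \norm{\bm A \bm\mu}^2.
\end{equation*}
The last summand is deterministic. The cross term is a scalar linear functional $\langle \bm v, \bm z\rangle$ with $\bm v = \bm A^\top \bm A \bm\mu$, and by definition of the sub-gaussian norm it is one-dimensional sub-gaussian with variance proxy bounded by $\sigma^2 \norm{\bm v}^2 \leq \sigma^2 \norm{\Sigma}\, \norm{\bm A \bm\mu}^2$; its MGF is therefore controlled by $\exp(c\lambda^2 \sigma^2 \norm{\bm A \bm\mu}^2 \norm{\Sigma})$, which produces exactly the $\norm{\bm A \bm\mu}^2$-dependent additive correction in the claimed tail after absorption into the Chernoff exponent.

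The main term $\norm{\bm A \bm z}^2 = \bm z^\top \Sigma \bm z$ is handled by diagonalizing $\Sigma = \bm U \bm D \bm U^\top$ with $\bm D = \mathrm{diag}(d_1,\dots,d_n)$ and setting $\bm w = \bm U^\top \bm z$, so that $\norm{\bm A \bm z}^2 = \sum_i d_i w_i^2$. Each $w_i$ is one-dimensional sub-gaussian with $\norm{w_i}_{\psi_2} \leq \sigma$, so the standard scalar MGF estimate gives $\E[\exp(\lambda d_i w_i^2)] \leq (1 - c\lambda \sigma^2 d_i)^{-1/2}$ for $\lambda$ in a suitable range. The coordinates $w_i$ are generally dependent, so I would iterate a conditional sub-gaussian argument (in the spirit of Hsu, Kakade, and Zhang) to replace the joint MGF by a product of marginal MGFs, then take logarithms to arrive at a bound of the form $\log \E[e^{\lambda \norm{\bm A \bm z}^2}] \leq \lambda \sigma^2 \Tr(\Sigma) + c\lambda^2 \sigma^4 \Tr(\Sigma^2)/(1-c'\lambda \sigma^2 \norm{\Sigma})$.

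Combining these MGF bounds and choosing $\lambda$ to balance the linear and quadratic cumulant contributions produces the claimed tail after setting the Chernoff exponent equal to $t$ and solving for $u$. The hard part will be the dependent coordinates $\{w_i^2\}$: under a Gaussian assumption these are independent and the product MGF bound is immediate, but in the general sub-gaussian case one must condition sequentially and apply a one-sided sub-gaussian tail bound on each $w_i^2$ given $w_1,\dots,w_{i-1}$ in order to recover the dimension-free trace combination $\Tr(\Sigma) + 2\sqrt{\Tr(\Sigma^2)t} + 2\norm{\Sigma}t$ with the right constants. This is precisely the step where Hanson--Wright-type reasoning and the Hsu--Kakade--Zhang MGF trick are essential, and it is the only delicate part of the argument; everything else is routine algebraic manipulation of exponentials and the optimization in $\lambda$.
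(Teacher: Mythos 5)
This lemma is quoted verbatim from \citet{hsu2012tail} and the paper offers no proof of it, so there is no in-paper argument to match your proposal against; the relevant comparison is with the original Hsu--Kakade--Zhang proof. Your overall Chernoff framing and the decomposition $\norm{\bm A\bm x}^2=\norm{\bm A\bm z}^2+2\langle \bm A\bm\mu,\bm A\bm z\rangle+\norm{\bm A\bm\mu}^2$ are reasonable, but the proposal has a genuine gap at exactly the step you flag as delicate. After diagonalizing $\Sigma$ you propose to bound $\E[\exp(\lambda\sum_i d_i w_i^2)]$ by a product of marginal MGFs via ``sequential conditioning'' on $w_1,\dots,w_{i-1}$. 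For a general sub-gaussian vector this does not go through: conditioning on earlier coordinates does not preserve the sub-gaussian parameter of the later ones in any controlled way, and without independence or a negative-association structure the joint MGF of $\sum_i d_i w_i^2$ is not dominated by the product of the marginals. As written, the central inequality $\log\E[e^{\lambda\norm{\bm A\bm z}^2}]\leq \lambda\sigma^2\Tr(\Sigma)+c\lambda^2\sigma^4\Tr(\Sigma^2)/(1-c'\lambda\sigma^2\norm{\Sigma})$ is asserted rather than derived.

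The device that actually closes this gap in \citet{hsu2012tail} is Gaussian decoupling rather than coordinate-wise conditioning: one uses the identity $\exp(\lambda\norm{\bm v}^2)=\E_{\bm g}[\exp(\sqrt{2\lambda}\,\bm g^\top\bm v)]$ for $\bm g\sim\mathcal N(\bm 0,\bm I)$ with $\bm v=\bm A\bm x$, applies Fubini to integrate over $\bm x$ first using the vector sub-gaussian MGF condition $\E[\exp(\bm\alpha^\top(\bm x-\bm\mu))]\leq\exp(\sigma^2\norm{\bm\alpha}^2/2)$, and is left with an exact noncentral Gaussian quadratic-form computation in $\bm g$. This also handles the mean term inside the same computation, which is how the precise $\norm{\bm A\bm\mu}^2$ multiplier and the constants $2\sqrt{\Tr(\Sigma^2)t}+2\norm{\Sigma}t$ emerge; your separate treatment of the cross term $\langle \bm A^\top\bm A\bm\mu,\bm z\rangle$ would yield a bound of the same flavor but not the stated one with these constants. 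To repair the proposal, replace the sequential-conditioning step with this Gaussian comparison argument.
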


\section{Main Results}
\label{sec:main}
We first establish that $\bm{\hat \beta}$ is an unbiased estimator of $\bm \beta$ up to a multiplicative constant.
\begin{lemma}
\label{lem:unbiased}
For $\hat{\bm \beta}$ in Eq.~\eqref{eq:estimator}, $ \E[\hat{\bm{\beta}}]\! =\! c_1\bm{\beta},$ where $c_1\! =\! 4\E\left[ \fn'\left( \bm{\beta}^T(\bm X_{I_m}\! -\! \bm X_{J_m}) \right) \right]\! >\! 0$.
\end{lemma}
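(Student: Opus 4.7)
The plan is to compute $\E[\hat{\bm\beta}]$ by isolating the three sources of randomness---the covariance estimate $\hat{\bm\Sigma}$, the random indices $(I_m,J_m)$, and the label noise in $Y_m$---and closing the calculation with Stein's lemma.

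First I would exploit the fact that $\hat{\bm\Sigma}$, being built from $\bm X_{N+1},\ldots,\bm X_{2N}$, is independent of the comparison data $\{(I_m,J_m,Y_m,\bm X_{I_m},\bm X_{J_m})\}_{m=1}^M$ (which only involves indices in $[N]$). Combining this independence with the identity $\E[\hat{\bm\Sigma}^{-1}]=\bm\Sigma^{-1}$ recalled after Eq.~\eqref{eq:sigma_hat}, and using symmetry across $m\in[M]$, one obtains
\begin{equation*}
\E[\hat{\bm\beta}] \;=\; \bm\Sigma^{-1}\,\E\bigl[Y_1(\bm X_{I_1}-\bm X_{J_1})\bigr].
\end{equation*}
Letting $\bm Z := \bm X_{I_1}-\bm X_{J_1}$, the tower property with Eq.~\eqref{eq:conditional} gives $\E[Y_1\mid \bm X_{I_1},\bm X_{J_1}] = 2\fn(\bm\beta^\top\bm Z)-1$; together with $\E[\bm Z]=\bm 0$ (by the swap symmetry of $I_1,J_1$), this reduces the task to evaluating $\E[\fn(\bm\beta^\top\bm Z)\bm Z]$, since the constant $-1$ contribution washes out.

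Since $\bm Z$ is Gaussian with mean $\bm 0$ and covariance $2\bm\Sigma$ on the event $\{I_1\neq J_1\}$ (which carries the full contribution because $\bm Z=\bm 0$ on the complementary event), I would apply Stein's Lemma~\ref{lemma:stein} with $\bm X=\bm X'=\bm Z$ and $h(\bm z)=\fn(\bm\beta^\top\bm z)$. Since $\nabla h(\bm z) = \fn'(\bm\beta^\top\bm z)\bm\beta$, Stein yields $\E[\fn(\bm\beta^\top\bm Z)\bm Z] = \mathrm{Cov}(\bm Z,\fn(\bm\beta^\top\bm Z)) = 2\bm\Sigma\bm\beta\,\E[\fn'(\bm\beta^\top\bm Z)]$. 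Substituting back and canceling $\bm\Sigma$ produces $\E[\hat{\bm\beta}] = 4\bm\beta\,\E[\fn'(\bm\beta^\top\bm Z)] = c_1\bm\beta$. Positivity of $c_1$ then follows from Eq.~\eqref{eq:properties}: $\fn$ is non-decreasing so $\fn'\geq 0$, and $\fn$ is non-constant (it sweeps from $0$ to $1$), so $\fn'>0$ on a set of positive Lebesgue measure; together with the everywhere-positive Gaussian density of $\bm Z$, this forces $\E[\fn'(\bm\beta^\top\bm Z)]>0$.

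The main (mild) obstacle is that Stein's lemma as stated requires $\bm Z$ itself to be Gaussian, whereas unconditionally $\bm Z$ is a mixture with a point mass at $\bm 0$ coming from the event $\{I_1=J_1\}$. I would handle this by applying Stein conditionally on $\{I_1\neq J_1\}$, since the complementary event contributes nothing to $\E[\fn(\bm\beta^\top\bm Z)\bm Z]$: the integrand vanishes identically there. This is the only place where the Gaussian machinery needs care; everything else is a routine conditional-expectation calculation.
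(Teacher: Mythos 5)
Your proposal is correct and follows essentially the same route as the paper: factor out $\hat{\bm\Sigma}^{-1}$ via independence and $\E[\hat{\bm\Sigma}^{-1}]=\bm\Sigma^{-1}$, reduce $\E[Y_1\bm Z]$ to $\mathrm{Cov}\bigl(\bm Z,\,2\fn(\bm\beta^\top\bm Z)-1\bigr)$ by the tower property, apply Stein's lemma together with $\E[\bm Z\bm Z^\top]=2\bm\Sigma$, and deduce $c_1>0$ from $\fn$ being non-decreasing and non-constant. The one place you go beyond the paper is in flagging the atom of $\bm Z$ at $\bm 0$ on $\{I_1=J_1\}$ (the paper applies Stein to $\bm Z$ as if it were Gaussian); note, however, that conditioning on $\{I_1\neq J_1\}$ actually gives $\E[\fn(\bm\beta^\top\bm Z)\bm Z]=\Pr(I_1\neq J_1)\cdot 2\bm\Sigma\bm\beta\,\E[\fn'(\bm\beta^\top\bm Z)\mid I_1\neq J_1]$, so your final constant (like the paper's) quietly drops the $(N-1)/N$ factor and the conditioning --- an $O(1/N)$ discrepancy in the expression for $c_1$ that is harmless, since only $c_1>0$ is used downstream.
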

The proof can be found in \ref{app:unbiased}.
This result is a consequence of Stein's lemma \citep{stein1973estimation} (see Lemma~\ref{lemma:stein} in Section~\ref{app:technical}).
Learning ${\bm \beta}$ up to a multiplicative constant {suffices}, as only the direction is enough to reveal the separating hyperplane between positive and negative sample pairs.
\highlight{
Constant $c_1$ captures label noise: by \eqref{eq:properties}, $f'$ is non-negative and maximized at zero; for functions $f$ that are ``flatter'' around zero the maximum value of $f'$ and, therefore, $c_1$ is smaller.
Such $f$ also result in noisy labels.
Crucially, although our guarantees depend on $c_1$ (see Theorem.~\ref{th:bound} below), 
}
our estimator does not depend on $c_1$: no knowledge of $c_1$ is required to compute $\hat{\bm \beta}$ via Eq.~\eqref{eq:estimator}.
Theorem~\ref{th:bound} establishes that the parameters $\bm \beta$ are PAC learnable.
\setcounter{exe}{0}
\begin{exe}
\label{th:bound}
For $\epsilon > 0$, sample count $N/\log^2N = \Omega\left(\frac{d}{\epsilon^2\lambda_d}\right)$ and comparison count $M = \Omega\left(\frac{dN\log^3N}{\epsilon^2\lambda_d}\right)$,
\begin{align}
    \label{eq:bound}
    \Pr\left(\norm{\hat{\bm{\beta}} - c_1\bm{\beta}} \geq \epsilon \right) &\leq c_2N\max\left\{\left(\frac{\sqrt{6\log N}}{N}\right)^d, e^{-\frac{N\epsilon^2\lambda_d}{c_3d\log N}} \right\},
\end{align}
where $c_1>0$ is given by Lemma~\ref{lem:unbiased} and $c_2, c_3 > 0$ are absolute constants.
\end{exe}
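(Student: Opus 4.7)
The plan is to decompose the error into manageable pieces, control the covariance factor separately from the label-averaged factor, and handle the dependence across comparisons via a conditioning argument on the features. Let $V \equiv \tfrac{1}{M}\sum_{m=1}^M Y_m(\bm X_{I_m}-\bm X_{J_m})$, so that $\hat{\bm\beta}=\hat{\bm\Sigma}^{-1}V$. Stein's lemma together with Lemma~\ref{lem:unbiased} yields $\E[V]=c_1\bm\Sigma\bm\beta$, which suggests the split
\begin{equation*}
\hat{\bm\beta}-c_1\bm\beta \;=\; \hat{\bm\Sigma}^{-1}\bigl(V-\E[V]\bigr) \;+\; \bigl(\hat{\bm\Sigma}^{-1}-\bm\Sigma^{-1}\bigr)\E[V],
\end{equation*}
which exploits that $\hat{\bm\Sigma}$, being computed on the second-half samples, is independent of $V$. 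The factors $\|\hat{\bm\Sigma}^{-1}\|$ and $\|\hat{\bm\Sigma}^{-1}-\bm\Sigma^{-1}\|$ can be handled by whitening: the rows $\{\bm\Sigma^{-1/2}(\bm X_i-\bm\mu)\}_{i=N+1}^{2N}$ are i.i.d.\ isotropic sub-gaussian, so Lemma~\ref{th:singvalbound} gives, with high probability, $\|\hat{\bm\Sigma}^{-1}\|=O(1/\lambda_d)$ and $\|\hat{\bm\Sigma}^{-1}-\bm\Sigma^{-1}\|=O(\sqrt{d/N}/\lambda_d^2)$. Since $\|\E[V]\|\leq c_1\lambda_1\|\bm\beta\|$ is deterministic, the second summand is small as soon as $N=\Omega(d/\epsilon^2)$, and the core challenge is controlling $\|V-\E V\|$.

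To deal with the correlation between the summands $\bm\xi_m\equiv Y_m(\bm X_{I_m}-\bm X_{J_m})$---which are dependent because different comparisons share samples---I would condition on the feature $\sigma$-field $\mathcal F=\sigma(\bm X_1,\ldots,\bm X_N)$ and split further
\begin{equation*}
V-\E V \;=\; \bigl(V-\E[V\mid\mathcal F]\bigr) \;+\; \bigl(\E[V\mid\mathcal F]-\E V\bigr).
\end{equation*}
The key observation is that, given $\mathcal F$, the triples $(I_m,J_m,Y_m)$ are i.i.d.\ across $m$: each pair is drawn uniformly and independently from $[N]^2$, and $Y_m$ depends only on its own pair and features. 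Hence the $\bm\xi_m$ become conditionally i.i.d.\ bounded vectors on the event $\mathcal G=\{\max_{i\leq N}\|\bm X_i-\bm\mu\|\leq C\sqrt{d\log N}\}$, whose complement is controlled by Lemma~\ref{lem:chi_tail} together with a union bound over $i\in[N]$ (this is precisely what produces the $N(\sqrt{6\log N}/N)^d$ term appearing in Eq.~\eqref{eq:bound}). On $\mathcal G$ each $\bm\xi_m$ has norm $O(\sqrt{d\log N})$, so Lemma~\ref{lem:hoeffding} applied coordinate-wise together with a union bound yields $\Pr(\|V-\E[V\mid\mathcal F]\|\geq \epsilon \mid \mathcal F)\leq 2d\exp(-cM\epsilon^2/(d\log N))$, which is negligible under the hypothesis $M=\Omega(dN\log^3N/(\epsilon^2\lambda_d))$.

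The remaining and most delicate term is $\E[V\mid\mathcal F]-\E V$, a symmetric two-sample U-statistic
\begin{equation*}
\E[V\mid\mathcal F]\;=\;\tfrac{1}{N^2}\sum_{i,j\in[N]} g(\bm X_i,\bm X_j),\qquad g(\bm x,\bm y)\;\equiv\;\bigl(2f(\bm\beta^\top(\bm x-\bm y))-1\bigr)(\bm x-\bm y),
\end{equation*}
whose kernel is \emph{unbounded} in high dimension. I would apply Hoeffding's decomposition $g=\mu+h_1(\bm x)+h_1(\bm y)+h_2(\bm x,\bm y)$, separating the linear projection $\tfrac{2}{N}\sum_i h_1(\bm X_i)$ from a degenerate quadratic remainder $\tfrac{1}{N^2}\sum_{i,j}h_2(\bm X_i,\bm X_j)$. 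Because $|2f-1|\leq 1$, each $h_1(\bm X_i)$ inherits the sub-gaussianity of $\bm X_i$, so Lemma~\ref{th:subgnorm} yields concentration of the linear term at rate $\sqrt{d\log N/N}$; after the multiplication by $\hat{\bm\Sigma}^{-1}$ this contributes the exponential factor $\exp(-N\epsilon^2\lambda_d/(c_3 d\log N))$ in Eq.~\eqref{eq:bound}. The degenerate quadratic piece concentrates at the faster $1/N$ rate (by a standard moment or martingale argument for degenerate U-statistics) and is therefore dominated. Combining the tail bounds for the $\hat{\bm\Sigma}$-factors, for $V-\E[V\mid\mathcal F]$, and for $\E[V\mid\mathcal F]-\E V$ via a union bound then produces the two-regime maximum on the right-hand side of Eq.~\eqref{eq:bound}. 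The hardest step is this U-statistic bound: it is where the pair-sharing dependence most visibly bites, and where the estimates must be simultaneously calibrated against the ambient dimension $d$ and the smallest eigenvalue $\lambda_d$ of $\bm\Sigma$.
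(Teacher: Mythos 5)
Your top-level decomposition is sound, and it departs from the paper's in two genuine ways. First, to restore independence you condition on the feature $\sigma$-field $\mathcal{F}$ and use that the triples $(I_m,J_m,Y_m)$ are conditionally i.i.d.\ across $m$; the paper instead splits $\bm X_{I_m}-\bm X_{J_m}$ into two symmetric halves (Lemma~\ref{lem:2terms}), regroups the sum by the value of $I_m$, and controls the multinomial counts $M_n=\sum_m\1_{I_m=n}$ via the Bretagnolle--Huber--Carol inequality (Lemma~\ref{lemma:huber_carol}). Your conditioning step is correct and arguably cleaner --- it avoids the $2^Ne^{-\epsilon^2M/2\delta_0}$ bookkeeping entirely --- at the price of a coordinate-wise union bound. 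Second, you cast $\E[V\mid\mathcal F]-\E V$ as a two-sample U-statistic and invoke Hoeffding's decomposition, whereas the paper exploits the antisymmetry $2f(-s)-1=-(2f(s)-1)$ to collapse exactly this object to $\frac{2}{N}\sum_n(\bm X_n-\bm\mu)\,\tilde g_n$ with a \emph{scalar} second factor, after which only two elementary facts are needed: $\tilde g_n$ concentrates uniformly around $g_n$ (Lemma~\ref{lem:xnzn}) and $\bm W_n g_n(\bm X_n)$ is sub-gaussian (Lemma~\ref{lem:xngn}).

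The genuine gap is your treatment of the degenerate piece $\frac{1}{N^2}\sum_{i,j}h_2(\bm X_i,\bm X_j)$. You assert it concentrates at rate $1/N$ ``by a standard moment or martingale argument,'' but the kernel is vector-valued and unbounded, none of the available lemmas cover degenerate U-statistics, and the step cannot be waved through by a crude bound: estimating $\bnorm{\frac{1}{N^2}\sum_{i,j}h_2}\le\frac{1}{N}\sum_i\bnorm{\frac{1}{N}\sum_j h_2(\bm X_i,\bm X_j)}$ and applying Hoeffding coordinate-wise to each inner average (bounded by $O(\sqrt{d\log N})$ on $\mathcal G$) yields a deviation of order $d\log N/\sqrt{N}$, which under $N=\Omega(d\log^2 N/\epsilon^2\lambda_d)$ is only $O(\sqrt{d}\,\epsilon\sqrt{\lambda_d})$ --- short of the required $O(\epsilon\lambda_d)$ by at least $\sqrt{d}$. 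So the degeneracy must genuinely be exploited, which needs decoupling/Hanson--Wright-type machinery, or, far more simply, the antisymmetric split above, under which the offending term becomes $\frac{1}{N}\sum_n\bm W_n z_n$ with scalar $z_n$ uniformly small with high probability. A second, quantitative slack: bounding $\bnorm{\hat{\bm\Sigma}^{-1}(V-\E V)}\le\bnorm{\hat{\bm\Sigma}^{-1}}\cdot\bnorm{V-\E V}$ pays a full $1/\lambda_d$ against an unwhitened deviation of scale $\sqrt{\lambda_1}$, producing a dependence like $\lambda_1/\lambda_d^2$ rather than the theorem's $1/\lambda_d$; the paper whitens first and pays only the $\sqrt{\lambda_d}$ appearing in Lemma~\ref{lem:2terms}. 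Both issues are repairable, but as written the proof of the degenerate term is missing and the stated rate in $\lambda_d$ is not reached.
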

Theorem~\ref{th:bound}, which we prove below, allows us to characterize the sample complexity of $\bm{\hat \beta}$ in terms of the ambient dimension $d$, number of samples $N$, and number of comparisons $M$. 
It implies that to attain an accuracy $\epsilon > 0$ with high probability, the estimator requires $\Omega(d\log^2N/\epsilon^2\lambda_d)$ samples; this is of the same order as standard PAC learning guarantees for linear classifiers \citep{ehrenfeucht1989general,balcan2013active, haussler1994predicting} and is also corroborated by our experiments in Section~\ref{sec:experiments}.
Moreover, the number of comparisons required to attain an accuracy $\epsilon > 0$ is $\Omega(dN\log^3N/\epsilon^2\lambda_d)$, i.e. comparisons scale almost linearly with $N$.

\highlight{We emphasize that, to identify the separating plane, it suffices to know $\bm{\beta}$ up to a non-negative multiplicative constant. This motivates the l.h.s.~of Eq.~\eqref{eq:bound} in Theorem~\ref{th:bound}. Nevertheless, the above guarantee should become more stringent for smaller $c_1>0$. Recalling that $c_1$ captures the level of label noise (smaller indicates more noise), the latter's impact on this bound is captured by replacing desired accuracy $\epsilon$ with $\epsilon' = c_1\epsilon$, so that $c_1$ appears as an additional constant in the r.h.s.~of Eq.~\eqref{eq:bound}.}

\section{Proof of Theorem \ref{th:bound}}
\label{sec:proof}
\highlight{
The proof proceeds in the following manner.
We first use a union bound to bound the tail of $\norm{\bm{\hat \beta} - c_1\bm\beta}$ via several constituent terms. 
Contrary to standard concentration proofs, however, sums appearing in these terms involve dependent random variables. 
We nevertheless bound these terms by union bounds, conditioning, and leveraging the boundedness of random variables summed.
From a technical standpoint, we leverage the Bretagnolle-Huber-Carolle inequality (see Lemma~\ref{lemma:huber_carol}), and combine it with classic concentration inequalities (like Hoeffding's inequality, Lemma~\ref{lem:hoeffding}, and Lemma~\ref{lem:eigenineq}, due to \citep{vershynin2012}).
}
We start with a simple bound on $\norm{\hat{\bm{\beta}} - c_1\bm{\beta}}$.
\begin{lemma}
\label{lem:2terms}
The estimator $\hat{\bm\beta}$ given by Eq.~\eqref{eq:estimator} satisfies:
\begin{align*}
    &\Pr\left(\norm{\hat{\bm{\beta}} - c_1\bm{\beta}} > \epsilon\right) \leq 4\Pr\left(\norm{\bm{\hat \Sigma}^{-1} - \bm{\Sigma}^{-1}}\cdot\norm{\E[Y_m (\bm{X}_{I_m} - \bm{\mu})]} > \epsilon/6\right)  \nonumber \\
    &+ 4\Pr\bigg(\bnorm{\frac{1}{M}\!\sum_{m = 1}^M\! Y_m\bm{\Sigma}^{-1/2}(\bm{X}_{I_m} \!\! -\! \bm{\mu}) \!-\! \E[Y_m\bm{\Sigma}^{-1/2}(\bm X_{I_m} \!\!-\! \bm \mu)]} \!>\! \frac{\sqrt{\lambda_{d}}\epsilon}{6}\bigg).
\end{align*}
\end{lemma}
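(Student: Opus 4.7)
The plan is to decompose $\hat{\bm{\beta}} - c_1\bm{\beta}$ algebraically, and then apply triangle inequalities and a union bound. First, combining Lemma~\ref{lem:unbiased} with the fact that $\bm{\hat\Sigma}$ is computed on the second half of the data and is therefore independent of $\mathcal{D}$ (so that $\E[\bm{\hat\Sigma}^{-1}]=\bm{\Sigma}^{-1}$), I obtain $c_1\bm{\beta} = \bm{\Sigma}^{-1}\E[Y_m(\bm{X}_{I_m}-\bm{X}_{J_m})]$. Writing $\bm{\Delta}_M = \frac{1}{M}\sum_m Y_m(\bm{X}_{I_m}-\bm{X}_{J_m})$ and $\bm{\mu}_\Delta = \E[\bm{\Delta}_M]$, and then adding and subtracting $\bm{\hat\Sigma}^{-1}\bm{\mu}_\Delta$ and $\bm{\Sigma}^{-1}\bm{\Delta}_M$, I obtain the three-term decomposition
\begin{equation*}
\hat{\bm{\beta}}-c_1\bm{\beta} \;=\; (\bm{\hat\Sigma}^{-1}-\bm{\Sigma}^{-1})\bm{\mu}_\Delta \;+\; \bm{\Sigma}^{-1}(\bm{\Delta}_M-\bm{\mu}_\Delta) \;+\; (\bm{\hat\Sigma}^{-1}-\bm{\Sigma}^{-1})(\bm{\Delta}_M-\bm{\mu}_\Delta),
\end{equation*}
whose three summands capture, respectively, the covariance-estimation error, the label/pair concentration, and a second-order cross term.

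Next, I take norms term-wise. The symmetry $(I_m,J_m,Y_m)\leftrightarrow(J_m,I_m,-Y_m)$ gives $\bm{\mu}_\Delta = 2\,\E[Y_m(\bm{X}_{I_m}-\bm{\mu})]$, so the norm of the first summand is at most $2\,\|\bm{\hat\Sigma}^{-1}-\bm{\Sigma}^{-1}\|\cdot\|\E[Y_m(\bm{X}_{I_m}-\bm{\mu})]\|$, matching (up to a factor of $2$) the quantity inside the first event. For the second summand I use the elementary bound $\|\bm{\Sigma}^{-1}\bm u\|\le \lambda_d^{-1/2}\,\|\bm{\Sigma}^{-1/2}\bm u\|$; this is precisely what injects the $\sqrt{\lambda_d}$ into the threshold. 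I then split $\bm{\Delta}_M-\bm{\mu}_\Delta = (\bm{S}_V-\bm{\mu}_V)-(\bm{S}_W-\bm{\mu}_W)$ with $\bm{V}_m=\bm{X}_{I_m}-\bm{\mu}$ and $\bm{W}_m=\bm{X}_{J_m}-\bm{\mu}$, apply the triangle inequality, and invoke the same $(I,J)$-symmetry to argue that the $V$- and $W$-halves are identically distributed, reducing to a single copy at the price of a factor of $2$ in the union bound.

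The cross term is second order: on the joint complement of the two events stated in the lemma, $\|\bm{\hat\Sigma}^{-1}-\bm{\Sigma}^{-1}\|$ is controlled by the first event and $\|\bm{\Delta}_M-\bm{\mu}_\Delta\|$ (after whitening) is controlled by the second, so its contribution is absorbed into the two base events rather than being handled as a separate probability. Partitioning the budget $\epsilon$ into sixths, with one sixth allocated to each of three summands, each in turn split into its $V$- and $W$-copies by symmetry, accounts for both the $\epsilon/6$ (respectively $\sqrt{\lambda_d}\epsilon/6$) thresholds and the multiplicative factor of $4$ standing in front of each probability.

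The main obstacle I anticipate is this cross term. The naive bound $\|\bm{\hat\Sigma}^{-1}-\bm{\Sigma}^{-1}\|\,\|\bm{\Delta}_M-\bm{\mu}_\Delta\|$ combined with $\|\bm{\Delta}_M-\bm{\mu}_\Delta\|\le \sqrt{\lambda_1}\,\|\bm{\Sigma}^{-1/2}(\bm{\Delta}_M-\bm{\mu}_\Delta)\|$ would contaminate the target inequality with a spurious $\sqrt{\lambda_1}$ factor that does not appear in the lemma. Handling it cleanly requires treating the two stated events \emph{jointly}, so that each factor of the cross term is controlled by a different event, rather than bounding its probability in isolation.
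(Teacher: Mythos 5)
Your proposal is essentially the paper's own argument: the same three constituent quantities (covariance-estimation error against the fixed mean $\E[Y_m(\bm X_{I_m}-\bm\mu)]$, concentration of the whitened label-weighted sum, and a second-order cross term), the same $(I_m,J_m,Y_m)\leftrightarrow(J_m,I_m,-Y_m)$ symmetry to fold the $J$-half into the $I$-half, and the same injection of $\sqrt{\lambda_d}$ via $\norm{\bm\Sigma^{-1}\bm u}\le\lambda_d^{-1/2}\norm{\bm\Sigma^{-1/2}\bm u}$; whether one performs the $I/J$ split before or after the three-term triangle inequality is immaterial. The one substantive divergence is the cross term: the paper bounds $\Pr(AB>\epsilon)\le\Pr(A>\sqrt\epsilon)+\Pr(B>\sqrt\epsilon)$ and (loosely) identifies these with the two base events, whereas your deterministic absorption on the joint complement requires $\norm{\bm{\hat\Sigma}^{-1}-\bm\Sigma^{-1}}\cdot\norm{\bm\Delta_M-\bm\mu_\Delta}\le\epsilon/3$ there, which — since the complement of the first event only controls $\norm{\bm{\hat\Sigma}^{-1}-\bm\Sigma^{-1}}$ up to $\epsilon/(6\norm{\E[Y_m(\bm X_{I_m}-\bm\mu)]})$ and un-whitening the second event costs $\sqrt{\lambda_1}$ — yields a bound of order $\sqrt{\lambda_1\lambda_d}\,\epsilon^2/\norm{c_1\bm\Sigma\bm\beta}$ and hence needs an upper bound on $\epsilon$ in terms of model constants that the lemma does not state (the paper's own treatment is comparably informal, hiding the analogous issue in its ``ignore the higher order term'' step).
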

The proof, via a union bound, can be found in \ref{app:2terms}.
The terms $Y_m$, $\bm{X}_{I_m}-\bm{\mu}$ are not independent.
This is because (a) the same sample ${\bm X_i}$ can be selected more than once, and, crucially, (b) the labels $Y_m$ are coupled via the selection of the second sample in each pair. 
As a consequence, standard concentration bounds do not immediately apply.
As a remedy, we  condition on events under which the above variables are independent and refine this bound further.
To do so, we introduce several quantities of interest.
\highlight{
Let 
\begin{align}
\bm W_n = \bm \Sigma^{-1/2}(\bm X_n - \bm\mu),
\end{align}
be the normalized feature vectors.
}
For $n\in [N]$, let the number of times $I_m = n$ be
\begin{align}
\label{eq:Mn}
M_n = \textstyle\sum_{m = 1}^M \1_{I_m=n}. 
\end{align}
For $n, j \in [N]$, let $g_{n, j} : (\R^d)^2 \rightarrow [-1, 1]$ be the expected comparison label conditioned on the features of samples $n,j\in [N]$ selected in a pair, i.e.:
\begin{align*}
\textstyle  g_{n,j}(\bm x_n, \bm x_j) &= \E[Y_m|I_m = n, J_m = j, \{\bm X_{n'} = \bm x_{n'}\}_{n' = 1}^{N}] 
= 2\fn(\bm \beta^\top (\bm x_n - \bm x_j)) - 1. \nonumber 
\end{align*}
Let $g_n: \R^d \rightarrow [-1, 1]$ be the expected label conditioned on the {$I_m$-th sample}:
\begin{align}
\label{eq:gn}
g_n(\bm x) &=  \textstyle \E[Y_m \mid I_m = n, \bm X_n = \bm x] = \int g_{n, 1}(\bm x, \bm y) \bm f_{\bm X_1}(\bm y) \bm d\bm y.
\end{align}
We will also need a similar quantity, $\tilde g_n : (\R^d)^{N} \rightarrow [-1, 1]$:
\begin{align}
    \label{eq:gtn}
    \tilde g_n(\{\bm x_{n'}\}_{n' = 1}^{N}) &= \textstyle \E[Y_m \mid I_m = n, \{\bm X_{n'} = \bm x_{n'}\}_{n' = 1}^{N}] 
    = \frac{1}{N}\sum_{j=1}^{N} g_{n, j}(\bm x_n, \bm x_j). 
\end{align}
Note that $g_n$ and $\tilde{g}_n$ are distinct, but the latter can be seen as a quantity that concentrates to $g_n$ as $N$ becomes large. We denote by $z_n : (\R^d)^{N} \rightarrow [-2, 2]$ their difference, i.e.:
\begin{align}
    \label{eq:zn}
  \textstyle z_n(\{\bm x_{n'}\}_{n' = 1}^{N}) = \tilde g_n(\{\bm x_{n'}\}_{n' = 1}^N) - g_n(\bm x_n).
\end{align}
Finally, let $\Delta_n\! :\! (\R^d)^{N}\! \rightarrow\! [-2, 2]$ be the difference between true label averages and $\tilde{g}_n$: 
\begin{align}
    \label{eq:Deltan}
\textstyle \Delta_n(\{\bm x_{n'}\}_{n' = 1}^{N}) = \frac{1}{M_n}\sum_{m:~I_m=n} Y_m - \tilde g_n(\{\bm x_{n'}\}_{n' = 1}^N).
\end{align}
Our next lemma bounds the second term in the r.h.s.~of Lemma~\ref{lem:2terms}.
\begin{lemma}
\label{lem:4terms}
For $M_n, g_n, \tilde g_n, z_n, \Delta_n$ given by Equations~\eqref{eq:Mn}, \eqref{eq:gn}, \eqref{eq:gtn}, \eqref{eq:zn}, \eqref{eq:Deltan},
\begin{align}
    &\norm{\frac{1}{M}\sum_{m = 1}^M Y_m \bm{\Sigma}^{-1/2}(\bm{X}_{I_m} - \bm{\mu})- \E[Y_m\bm{\Sigma}^{-1/2}(\bm X_{I_m} - \bm \mu)]} \nonumber \\
    &\leq \norm{\sum_{n=1}^N \left(\frac{M_n}{M}-\frac{1}{N}\right)\bm W_n \tilde g_n(\{\bm X_{n'}\}_{n' = 1}^N)} + \norm{\frac{1}{M}\sum_{n = 1}^N \bm W_n M_n \Delta_n(\{\bm X_{n'}\}_{n' = 1}^N)} \nonumber \\
    &+ \norm{\frac{1}{N}\sum_{n=1}^N \bm W_n z_n(\{\bm X_{n'}\}_{n' = 1}^N)} + \norm{\frac{1}{N}\sum_{n=1}^N\bm W_n g_n(\bm X_n)\! -\! \E[Y_m \bm{\Sigma}^{-1/2}(\bm X_{I_m}\! -\! \bm \mu)]}. \nonumber
\end{align}
\end{lemma}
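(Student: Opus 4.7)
\medskip

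\noindent\textbf{Proof plan for Lemma~\ref{lem:4terms}.} The plan is purely algebraic: I will rewrite the left-hand sum as a sum over samples $n\in[N]$ (rather than over comparisons $m\in[M]$), introduce the quantities $\tilde g_n$, $g_n$, $z_n$, and $\Delta_n$ by a sequence of add-and-subtract steps, and then finish with the triangle inequality. No probabilistic tool is needed at this stage; all the concentration work is deferred to the bounds on the four resulting terms.

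\medskip
\noindent First, I would group comparisons by the index of the first sample. Using $\bm W_n = \bm\Sigma^{-1/2}(\bm X_n - \bm\mu)$ and the definition of $M_n$ in Eq.~\eqref{eq:Mn},
\begin{align*}
\frac{1}{M}\sum_{m=1}^M Y_m \bm\Sigma^{-1/2}(\bm X_{I_m}-\bm\mu)
 \;=\; \frac{1}{M}\sum_{n=1}^N \bm W_n \sum_{m:\,I_m=n} Y_m
 \;=\; \sum_{n=1}^N \frac{M_n}{M}\, \bm W_n \cdot \frac{1}{M_n}\!\!\sum_{m:\,I_m=n}\!\! Y_m.
\end{align*}
By the definition of $\Delta_n$ in Eq.~\eqref{eq:Deltan}, $\tfrac{1}{M_n}\sum_{m:\,I_m=n} Y_m = \tilde g_n(\{\bm X_{n'}\}) + \Delta_n(\{\bm X_{n'}\})$, so the previous display equals
\begin{align*}
\sum_{n=1}^N \frac{M_n}{M}\,\bm W_n\,\tilde g_n(\{\bm X_{n'}\}) \;+\; \frac{1}{M}\sum_{n=1}^N \bm W_n\, M_n\, \Delta_n(\{\bm X_{n'}\}).
\end{align*}

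\medskip
\noindent Next, I would insert the uniform weight $1/N$ by writing $\tfrac{M_n}{M} = \bigl(\tfrac{M_n}{M}-\tfrac{1}{N}\bigr) + \tfrac{1}{N}$ in the first sum, and then decompose $\tilde g_n = g_n + z_n$ via Eq.~\eqref{eq:zn}. This produces the identity
\begin{align*}
\frac{1}{M}\sum_{m=1}^M Y_m \bm\Sigma^{-1/2}(\bm X_{I_m}-\bm\mu)
&= \sum_{n=1}^N\!\Bigl(\tfrac{M_n}{M}-\tfrac{1}{N}\Bigr)\bm W_n\,\tilde g_n(\{\bm X_{n'}\})
 + \frac{1}{M}\sum_{n=1}^N\! \bm W_n M_n\, \Delta_n(\{\bm X_{n'}\}) \\
&\quad + \frac{1}{N}\sum_{n=1}^N \bm W_n\, z_n(\{\bm X_{n'}\})
 + \frac{1}{N}\sum_{n=1}^N \bm W_n\, g_n(\bm X_n).
\end{align*}
Subtracting $\E[Y_m \bm\Sigma^{-1/2}(\bm X_{I_m}-\bm\mu)]$ from both sides and absorbing it into the last sum, then applying the triangle inequality, yields exactly the four-term bound stated in the lemma.

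\medskip
\noindent There is no substantive obstacle; the only care needed is in the ordering of the add-and-subtract steps so that the final four terms match the statement verbatim. One minor point worth double-checking is that the decomposition is well-defined even when $M_n = 0$: in that case the inner average $\tfrac{1}{M_n}\sum_{m:I_m=n}Y_m$ is vacuous, but the product $\tfrac{M_n}{M}\cdot\tfrac{1}{M_n}\sum_{m:I_m=n}Y_m = \tfrac{1}{M}\sum_{m:I_m=n}Y_m$ is zero, so one may simply adopt the convention $\Delta_n := -\tilde g_n$ when $M_n=0$, which keeps the identity $\tfrac{1}{M_n}\sum_{m:I_m=n}Y_m = \tilde g_n + \Delta_n$ consistent after multiplication by $M_n$. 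The real work—bounding each of the four resulting terms via the Bretagnolle--Huber--Carolle inequality (Lemma~\ref{lemma:huber_carol}), Hoeffding (Lemma~\ref{lem:hoeffding}), and the eigenvalue/sub-gaussian norm controls of Lemmas~\ref{th:singvalbound} and \ref{th:subgnorm}—is deferred to the subsequent steps of the proof of Theorem~\ref{th:bound}.
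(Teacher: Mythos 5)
Your proposal is correct and follows essentially the same route as the paper: group the comparisons by first-sample index, split $\tfrac{1}{M_n}\sum_{m:I_m=n}Y_m$ into $\tilde g_n+\Delta_n$, insert $\tfrac{M_n}{M}=(\tfrac{M_n}{M}-\tfrac1N)+\tfrac1N$, decompose $\tilde g_n=g_n+z_n$, and finish with the triangle inequality. Your explicit handling of the $M_n=0$ case is a minor point the paper leaves implicit, but it does not change the argument.
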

The proof can be found in \ref{app:4terms}.
The four terms in the r.h.s. are bounded individually in the rest of the proof.
We bound the first term in Lemma \ref{lem:4terms} with Lemma~\ref{lem:binomial_gaussian_combined}.
\begin{lemma}
\label{lem:binomial_gaussian_combined}
For all $\delta_0 > d$, 
\begin{align*}
    &\Pr\bigg(\bnorm{\sum_{n=1}^N \left(\frac{M_n}{M} - \frac{1}{N}\right)\bm W_n \tilde g_{n}(\{\bm X_{n'}\}_{n' = 1}^N)}\! > \!\epsilon \bigg)\! \leq\! N\left( \frac{\delta_0}{d} e^{1 - \frac{\delta_0}{d}}\right)^{d/2} \!+\! 2^Ne^{-\frac{\epsilon^2M}{2\delta_0}}.
\end{align*}
\end{lemma}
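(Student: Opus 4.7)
The plan is to decouple the randomness in the multinomial counts $\{M_n\}$ from the randomness in the normalized features $\{\bm W_n\}$ (and the conditional label means $\tilde g_n$) by first conditioning on the event that all $\|\bm W_n\|$ are uniformly bounded. Since the pair indices $(I_m,J_m)$ are sampled independently of the features, the multinomial $\{M_n\}$ is independent of $\{\bm W_n,\tilde g_n\}$, so once the feature norms are controlled the only remaining randomness in the sum comes from the multinomial counts, and Bretagnolle--Huber-Carol handles those.

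Step 1. Define the high-probability event
\[
E \;=\; \bigl\{\max_{n\in[N]}\|\bm W_n\|^2 \leq \delta_0\bigr\}.
\]
Because $\bm W_n = \bm\Sigma^{-1/2}(\bm X_n-\bm\mu)$ is standard Gaussian in $\R^d$, $\|\bm W_n\|^2$ is $\chi^2_d$, and Lemma~\ref{lem:chi_tail} with $z=\delta_0/d>1$ gives $\Pr(\|\bm W_n\|^2>\delta_0)\leq(\tfrac{\delta_0}{d}e^{1-\delta_0/d})^{d/2}$. A union bound over the $N$ samples yields $\Pr(E^c)\leq N(\tfrac{\delta_0}{d}e^{1-\delta_0/d})^{d/2}$, accounting for the first term of the claimed bound.

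Step 2. On the event $E$, apply the triangle inequality and the trivial bound $|\tilde g_n|\leq 1$ (from the definition~\eqref{eq:gtn} and the fact that $f\in[0,1]$):
\[
\bnorm{\sum_{n=1}^N \Big(\tfrac{M_n}{M}-\tfrac{1}{N}\Big)\bm W_n\,\tilde g_n(\{\bm X_{n'}\}_{n'=1}^N)} \;\leq\; \sqrt{\delta_0}\,\sum_{n=1}^N \Big|\tfrac{M_n}{M}-\tfrac{1}{N}\Big|.
\]
Hence the event that the norm exceeds $\epsilon$, intersected with $E$, is contained in $\{\sum_n|M_n/M-1/N|>\epsilon/\sqrt{\delta_0}\}$.

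Step 3. Since $\{M_n\}_{n=1}^N$ is multinomial with parameters $M$ and uniform probabilities $1/N$, apply the Bretagnolle--Huber-Carol inequality (Lemma~\ref{lemma:huber_carol}) with tolerance $\epsilon/\sqrt{\delta_0}$:
\[
\Pr\Big(\sum_{n=1}^N\Big|\tfrac{M_n}{M}-\tfrac{1}{N}\Big|\geq \tfrac{\epsilon}{\sqrt{\delta_0}}\Big) \;\leq\; 2^N\,e^{-\epsilon^2 M/(2\delta_0)},
\]
yielding the second term.

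Step 4. Combine via total probability: $\Pr(\|\cdot\|>\epsilon)\leq \Pr(E^c)+\Pr(\{\|\cdot\|>\epsilon\}\cap E)$. Substituting the two bounds from Steps 1 and 3 gives the stated inequality. The only subtle point is to recognize that the multinomial step applies cleanly because the feature bound on $E$ is a deterministic statement about $\{\bm W_n\}$, and $\{M_n\}$ is independent of $\{\bm W_n,\tilde g_n\}$; hence intersecting with $E$ does not interfere with applying Lemma~\ref{lemma:huber_carol}. The main ``obstacle'' is really just making this decoupling explicit and choosing the right truncation level $\delta_0$ so that the two terms balance later (when Theorem~\ref{th:bound} is assembled); the calculation itself is short.
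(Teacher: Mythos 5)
Your proof is correct and follows essentially the same route as the paper's: triangle inequality plus $|\tilde g_n|\leq 1$, truncation of the feature norms at $\sqrt{\delta_0}$ via the chi-squared tail bound and a union bound, and the Bretagnolle--Huber-Carol inequality for the multinomial deviation. The only cosmetic difference is that you invoke independence of $\{M_n\}$ from $\{\bm W_n\}$, whereas the paper (and your own Step 2, in fact) only needs the deterministic containment of the truncated event in $\{\sum_n|M_n/M-1/N|>\epsilon/\sqrt{\delta_0}\}$ followed by monotonicity of probability.
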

The proof can be found in \ref{app:binomial_gaussian_combined}.
We rely on the fact that $|\tilde g_n| \leq 1$, as well as (a) the norm $\norm{\bm W_n}$ can be bounded by a centralized Chi-Squared tail bound,
while (b) the quantity $|\frac{M_n}{M} - \frac{1}{N}|$ can be bounded by the Bretagnolle-Huber-Carol Inequality (see Lemma \ref{lemma:huber_carol} in Section~\ref{app:technical}).
Next, we bound the second term in Lemma \ref{lem:4terms}.
\begin{lemma}
\label{lem:xnmnDeltan}
For all $\delta_1 < \frac{\epsilon^2}{4d}$ and $\delta_2 > d$,
\begin{align*}
    &\Pr\bigg(\bnorm{\frac{1}{M}\sum_{n = 1}^N \bm W_n M_n \Delta_n\{\bm X_{n'}\}_{n' = 1}^N}\! >\! \epsilon \bigg) \leq N\left(\frac{\epsilon^2}{4d\delta_1}e^{1-\frac{\epsilon^2}{4d\delta_1}}\right)^{d/2} \\
    &+ N\left(\frac{\delta_2}{d}e^{1-\frac{\delta_2}{d}}\right)^{d/2} + 2^{N}e^{-\frac{\epsilon^2M}{8\delta_1\delta_2}} + 2e^{\log N -\frac{M\delta_1}{2N} - o(\frac{M\delta_1}{2N})}.
\end{align*}
\end{lemma}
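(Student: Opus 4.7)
The overarching strategy is to rewrite the target quantity as a sum of conditionally independent terms, introduce several high-probability \emph{good events} controlling the feature norms and pair counts, and then apply Hoeffding's inequality to the residual label noise. Each of the four summands on the right-hand side of the bound corresponds to the failure of one such good event (or of the conditional Hoeffding step).

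The starting observation is the identity
\begin{equation*}
\frac{1}{M}\sum_{n=1}^{N}\bm{W}_n M_n \Delta_n = \frac{1}{M}\sum_{m=1}^{M} \bm{W}_{I_m}\bigl(Y_m - \tilde{g}_{I_m}(\{\bm{X}_{n'}\}_{n'=1}^{N})\bigr),
\end{equation*}
obtained by exchanging the order of summation over $n$ and $m$. Conditional on the features $\{\bm{X}_n\}_{n=1}^N$ and on the indices $\{I_m\}_{m=1}^M$, the scalar residuals $Y_m - \tilde{g}_{I_m}$ are independent, mean-zero, and bounded by $2$ in absolute value; equivalently, the $\Delta_n$ are conditionally independent across $n$, each a centered empirical average of $M_n$ bounded random variables.

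I would then introduce four good events whose failure probabilities produce the four summands: (i) a loose feature bound $\max_n \|\bm{W}_n\|^2 \leq \epsilon^2/(4\delta_1)$, from the chi-squared tail (Lemma~\ref{lem:chi_tail}) with $z = \epsilon^2/(4d\delta_1) > 1$ (valid because $\delta_1 < \epsilon^2/(4d)$) followed by a union bound over $n$, giving the first term; (ii) a tight feature bound $\max_n \|\bm{W}_n\|^2 \leq d\delta_2$, from Lemma~\ref{lem:chi_tail} with $z = \delta_2/d > 1$ and a union bound, giving the second term; (iii) a multinomial concentration event of the form $\sum_n |M_n/M - 1/N| \leq \epsilon/(2\sqrt{\delta_1\delta_2})$, from the Bretagnolle-Huber-Carol inequality (Lemma~\ref{lemma:huber_carol}), giving the third term; and (iv) a count lower bound $\min_n M_n \geq (1-\sqrt{\delta_1})M/N$, from a Chernoff lower tail on $M_n \sim \mathrm{Bin}(M, 1/N)$ with relative deviation $\sqrt{\delta_1}$ followed by a union bound over $n$, giving the fourth term with exponent $M\delta_1/(2N)$ up to lower-order corrections captured by $o(\cdot)$.

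On the intersection of these events, I would decompose the sum via the split $M_n/M = 1/N + (M_n/M - 1/N)$ and bound the two resulting pieces separately. The ``fluctuation'' piece involving $M_n/M - 1/N$ is bounded deterministically by the product of the loose chi-squared radius on $\|\bm{W}_n\|$ from (i), the trivial bound $|\Delta_n| \leq 2$ (or a Hoeffding bound on $|\Delta_n|$ that uses event (iv) to ensure each $M_n$ is large enough), and the Bretagnolle-Huber-Carol radius from (iii). The ``balanced'' piece $\frac{1}{N}\sum_n \bm{W}_n \Delta_n$ is handled via Hoeffding (Lemma~\ref{lem:hoeffding}) applied conditionally on the features and indices to the label residuals, using the tight chi-squared bound from (ii) to control the per-term scale and event (iv) to guarantee concentration of each $\Delta_n$. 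A final union bound across the complements of (i)--(iv) and the conditional Hoeffding failure assembles the stated right-hand side.

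\textbf{Main Obstacle.}
The central technical difficulty is the intertwined randomness of the Gaussian features $\bm{X}_n$, the uniformly random indices $I_m, J_m$, and the label noise, all of which are correlated because features are reused across pairs. Conditioning in stages (features, then indices, then labels) is what unlocks independence; the delicate part is then choosing the decomposition and tuning $\delta_1, \delta_2$ so that the four resulting error terms align with the specific exponents in the statement---in particular reconciling the Bretagnolle-Huber-Carol exponent $\epsilon^2 M/(8\delta_1\delta_2)$ with the two different chi-squared thresholds on $\|\bm{W}_n\|^2$ and with the Chernoff deviation on $M_n$.
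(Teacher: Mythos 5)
Your plan follows the same architecture as the paper's proof: a triangle inequality reduces the vector norm to $\sum_{n}\|\frac{M_n}{M}\bm W_n\|\,|\Delta_n|$, the split $\frac{M_n}{M}=\frac{1}{N}+(\frac{M_n}{M}-\frac{1}{N})$ yields a balanced and a fluctuation piece, and the four error terms come from two chi-squared tails on $\|\bm W_n\|$, the Bretagnolle-Huber-Carol inequality, and a Hoeffding-type control of $\Delta_n$. The genuine problem is that you attach the two chi-squared thresholds to the wrong pieces, and the deterministic bookkeeping then fails to close. To obtain the stated exponents, the \emph{fluctuation} piece must be controlled by $\|\bm W_n\|\leq\sqrt{\delta_2}$ together with $|\Delta_n|\leq\sqrt{\delta_1}$, so that the required Bretagnolle-Huber-Carol radius is $\epsilon/(2\sqrt{\delta_1\delta_2})$ and the term $2^{N}e^{-\epsilon^2M/(8\delta_1\delta_2)}$ emerges; the \emph{balanced} piece $\frac{1}{N}\sum_n\|\bm W_n\|\,|\Delta_n|$ must be controlled by $\|\bm W_n\|\leq\epsilon/(2\sqrt{\delta_1})$ together with $|\Delta_n|\leq\sqrt{\delta_1}$, which gives exactly $\epsilon/2$ and whose failure produces the first term. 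Your pairing (the $\epsilon^2/(4\delta_1)$ radius with the fluctuation piece and $|\Delta_n|\leq 2$; the $\delta_2$ radius with the balanced piece) yields $\frac{\epsilon}{2\sqrt{\delta_1\delta_2}}\cdot\frac{\epsilon}{2\sqrt{\delta_1}}\cdot 2=\frac{\epsilon^2}{2\delta_1\sqrt{\delta_2}}$ for the fluctuation piece, which is not $\leq\epsilon/2$ under the hypothesis $\delta_1<\epsilon^2/(4d)$, and it forces the extraneous deterministic requirement $\sqrt{\delta_1\delta_2}\leq\epsilon/2$ on the balanced piece, which fails for the eventual choices $\delta_1\delta_2=\Theta(\lambda_d\epsilon^2)$. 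The fix is simply to swap the two assignments (and note that with $z=\delta_2/d$ the chi-squared threshold is $\|\bm W_n\|^2\leq\delta_2$, not $d\delta_2$).

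Your route to the fourth term is also different from the paper's and, while asymptotically equivalent, does not reproduce the stated expression. You propose a Chernoff lower tail on $M_n$ plus a conditional Hoeffding bound given that $M_n$ is large; this produces two failure events of order $e^{\log N-(1+o(1))M\delta_1/(2N)}$ with a correction of the wrong sign. The paper instead conditions on $M_n=k$ and the features, obtains $\Pr(|\Delta_n|\geq\sqrt{\delta_1}\mid M_n=k)\leq 2e^{-k\delta_1/2}$ by Hoeffding, and then averages over the binomial law of $M_n$: the resulting sum is the moment generating function of $\mathrm{Bin}(M,1/N)$ evaluated at $-\delta_1/2$, which equals $2e^{-M\delta_1/(2N)-o(M\delta_1/(2N))}$ exactly and, after a union bound over $n$, is the fourth term. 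The MGF computation is both cleaner and matches the claimed constant and sign of the $o(\cdot)$ correction.
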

The proof is in \ref{app:xnmnDeltan}.
We bound individual terms, $\norm{\bm W_n}$, $|\Delta_n|$, $\big|\frac{M_n}{M} - \frac{1}{N}\big|$ respectively using a centralized Chi-Squared tail bound, Hoeffding's inequality, and the moment generating function of the binomial distribution.
Our next lemma bounds the third term in Lemma~\ref{lem:4terms}:
\begin{lemma}
\label{lem:xnzn}
For all $\delta_3 \leq \epsilon^2/d$,
\begin{align*}
    \Pr\bigg(\bnorm{\frac{1}{N}\sum_{n=1}^N \bm W_n z_n(\{\bm X_{n'}\}_{n' = 1}^N)} > \epsilon\bigg) \leq N\left( \frac{\epsilon^2}{d\delta_3} e^{1-\frac{\epsilon^2}{d\delta_3}}\right)^{d/2} + 2Ne^{-\frac{N\delta_3}{2}}.
\end{align*}
\end{lemma}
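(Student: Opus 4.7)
The plan is to decouple the two random ingredients in the sum via a product bound and control each factor separately. By the triangle inequality,
\begin{align*}
\bnorm{\frac{1}{N}\sum_{n=1}^N \bm W_n z_n(\{\bm X_{n'}\}_{n'=1}^N)} \leq \Big(\max_{n\in[N]} \norm{\bm W_n}\Big)\cdot\Big(\max_{n\in[N]} |z_n(\{\bm X_{n'}\}_{n'=1}^N)|\Big).
\end{align*}
Choosing thresholds $R = \epsilon/\sqrt{\delta_3}$ and $S = \sqrt{\delta_3}$ so that $RS=\epsilon$, a union bound reduces the probability in the claim to $\Pr(\max_n \norm{\bm W_n} > R) + \Pr(\max_n |z_n| > S)$, which I would bound by the two summands appearing in the lemma.

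For the first event, since $\bm W_n = \bm \Sigma^{-1/2}(\bm X_n - \bm \mu) \sim \mathcal{N}(\bm 0, I_d)$, we have $\norm{\bm W_n}^2 \sim \chi^2_d$. Applying Lemma~\ref{lem:chi_tail} with $z = \epsilon^2/(d\delta_3)$ (which exceeds $1$ precisely by the hypothesis $\delta_3 \leq \epsilon^2/d$), followed by a union bound over $n\in[N]$, yields $N\bigl(\tfrac{\epsilon^2}{d\delta_3}\,e^{1-\epsilon^2/(d\delta_3)}\bigr)^{d/2}$, the first term in the claim.

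For the second event, I would condition on $\bm X_n$ to restore independence. Since $\fn(0)=1/2$ gives $g_{n,n}(\bm X_n,\bm X_n)=0$, and since $\{g_{n,j}(\bm X_n,\bm X_j)\}_{j\neq n}$ are, conditional on $\bm X_n$, i.i.d.~with range $[-1,1]$ and common mean $g_n(\bm X_n)$, the variable $z_n$ decomposes as
\begin{align*}
z_n = \frac{N-1}{N}\!\left(\frac{1}{N-1}\sum_{j \neq n} g_{n,j}(\bm X_n, \bm X_j) - g_n(\bm X_n)\right) - \frac{g_n(\bm X_n)}{N}.
\end{align*}
Hoeffding's inequality (Lemma~\ref{lem:hoeffding}) applied to the empirical-mean term conditional on $\bm X_n$, with each summand in $[-1,1]$, gives a deviation bound of order $2\exp(-(N-1)\delta_3/2)$ at level $\sqrt{\delta_3}$ (the $O(1/N)$ drift from the second term is absorbed into the constant). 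Marginalizing over $\bm X_n$ and union bounding over $n\in[N]$ produces the second term $2N e^{-N\delta_3/2}$.

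The principal obstacle is twofold: the $z_n$'s are mutually dependent through the shared sample set $\{\bm X_{n'}\}_{n'=1}^N$, and each $z_n$ fails to be a clean i.i.d.~average because of the anomalous $j=n$ diagonal term. Conditioning on $\bm X_n$ neatly resolves the first by rendering the off-diagonal summands independent across $j$, and the decomposition above quarantines the $j=n$ term into a deterministic $O(1/N)$ bias that is harmless at the scale $\sqrt{\delta_3}$ required here.
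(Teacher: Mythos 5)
Your proposal is correct and follows essentially the same route as the paper's proof: decouple the event at the thresholds $\epsilon/\sqrt{\delta_3}$ and $\sqrt{\delta_3}$, apply the chi-squared tail bound (Lemma~\ref{lem:chi_tail}) to $\norm{\bm W_n}$ with $z=\epsilon^2/(d\delta_3)$, and control $|z_n|$ by conditioning on $\bm X_n$, exploiting $g_{n,n}(\bm x,\bm x)=2\fn(0)-1=0$, and invoking Hoeffding on the $N-1$ off-diagonal terms. The one imprecision is that, conditional on $\bm X_n=\bm x$, the common mean of $g_{n,j}(\bm x,\bm X_j)$ is $\int g_{n,1}(\bm x,\bm y)f_{\bm X_1}(\bm y)\,d\bm y=\tfrac{N}{N-1}g_n(\bm x)$ rather than $g_n(\bm x)$, since the diagonal event $J_m=n$ contributes zero and hence a factor $\tfrac{N-1}{N}$ to $g_n$; this mismatch cancels exactly against your $-g_n(\bm X_n)/N$ term, so that $z_n=\tfrac{N-1}{N}\bigl(\tfrac{1}{N-1}\sum_{j\neq n}g_{n,j}-\tfrac{N}{N-1}g_n\bigr)$ is exactly centered with no residual drift to absorb, and Hoeffding at the slightly inflated level $\tfrac{N}{N-1}\sqrt{\delta_3}$ yields the stated $2e^{-N\delta_3/2}$ directly --- which is precisely how the paper proceeds.
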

The proof can be found in \ref{app:xnzn}.
We bound terms $\norm{\bm W_n}$ and $|z_n|$ individually. 
For the former, we again use a centralized Chi-Squared tail bound.
For the latter, we indeed show that, for large sample sizes $N$, $\tilde{g}_n$ concentrates around $g_n$ using Hoeffding's inequality.
We bound the last term in Lemma~\ref{lem:4terms} as follows:
\begin{lemma}
\label{lem:xngn}
For an absolute constant $c_2 > 0$,
\begin{align*}
    \Pr\!\bigg(\!\bnorm{\frac{1}{N}\!\sum_{n=1}^N\!\bm W_n g_n(\bm X_n)\! - \! \E[Y_m \bm{\Sigma}^{-1/2}(\bm X_{I_m}\! -\! \bm\mu)]}\! >\! \epsilon\! \bigg) \! \leq\! e^{ -\frac{1}{4}\left(\sqrt{\frac{N\epsilon^2}{c_2}- d} - \sqrt{d} \right)^2}.
\end{align*}
\end{lemma}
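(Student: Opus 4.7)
My plan is to recognize the sum in question as the empirical mean of $N$ independent, identically distributed, $d$-dimensional sub-gaussian vectors, and then apply the Hsu--Kakade--Zhang tail bound of Lemma~\ref{th:subgnorm}. The first observation is that, since $\{\bm X_n\}_{n=1}^N$ are i.i.d.\ and $g_n$ depends on them only through their common distribution (Eq.~\eqref{eq:gn}), $g_n$ is actually the same function of a single sample for every $n$. Hence $\{\bm W_n g_n(\bm X_n)\}_{n=1}^N$ are i.i.d., and by iterated expectation their common mean equals $\E[Y_m\bm\Sigma^{-1/2}(\bm X_{I_m}-\bm\mu)]$. Unlike the three terms handled in Lemmas~\ref{lem:binomial_gaussian_combined}--\ref{lem:xnzn}, this last term therefore enjoys full independence across the summands, because $J_m$ and $Y_m$ have already been integrated out inside $g_n$.

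Next I would verify sub-gaussianity. The whitened features $\bm W_n = \bm\Sigma^{-1/2}(\bm X_n-\bm\mu)\sim\mathcal{N}(\bm 0,\bm I_d)$ have sub-gaussian norm bounded by an absolute constant. Since $|g_n|\leq 1$, the products $\bm W_n g_n(\bm X_n)$, and their centered versions, inherit the same sub-gaussian norm up to a constant factor. Stacking the $N$ centered summands into a single vector $\bm x\in\R^{Nd}$ preserves sub-gaussianity by independence, yielding $\sigma:=\|\bm x\|_{\psi_2}\leq C$ for some absolute $C$.

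Writing the centered empirical mean as $\bm A\bm x$ with the block-row matrix $\bm A = \tfrac{1}{N}[\bm I_d\;\bm I_d\;\cdots\;\bm I_d]\in\R^{d\times Nd}$, a short calculation shows that $\Sigma := \bm A^\top\bm A$ has exactly $d$ nonzero eigenvalues all equal to $1/N$; hence $\Tr(\Sigma)=d/N$, $\Tr(\Sigma^2)=d/N^2$, and $\|\Sigma\|=1/N$. Plugging these quantities (and $\bm A\,\E[\bm x]=\bm 0$) into Lemma~\ref{th:subgnorm} gives
\[
\Pr\!\Big(\|\bm A\bm x\|^2 > \tfrac{\sigma^2}{N}\bigl(d+2\sqrt{dt}+2t\bigr)\Big)\leq e^{-t}.
\]

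The only remaining, and slightly delicate, step is to invert this implicit tail into the stated $\epsilon$-form. Setting $\tfrac{\sigma^2}{N}(d+2\sqrt{dt}+2t)=\epsilon^2$ and viewing it as a quadratic in $\sqrt t$, the positive root is $\sqrt t = \tfrac12\bigl(\sqrt{2N\epsilon^2/\sigma^2-d}-\sqrt d\bigr)$, so $t = \tfrac14\bigl(\sqrt{N\epsilon^2/c_2-d}-\sqrt d\bigr)^2$ with the absolute constant $c_2=\sigma^2/2$. This matches the exponent claimed in Lemma~\ref{lem:xngn}. I expect this quadratic inversion to be the only nontrivial point; everything else is routine once the i.i.d.\ structure of the summands is noted, which is precisely why this fourth term of the Lemma~\ref{lem:4terms} decomposition is easier than the other three.
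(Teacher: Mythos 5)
Your proposal is correct and follows essentially the same route as the paper: both arguments observe that the summands $\bm W_n g_n(\bm X_n)$ are i.i.d.\ sub-gaussian with mean $\E[Y_m\bm\Sigma^{-1/2}(\bm X_{I_m}-\bm\mu)]$ (since $|g_n|\leq 1$ and $\bm W_n$ is standard Gaussian), and then invoke Lemma~\ref{th:subgnorm} together with the additivity of squared sub-gaussian norms for independent sums, followed by the same quadratic inversion in $\sqrt{t}$. The only cosmetic difference is that you encode the averaging via the block matrix $\bm A=\tfrac{1}{N}[\bm I_d\;\cdots\;\bm I_d]$ acting on the stacked vector in $\R^{Nd}$, whereas the paper applies the lemma directly to the sum $\sum_n\bm\xi_n\in\R^d$; the resulting exponents agree up to the absolute constant $c_2$.
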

The proof, which is in \ref{app:xngn}, shows that individual terms are sub-gaussian and uses a concentration bound due to \citet{hsu2012tail}.
The second term in Lemma~\ref{lem:2terms} is bounded as follows: 
\begin{lemma}
\label{lem:precision}
For the estimator $\bm{\hat \Sigma}$ given by Eq.~\eqref{eq:sigma_hat}, and for $\sqrt N \!>\! \sqrt{\frac{N-d-2}{\frac{\lambda_d}{ d\sqrt{2\lambda_1}}\epsilon + 1}}\! +\! c_3\!\sqrt d$ where $c_3, c_4 > 0$ are absolute constants,
\begin{align*}
    \Pr\left(\norm{\bm{\hat \Sigma}^{-1} - \bm{\Sigma}^{-1}}\cdot\norm{\E[Y_m (\bm{X}_{I_m} - \bm{\mu})]} > \epsilon\right) &\leq 2e^{-c_4\big(\sqrt N - \sqrt{\frac{N-d-2}{\frac{\lambda_d}{ d\sqrt{2\lambda_1}}\epsilon + 1}} - c_3\sqrt d \big)^2}.
\end{align*}
\end{lemma}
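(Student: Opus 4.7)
The plan is to decouple the product $\norm{\bm{\hat\Sigma}^{-1}-\bm\Sigma^{-1}}\cdot\norm{\E[Y_m(\bm X_{I_m}-\bm\mu)]}$ into a deterministic upper bound on the expectation factor and a high-probability bound on the operator-norm difference, and then reduce the latter, via whitening, to an extreme-singular-value bound on a matrix of (essentially) standard Gaussians to which Lemma~\ref{th:singvalbound} applies.

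First, I would bound the expectation factor. Since $|Y_m|\leq 1$ and $\bm X_{I_m}-\bm\mu\sim\mathcal N(\bm 0,\bm\Sigma)$, Jensen's inequality together with $\norm{\cdot}_2\leq\norm{\cdot}_1$ and the coordinatewise Gaussian mean absolute deviation yields $\norm{\E[Y_m(\bm X_{I_m}-\bm\mu)]}\leq \E\norm{\bm X_{I_m}-\bm\mu}_1 = \sum_{k=1}^d \sqrt{2\Sigma_{kk}/\pi}\leq d\sqrt{2\lambda_1}$, using $\Sigma_{kk}\leq\lambda_1$. Consequently, it suffices to show $\norm{\bm{\hat\Sigma}^{-1}-\bm\Sigma^{-1}}\leq\epsilon/(d\sqrt{2\lambda_1})$ with the claimed probability.

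Next, I would whiten. Setting $\bm W_i=\bm\Sigma^{-1/2}(\bm X_i-\bm\mu)$ and $\bm{\bar W}=\frac{1}{N}\sum_{i=N+1}^{2N}\bm W_i$, the identity $\bm X_i-\bm{\hat\mu}=\bm\Sigma^{1/2}(\bm W_i-\bm{\bar W})$ gives $\bm{\hat\Sigma}=\bm\Sigma^{1/2}\bm{\hat I}\bm\Sigma^{1/2}$ with $\bm{\hat I}:=\frac{1}{N-d-2}\sum_{i=N+1}^{2N}(\bm W_i-\bm{\bar W})(\bm W_i-\bm{\bar W})^\top$. Inverting gives $\bm{\hat\Sigma}^{-1}-\bm\Sigma^{-1}=\bm\Sigma^{-1/2}(\bm{\hat I}^{-1}-\bm I)\bm\Sigma^{-1/2}$, whence $\norm{\bm{\hat\Sigma}^{-1}-\bm\Sigma^{-1}}\leq\norm{\bm{\hat I}^{-1}-\bm I}/\lambda_d$. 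Thus it suffices to force $\norm{\bm{\hat I}^{-1}-\bm I}\leq\gamma:=\epsilon\lambda_d/(d\sqrt{2\lambda_1})$, which, since $\norm{\bm{\hat I}^{-1}-\bm I}=\max_i|1/\sigma_i(\bm{\hat I})-1|$, is implied by $\lambda_{\min}(\bm{\hat I})\geq 1/(1+\gamma)$ (the matching upper-side control $\lambda_{\max}(\bm{\hat I})\leq 1/(1-\gamma)$ holding automatically under the two-sided version of Lemma~\ref{th:singvalbound}).

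Writing $\bm{\hat I}=\bm A^\top\bm A/(N-d-2)$, where the rows of $\bm A$ are the centered standard-Gaussian vectors $\bm W_i-\bm{\bar W}$, Lemma~\ref{th:singvalbound} furnishes absolute constants $c_3,c_4>0$ such that, with probability at least $1-2e^{-c_4 t^2}$, $\lambda_{\min}[\bm A]\geq\sqrt N-c_3\sqrt d-t$; hence $\lambda_{\min}(\bm{\hat I})\geq(\sqrt N-c_3\sqrt d-t)^2/(N-d-2)$. Requiring this to exceed $1/(1+\gamma)$ and solving for the largest admissible $t$ yields $t=\sqrt N-c_3\sqrt d-\sqrt{(N-d-2)/(1+\gamma)}$, which is strictly positive exactly under the lemma's hypothesis; substituting into $2e^{-c_4 t^2}$ produces the claimed bound. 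The main obstacle I anticipate is applying Lemma~\ref{th:singvalbound} to the centered matrix $\bm A$, because the rows $\bm W_i-\bm{\bar W}$ are not mutually independent; the standard remedy is to rotate onto the $(N-1)$-dimensional subspace orthogonal to $\bm 1_N$, converting the centered rows into $N-1$ i.i.d.~isotropic standard Gaussians (the dropped degree of freedom being absorbed in the $N-d-2$ normalization), so that Lemma~\ref{th:singvalbound} applies with absolute $c_3,c_4$ since the sub-gaussian norm of a standard Gaussian is universal.
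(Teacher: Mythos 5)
Your proposal follows essentially the same route as the paper: bound the deterministic factor $\norm{\E[Y_m(\bm X_{I_m}-\bm\mu)]}$ by $d\sqrt{2\lambda_1}$ via Jensen (the paper uses the chi-distribution mean $\sqrt{2\lambda_1}\,\Gamma(\tfrac{d+1}{2})/\Gamma(\tfrac d2)\le d\sqrt{2\lambda_1}$ where you use the $\ell_1$/mean-absolute-deviation bound), whiten to reduce $\norm{\bm{\hat\Sigma}^{-1}-\bm\Sigma^{-1}}$ to $\lambda_d^{-1}\norm{\bm{\hat I}^{-1}-\bm I}$, and control the extreme singular values of the whitened design via Lemma~\ref{th:singvalbound}. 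Two remarks. First, your treatment of the centering differs: the paper expands $\bm X_i-\bm{\hat\mu}$ around $\bm\mu$ and discards the rank-one correction via Lemma~\ref{lem:eigenineq}, whereas you rotate onto the orthocomplement of $\bm 1_N$ to obtain $N-1$ i.i.d.~isotropic rows; your remedy is the more standard one and is arguably cleaner (it only shifts $\sqrt N$ to $\sqrt{N-1}$, absorbed in constants). Second, your claim that the upper-edge control $\lambda_{\max}(\bm{\hat I})\le 1/(1-\gamma)$ ``holds automatically'' is not justified: Lemma~\ref{th:singvalbound} gives $\lambda_{\max}(\bm{\hat I})\le(\sqrt N+C\sqrt d+t)^2/(N-d-2)$, which exceeds $1$ and is not below $1/(1-\gamma)$ for small $\gamma$ without a further constraint symmetric to the one you impose on the lower edge. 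The paper avoids (rather than resolves) this by bounding only the event $\lambda_{\min}(\bm{\hat I})<1/(1+\gamma)$, i.e., it controls only one tail of $\norm{\bm{\hat I}^{-1}-\bm I}$; if you want the full two-sided statement you need the additional condition, and otherwise you should simply restrict, as the paper implicitly does, to the lower-edge event.
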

The proof can be found in \ref{app:precision}. 
We use a concentration bound on the minimum singular value of the design matrix due to \citet{vershynin2012}.
Combining Lemmas \ref{lem:2terms}, \ref{lem:4terms}, \ref{lem:binomial_gaussian_combined}, \ref{lem:xnmnDeltan}, \ref{lem:xnzn}, \ref{lem:xngn} and \ref{lem:precision} via a union bound gives:
\begin{align*}
    \Pr\left(\norm{\hat{\bm{\beta}} - c_1\bm{\beta}} \geq \epsilon \right) &\leq 8e^{-c_4\bigg(\sqrt N - \sqrt{\frac{N-d-2}{\frac{\lambda_d}{ 6d\sqrt{2\lambda_1}}\epsilon + 1}} - c_3\sqrt d \bigg)^2} + 4e^{-\frac{1}{4}\left(\sqrt{\frac{\epsilon^2N\lambda_{d}}{c_2}- d} - \sqrt{d} \right)^2} \nonumber \\
    &+ 2^{N+2}e^{-\frac{\epsilon^2M\lambda_{d}}{4608\delta_1\delta_2}} + 8e^{\log N -\frac{M\delta_1}{2N} - o\left(\frac{M\delta_1}{2N}\right)} + 8Ne^{-\frac{N\delta_3}{2}} \nonumber \\
    &+ 4N\left(\frac{\epsilon^2\lambda_{d}}{2304d\delta_1}e^{1-\frac{\epsilon^2\lambda_{d}}{2304d\delta_1}}\right)^{d/2} + 4N\left( \frac{\epsilon^2\lambda_{d}}{576d\delta_3} e^{1-\frac{\epsilon^2\lambda_{d}}{576d\delta_3}} \right)^{d/2} \nonumber\\
    &+ 2^{N+2}e^{-\frac{\epsilon^2M\lambda_{d}}{1152\delta_0}} + 4N\left( \frac{\delta_0}{d} e^{1 - \frac{\delta_0}{d}} \right)^{d/2} + 4N\left(\frac{\delta_2}{d}e^{1-\frac{\delta_2}{d}}\right)^{d/2}.
\end{align*}
Setting $M = \Omega(\frac{dN\log^3N}{\lambda_d\epsilon^2})$, $\delta_0 = d\log^2N$, $\delta_1 = 4\lambda_d\epsilon^2/d\log^2N$, $\delta_2 = d\log^2N$ and $\delta_3 = \epsilon^2\lambda_d/1152d\log N$, the bound reduces to $ \Pr\left(\norm{\hat{\bm{\beta}} - c_1\bm{\beta}} \geq \epsilon \right) \leq c_6N$ $\max\left\{\left(\frac{\sqrt{6\log N}}{N}\right)^d, e^{-\frac{N\epsilon^2\lambda_d}{c_7d\log N}} \right\}$ for $N > \frac{c_8d\log^2N}{\epsilon^2\lambda_d},$ where $c_6, c_7, c_8 > 0$ are absolute constants.
We derive this in \ref{app:choosing_delta}. \qed
\section{Experiments}
\label{sec:experiments}
\noindent\textbf{Synthetic Experiment Setup.}
\highlight{To support our theoretical findings, we evaluate\footnote{Code available online: \url{https://git.io/Jkbk1}} the estimator given by Eq.~\eqref{eq:estimator} with a synthetic dataset as follows:
We sample the true parameter $\bm \beta \in \R^d$ from $\mathcal{N}(\bm 0, 10\bm I)$.
We sample $\bm \mu\in \R^d$ uniformly at random from $[-5, 5]^{d}$.
To assess the impact of minimum eigenvalue of $\bm{\Sigma} \in \R^{d\times d}$ on estimator accuracy, we generate $\bm \Sigma$ as follows. 
We generate a random orthonormal basis of $\R^d$, and choose a smallest eigenvalue $\lambda_d\in (0,1]$. 
We then a construct $\bm \Sigma$ whose eigenvectors are the selected orthonormal basis, and $d$ eigenvalues equidistributed in $[\lambda_d, 1]$. 
We treat $\lambda_d$ as a tunable parameter.
Each feature vector $\bm{x}_i \in \R^{d}, i\in[2N]$, is independently sampled from $\mathcal{N}(\bm \mu, \bm \Sigma)$.
We sample pairs $(I_m, J_m), m \in [M]$, uniformly at random from $[N]\times[N]$.
Noisy labels $Y_m$ are sampled using Eq.~\eqref{eq:conditional} where $f(x) = (1+e^{-\alpha x})^{-1}$ and $0\! <\! \alpha\! <\! \infty$.
By adjusting $\alpha$, we choose the fraction $p_e \in [0, 1]$ of $M$ comparisons that are flipped, \emph{i.e.} are incorrect.
We repeat all experiments with 10 random generations of parameters $\bm \beta$, $\bm \mu$, $\bm \Sigma$.
We estimate quantities $c_1$ and $p_e$ numerically (see \ref{app:est_c_p}).
}


\noindent\textbf{Metrics.}
\highlight{
We measure the performance of the estimator $\hat{\bm{\beta}}$ with two metrics.
}
\highlight{
The first error metric is $\norm{\bm{\hat\beta} - c_1\bm\beta}$.
The second metric is 
\begin{align}
    \angle(\bm{\hat \beta}, \bm\beta) = \cos^{-1}{(\bm{\hat \beta}^\top\bm{\beta}/||\bm{\hat \beta}||||\bm{\beta}||)},
\end{align}
\emph{i.e.}, the angle between $\bm{\hat\beta}$ and $\bm\beta$.
We report both the average and standard deviation across different
random generations.
}

\begin{figure}[t]
\centering
\begin{subfigure}[b]{0.48\textwidth}
   \includegraphics[width=1\linewidth]{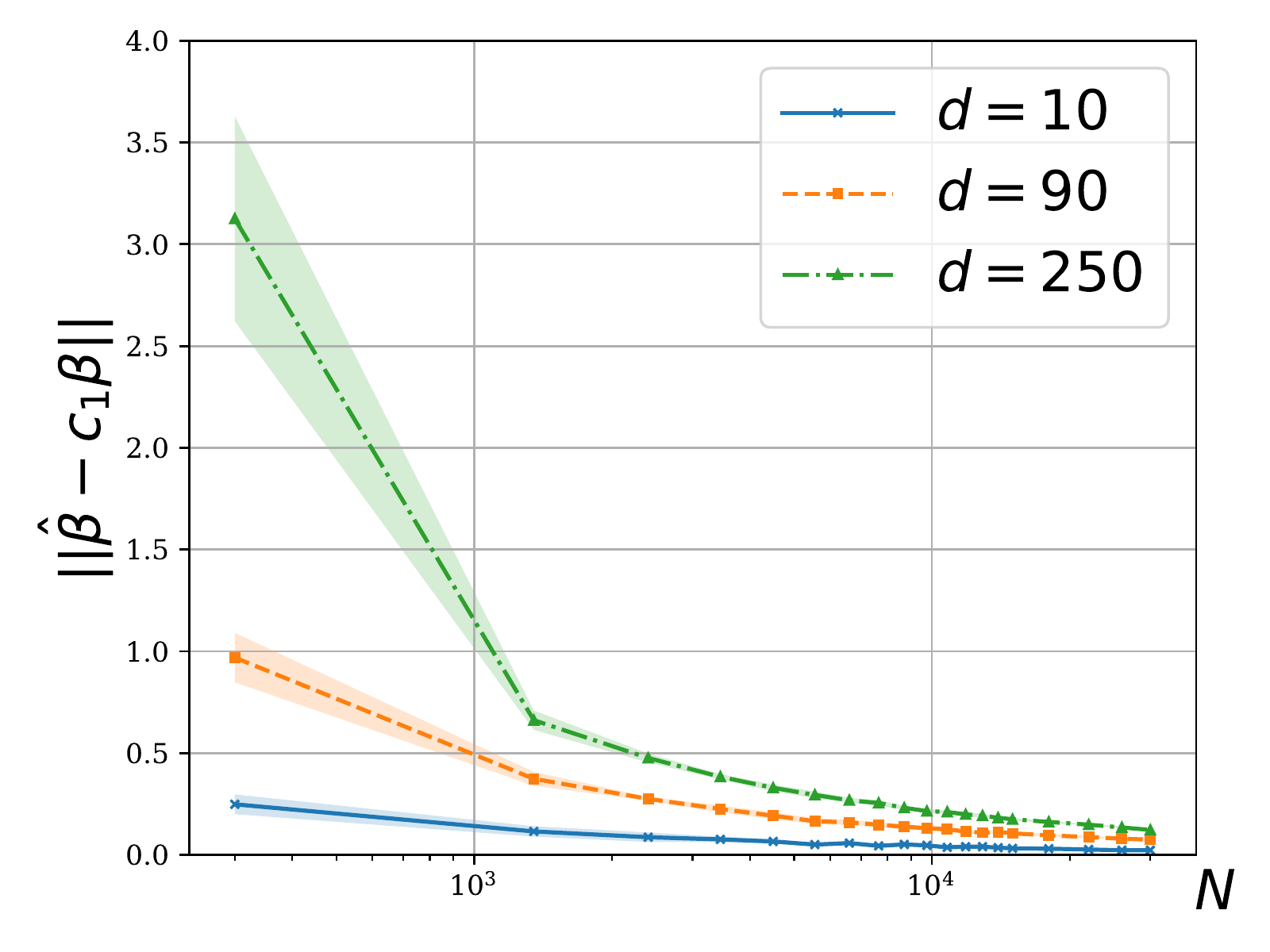}
   \caption{$\lambda_d = 1, p_e=0.2$}
   \label{fig:mbN-norm} 
\end{subfigure}
\begin{subfigure}[b]{0.48\textwidth}
   \includegraphics[width=1\linewidth]{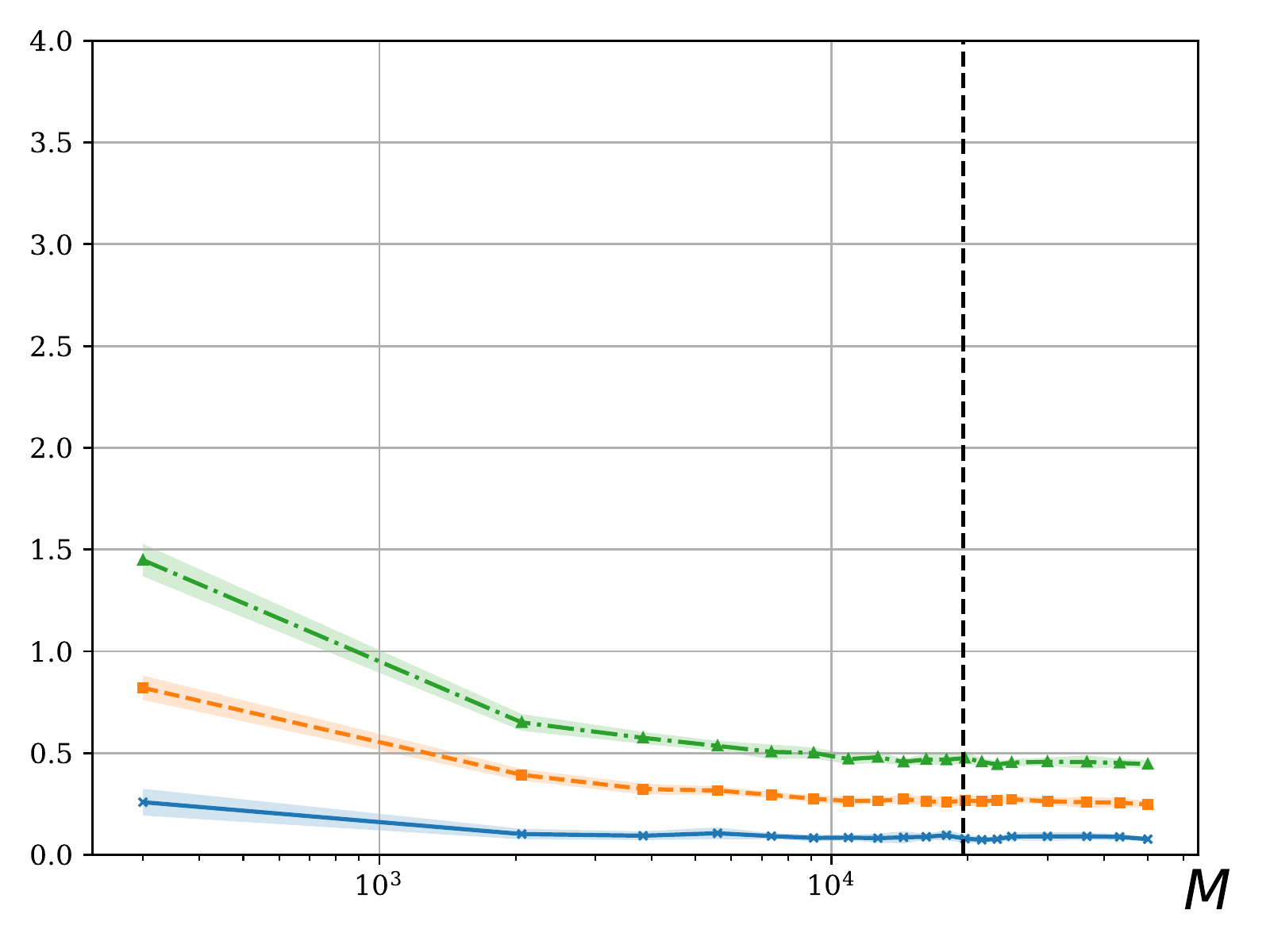}
   \caption{$\lambda_d = 1, p_e=0.2$}
   \label{fig:mbM-norm}
\end{subfigure}
\caption[Convergence of the estimator and real data.]{(a) The error of the estimator given by Eq.~\eqref{eq:estimator} indeed reduces as $N$ increases when $M=\lceil N\log N \rceil$ and we see that the estimator is converging to $c_1\bm\beta$. (b) The error reduces when $M$ increases while $N$ is kept constant, however the decay is insignificant after $M=N\log N$, which is denoted with the black dashed line. This agrees with our theory that $M = \tilde\Omega(N)$. The shaded area is the standard
deviation.}
\end{figure}
\begin{figure}[t]
\centering
\begin{subfigure}[b]{0.31\textwidth}
   \includegraphics[width=1\linewidth]{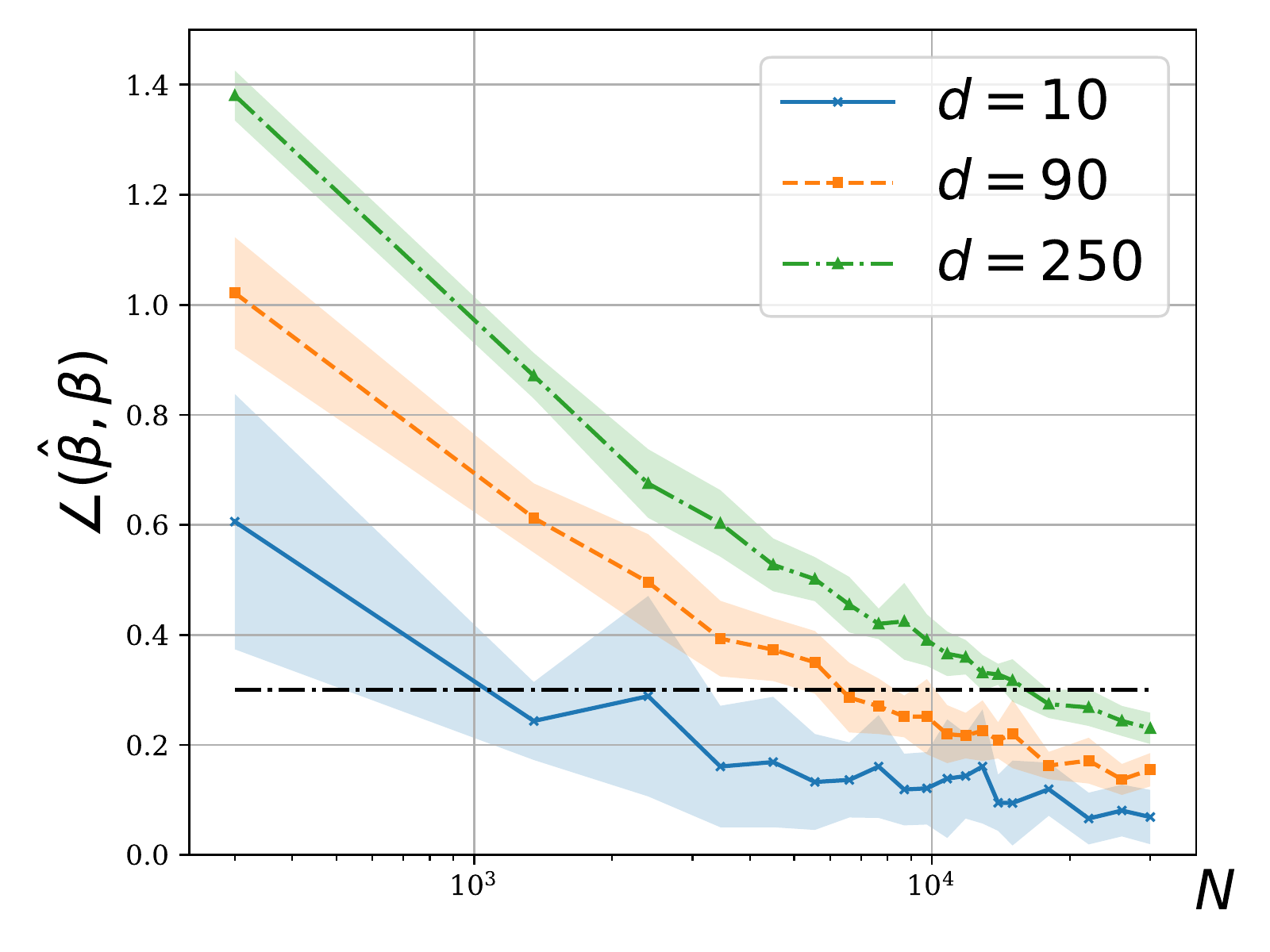}
   \caption{$p_e=0$}
   \label{fig:mbN1} 
\end{subfigure}
\begin{subfigure}[b]{0.31\textwidth}
   \includegraphics[width=1\linewidth]{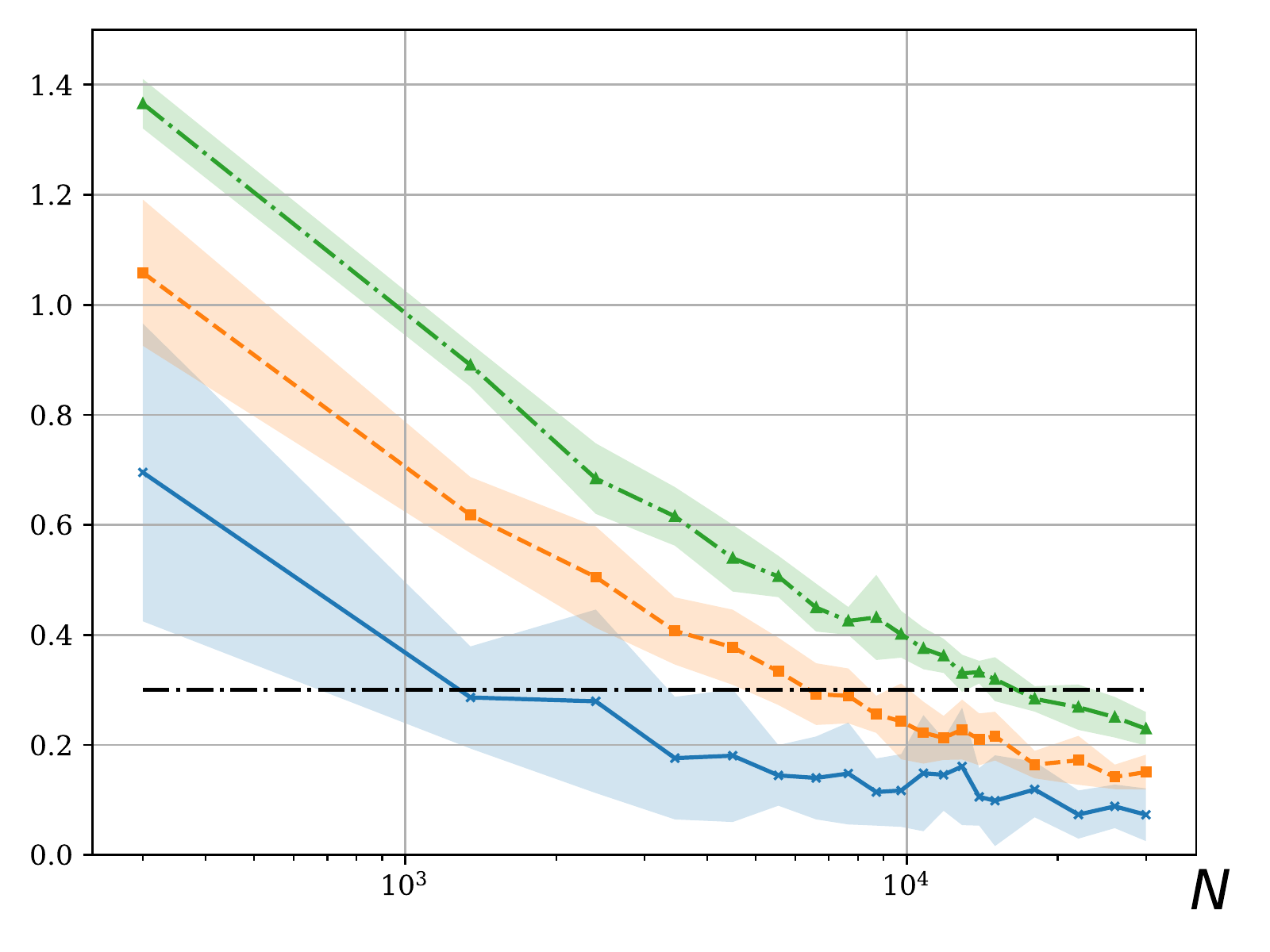}
   \caption{$p_e=0.2$}
   \label{fig:mbN2}
\end{subfigure}
\begin{subfigure}[b]{0.31\textwidth}
   \includegraphics[width=1\linewidth]{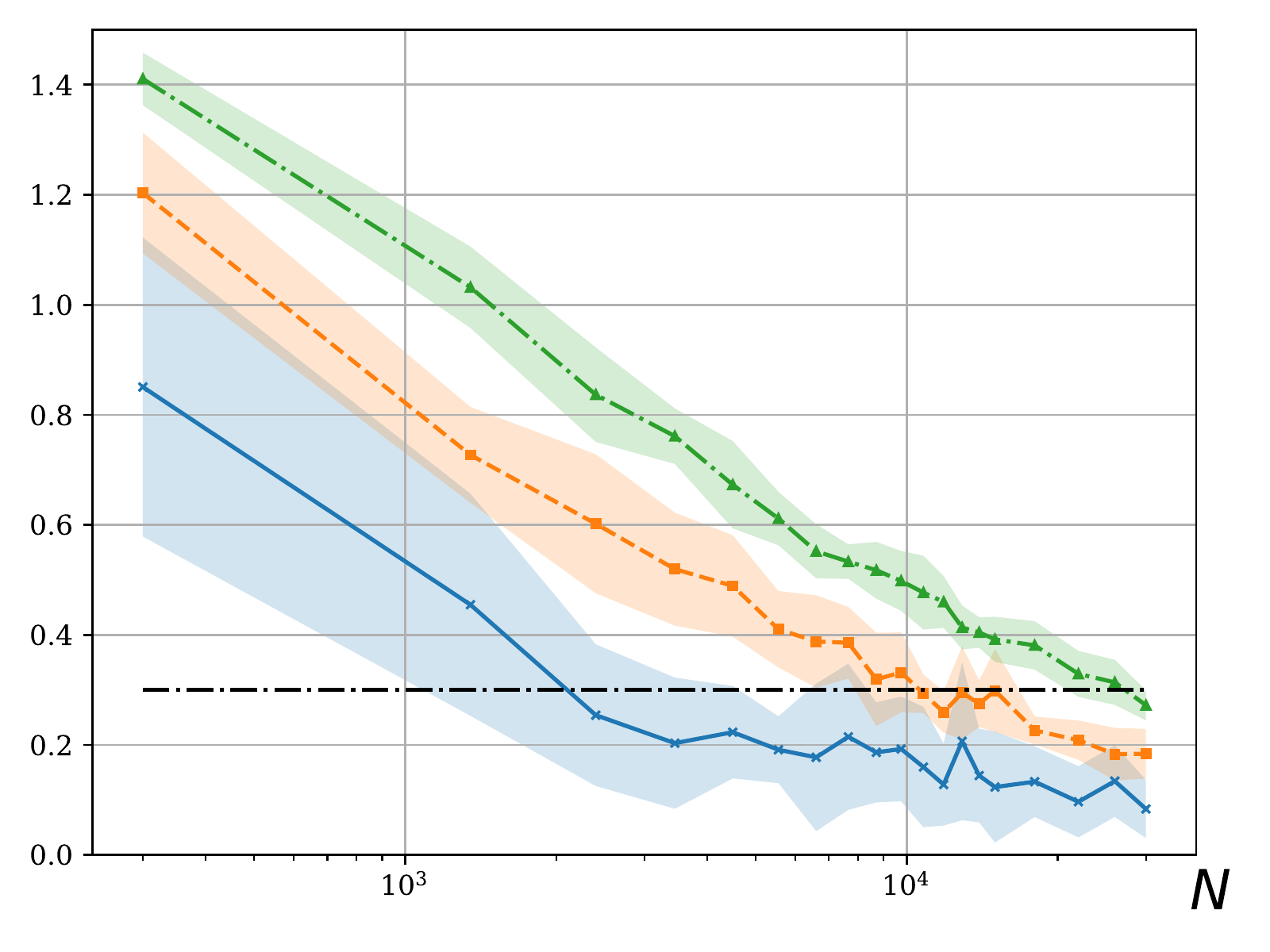}
   \caption{$p_e=0.4$}
   \label{fig:mbN3}
\end{subfigure}
\caption[Dependence on $N$.]{The angle between the true parameter $\bm\beta$ and the estimator $\bm{\hat\beta}$, plotted against $N$ for different error probabilities $p_e$ when $M=\lceil N\log N \rceil$ and $\lambda_d = 1/200$. (a) The noiseless case. (b) The case where $20\%$ of the labels are flipped. (c) The case where $40\%$ of the labels are flipped. Even though noise increases error, increasing $N$ allows the estimator to reduce the error arbitrarily. This shows that the estimator $\bm{\hat \beta}$ is able to recover the direction of the true parameter $\bm\beta$ as $N$ increases. The shaded area is the standard deviation.}
\label{fig:mbN}
\end{figure}
\begin{figure}[t]
\centering
\begin{subfigure}[b]{0.31\textwidth}
   \includegraphics[width=1\linewidth]{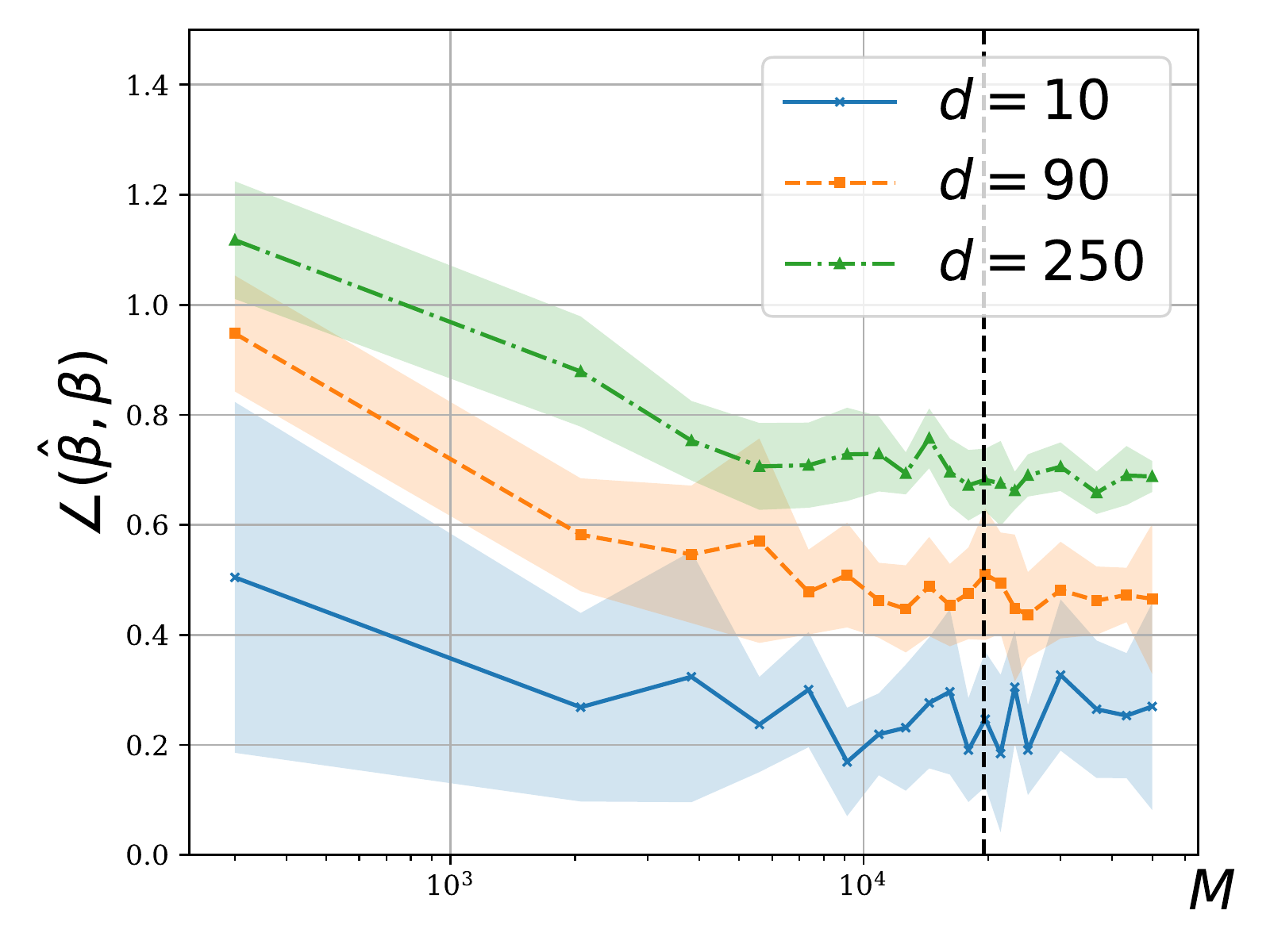}
  \caption{$p_e=0$}
   \label{fig:mbM1} 
\end{subfigure}
\begin{subfigure}[b]{0.31\textwidth}
   \includegraphics[width=1\linewidth]{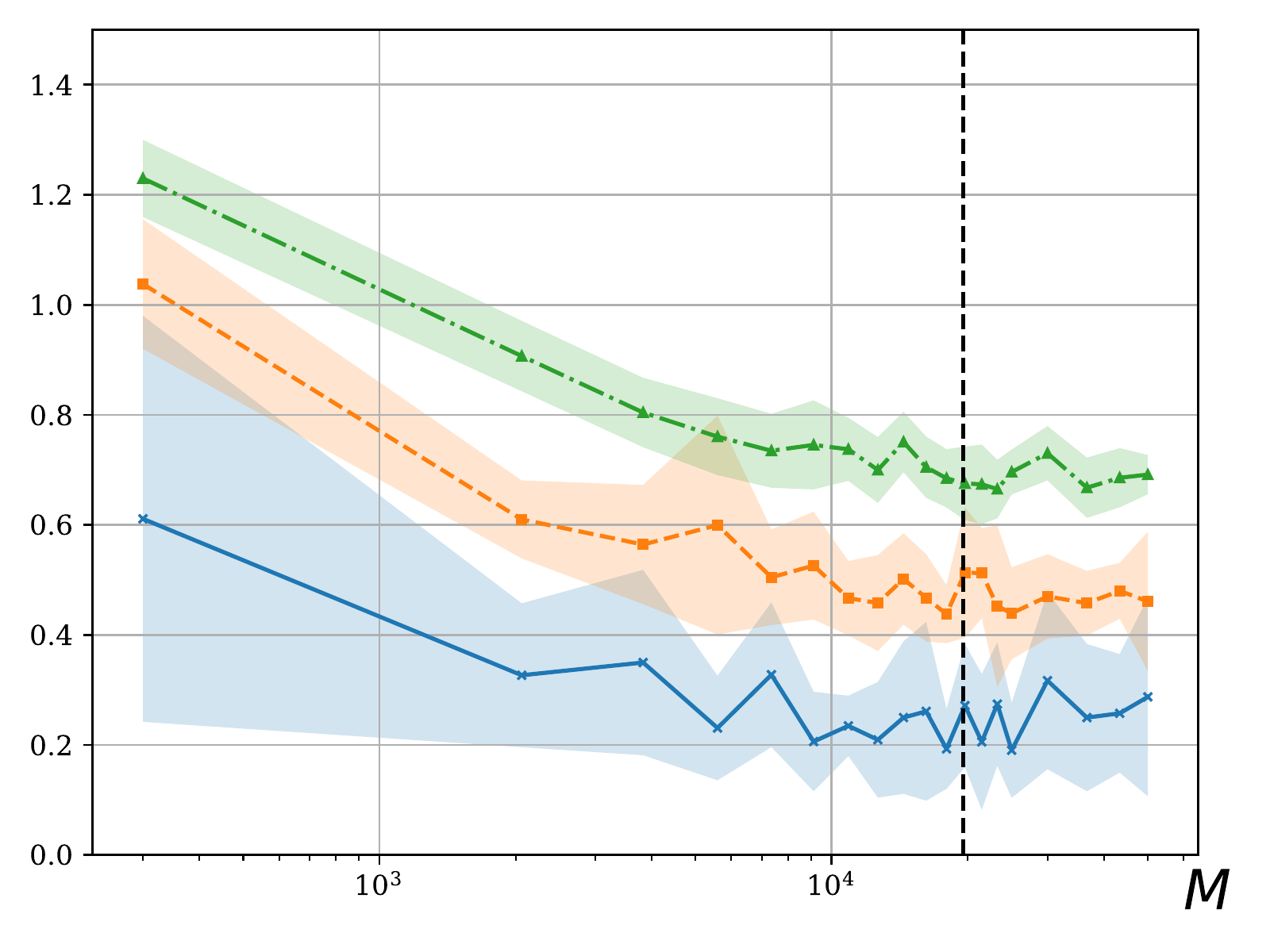}
   \caption{$p_e=0.2$}
   \label{fig:mbM2}
\end{subfigure}
\begin{subfigure}[b]{0.31\textwidth}
   \includegraphics[width=1\linewidth]{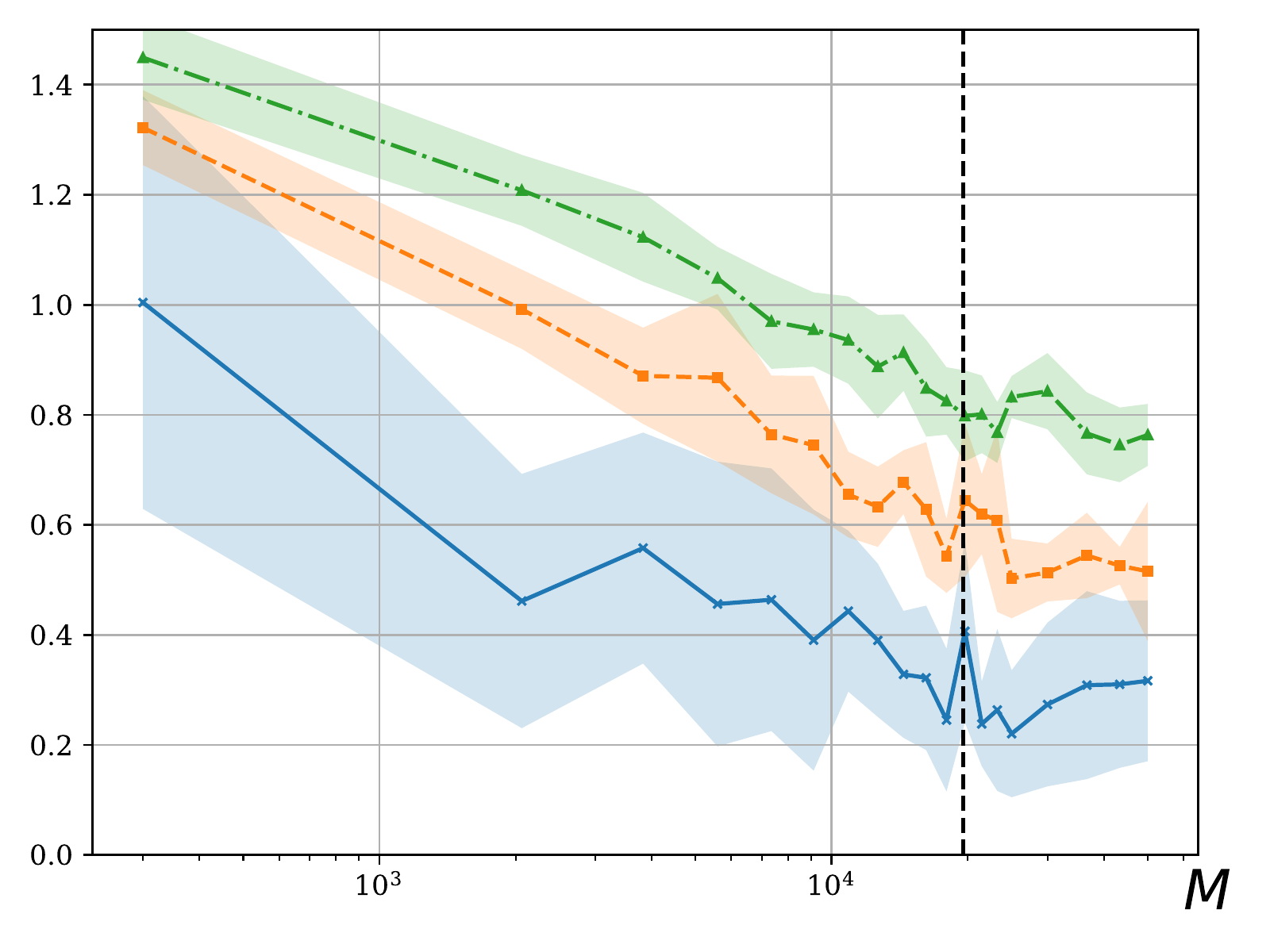}
   \caption{$p_e=0.4$}
   \label{fig:mbM3}
\end{subfigure}
\caption[Dependence on $M$.]{The angle between the true parameter $\bm\beta$ and the estimator $\bm{\hat\beta}$, plotted against $M$ for different error probabilities $p_e$ when $N=2.5\times10^3$ and $\lambda_d = 1/200$. (a) The noiseless case. (b) The case where $20\%$ of the labels are flipped. (c) The case where $40\%$ of the labels are flipped. Increasing $M$ reduces the error for all $p_e$ values. However, once $M=N\log N$ which is denoted with the black dashed line, the reduction is insignificant. This is due to the fact that a higher $N$ is required for smaller $\epsilon$ as $N = \Omega(d/\lambda_d\epsilon^2)$, \emph{i.e.} $N$ needs to scale inversely quadratic with $\epsilon$. This supports our theory that the estimator $\bm{\hat \beta}$ does better as $M$ increases; however, for arbitrarily small $\epsilon$, $N$ needs to increase too. The shaded area is the standard deviation.}
\label{fig:mbM}
\end{figure}
\begin{figure}[t]
\centering
\begin{subfigure}[b]{0.31\textwidth}
   \includegraphics[width=1\linewidth]{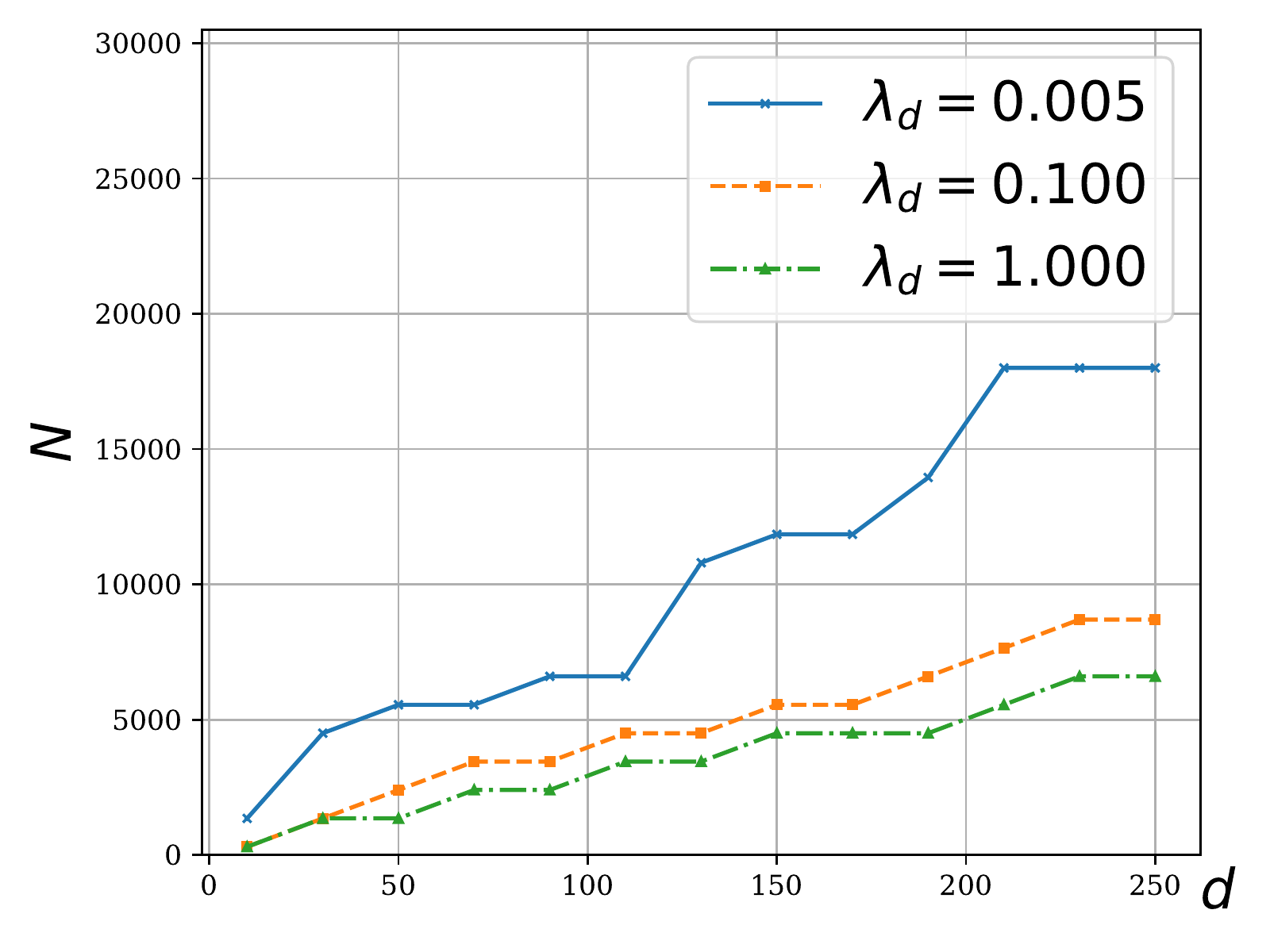}
   \caption{$p_e=0$}
   \label{fig:dbNl1}
\end{subfigure}
\begin{subfigure}[b]{0.31\textwidth}
   \includegraphics[width=1\linewidth]{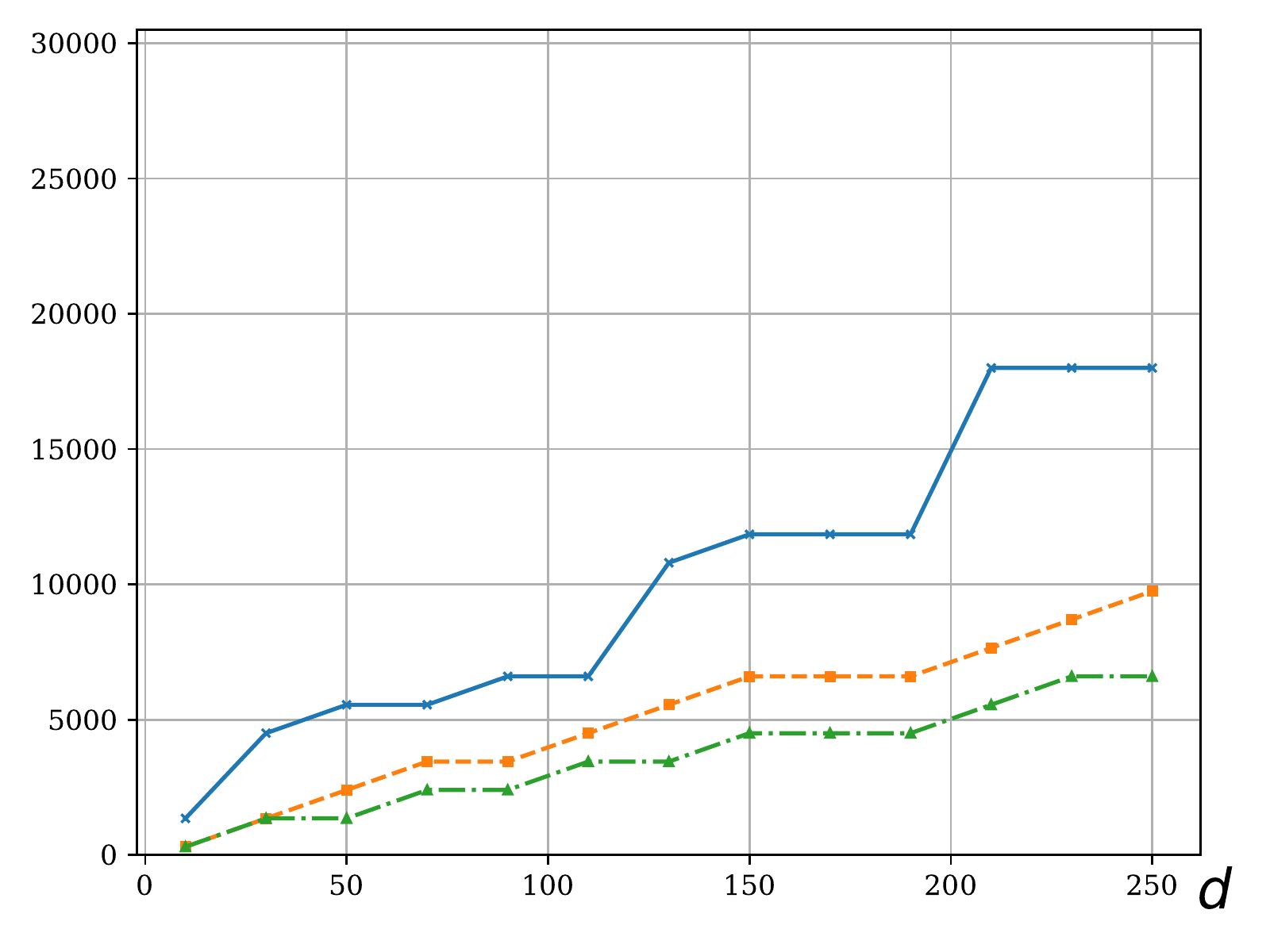}
   \caption{$p_e=0.2$}
   \label{fig:dbNl2}
\end{subfigure}
\begin{subfigure}[b]{0.31\textwidth}
   \includegraphics[width=1\linewidth]{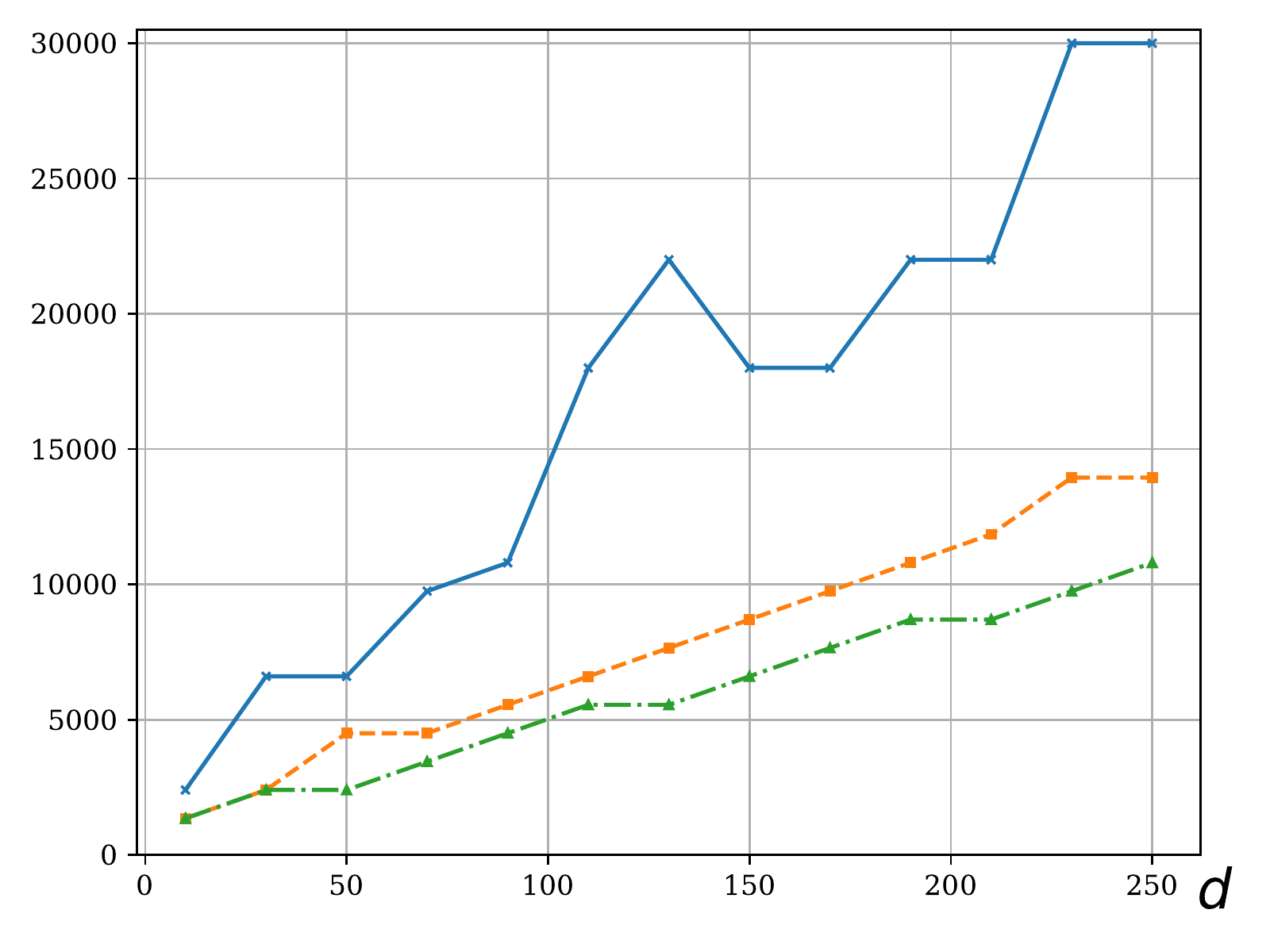}
   \caption{$p_e=0.4$}
   \label{fig:dbNl3}
\end{subfigure}
\caption[Dependence on $d$.]{Minimum $N$ that achieves $\angle(\bm{\hat \beta}, \bm\beta) \leq 0.3$ plotted against dimensionality $d$ for different values of the minimum eigenvalue $\lambda_d$ of the feature covariance and error probabilities $p_e$. (a) The noiseless case. (b) The case where $20\%$ of the labels are flipped. (c) The case where $40\%$ of the labels are flipped. As the probability of error and the condition number of the feature covariance increases, we require more samples. Crucially, we observe the linear dependence of $N$ to $d$ and this supports that $N=\Omega(d/\lambda_d\epsilon^2)$.}
\label{fig:Nbd}
\end{figure}
\begin{figure}[t]
\centering
\begin{subfigure}[b]{0.31\textwidth}
   \includegraphics[width=1\linewidth]{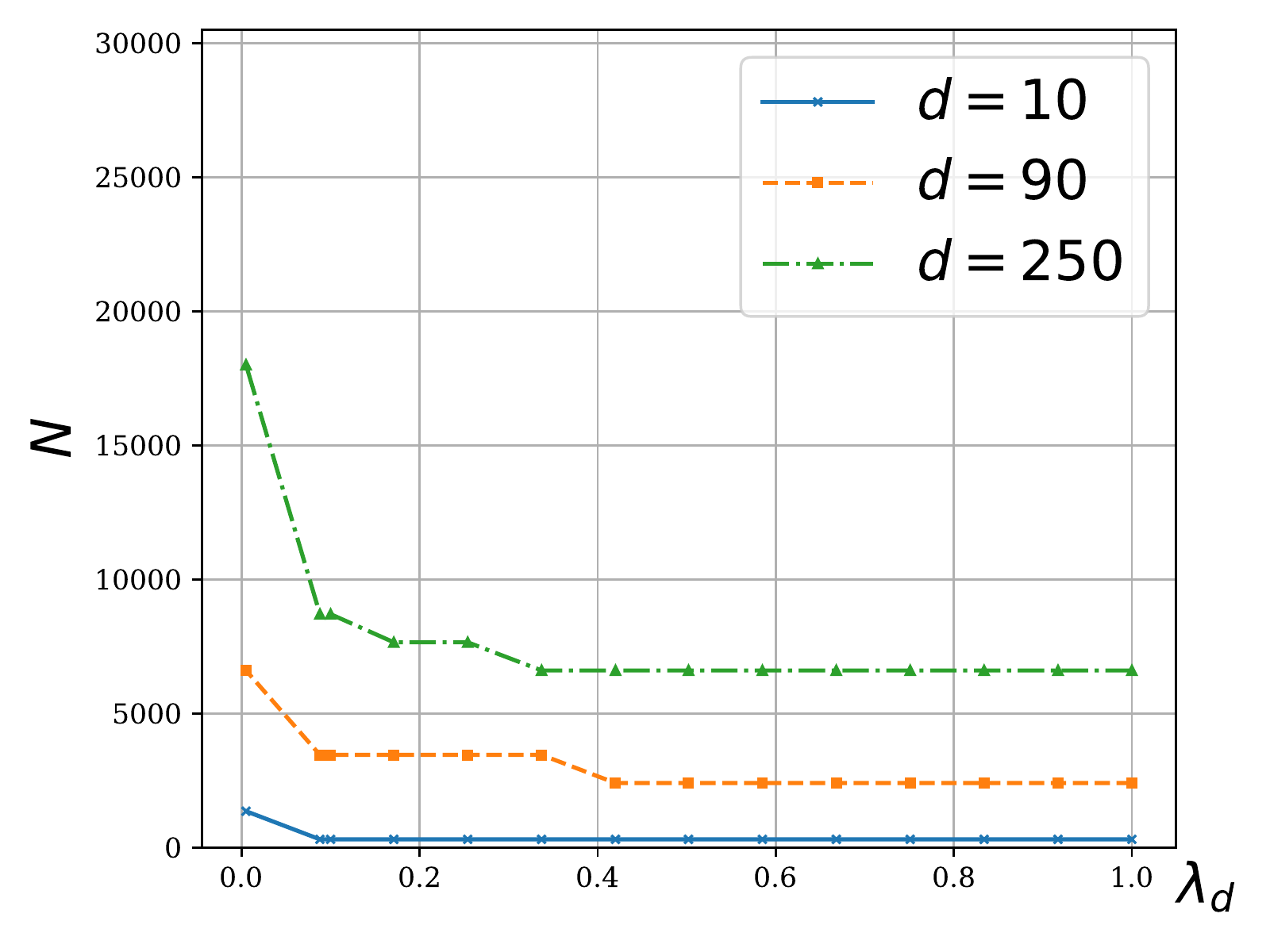}
   \caption{$p_e=0$}
   \label{fig:1}
\end{subfigure}
\begin{subfigure}[b]{0.31\textwidth}
   \includegraphics[width=1\linewidth]{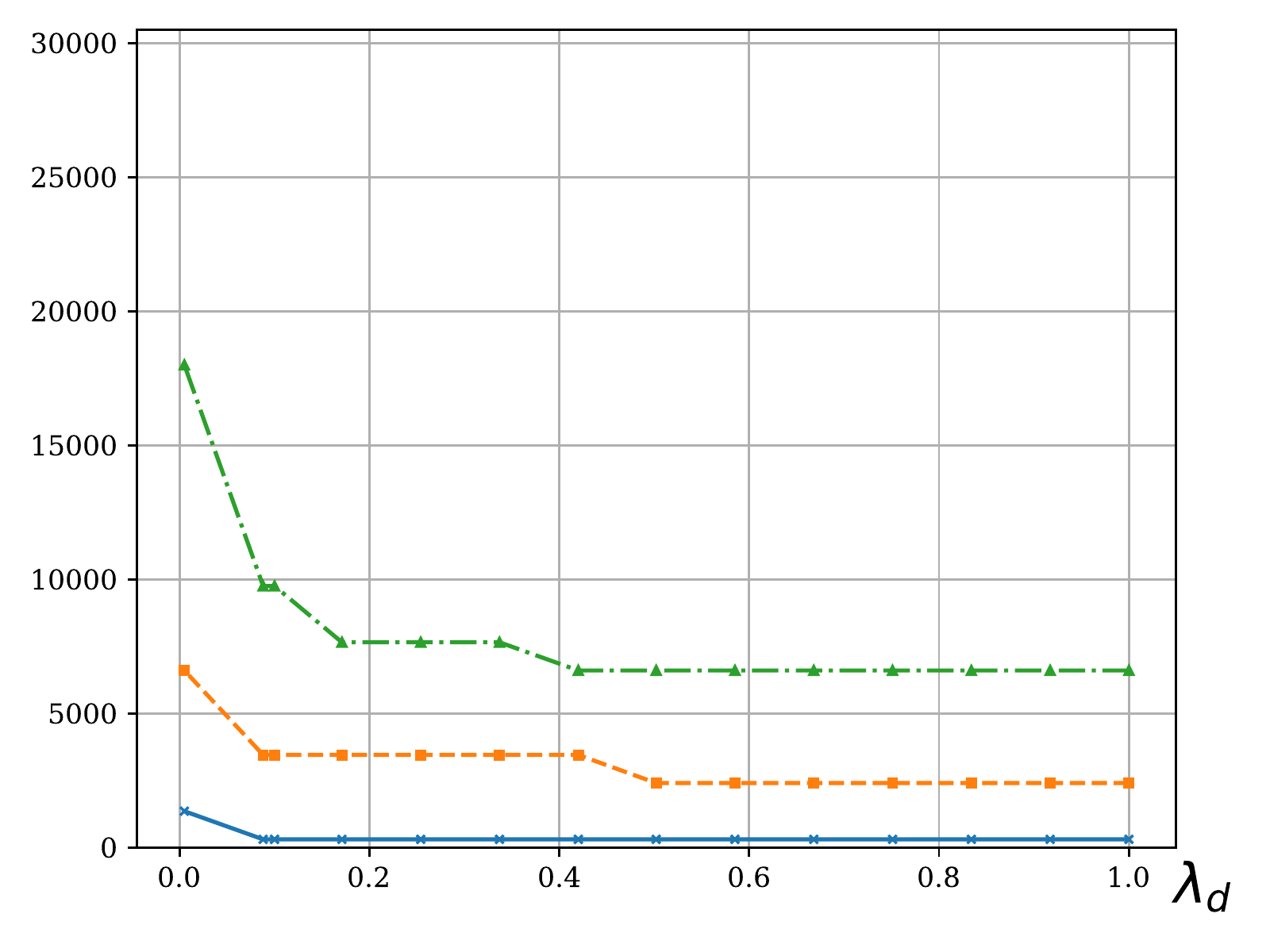}
   \caption{$p_e=0.2$}
   \label{fig:2}
\end{subfigure}
\begin{subfigure}[b]{0.31\textwidth}
   \includegraphics[width=1\linewidth]{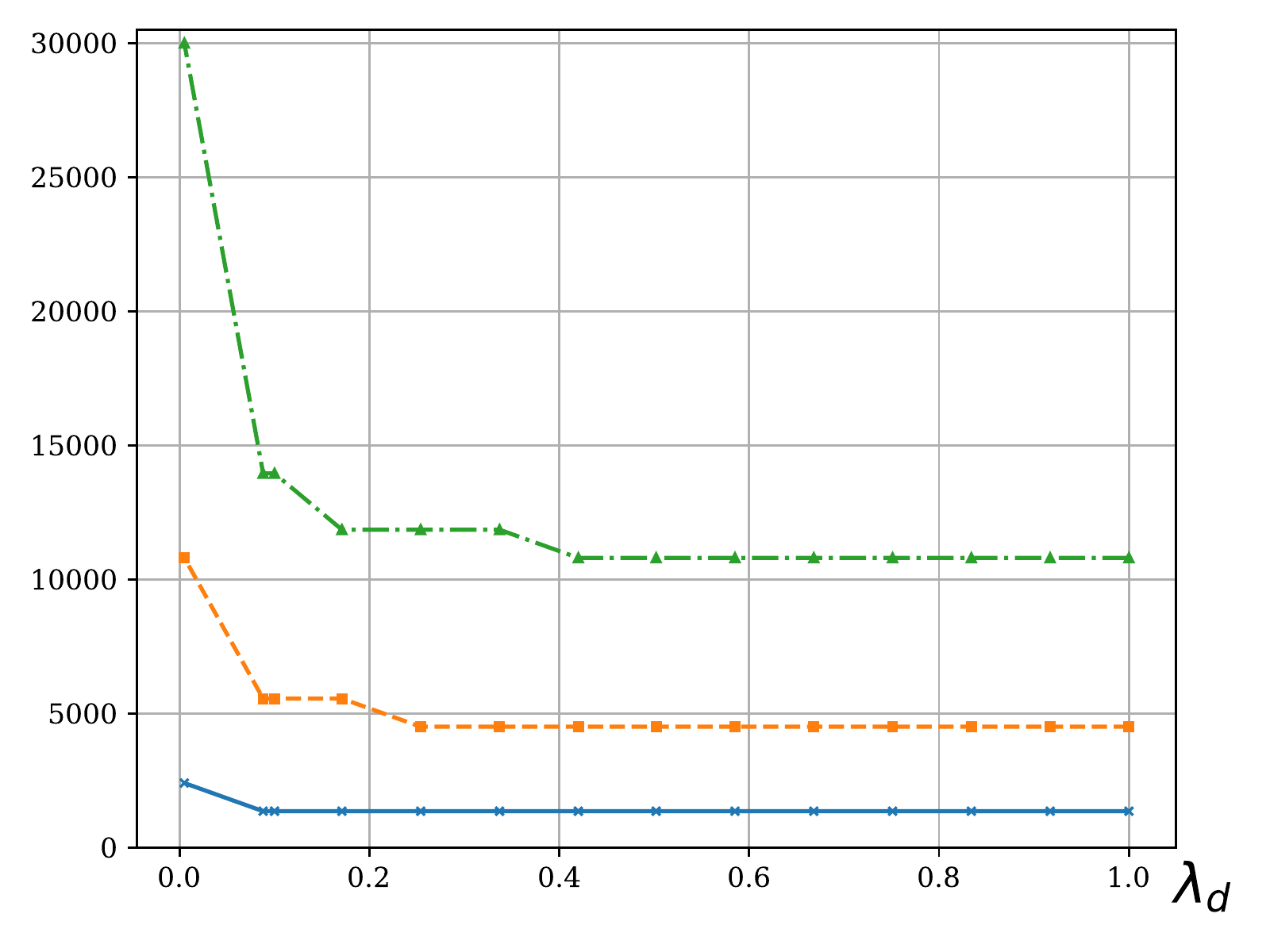}
   \caption{$p_e=0.4$}
   \label{fig:3}
\end{subfigure}
\caption[Dependence on $\lambda_d$.]{Minimum $N$ that achieves $\angle(\bm{\hat \beta}, \bm\beta) \leq 0.3$ plotted against the minimum eigenvalue $\lambda_d$ of the feature covariance for different dimensionality $d$ and probability of error $p_e$. (a) The noiseless case. (b) The case where $20\%$ of the labels are flipped. (c) The case where $40\%$ of the labels are flipped. We observe that increasing error increases the required $N$ and the inversely proportional dependence of $N$ to $\lambda_d$ is observed.}
\label{fig:Nbld}
\end{figure}

\noindent\textbf{Convergence.}
\highlight{
In order to investigate the convergence of $\bm{\hat\beta}$, we vary the number of samples $N$ in $[300, 3\times10^4]$, dimensionality $d\in\{10, 90, 250\}$ while we set $M=\lceil N\log N\rceil$, and $\lambda_d=1$. 
We select $\alpha$ so that $p_e = 0.2$.
In Fig.~\ref{fig:mbN-norm}, we plot the error $\norm{\bm{\hat\beta} - c_1\bm\beta}$ as a function of the dataset size $N$. 
We observe that for each $d$, the error decreases as $N$ increases and $\bm{\hat\beta}$ indeed converges to $c_1\bm\beta$.
In Fig.~\ref{fig:mbM-norm}, we vary $M$ in $[300, 5\times10^4]$ and $d\in\{10, 90, 250\}$ while we set $N=2.5\times10^3$.
We observe that increasing $M$ reduces the error.
However, the reduction in error is insignificant after $M=N\log N$, which is denoted with the black dashed line in Fig.~\ref{fig:mbM-norm}. 
This is consistent with the bound in Theorem~\ref{th:bound}, which anticipates that $M$ is polylogarithmic in $N$.
}

\noindent\textbf{Dependence on $N$.}
\highlight{
We investigate the required number of samples $N$ to attain $\angle(\hat{\bm\beta}, \bm\beta) = 0.3$.
For each $d\in\{10, 90, 250\}$ and $p_e\in\{0, 0.2, 0.4\}$ we vary $N \in [300, 3\times10^4]$, setting $M=\lceil{N\log N}\rceil$, and $\lambda_d = 1/200$.
In Fig.~\ref{fig:mbN}, we plot the error $\angle(\bm{\hat \beta}, \bm\beta)$ versus the dataset size $N$ under different noise levels $p_e$.
We observe that as $N$ increases, $\bm{\hat\beta}$ indeed achieves $\angle(\hat{\bm\beta}, \bm\beta) = 0.3$ for all $d$, while the error increase with $d$.
This implies that irrespective of the noise level and the corresponding $c_1$ value, the estimator $\bm{\hat \beta}$ is able to recover the direction of $\bm\beta$ as $N$ increases.
}

\noindent\textbf{Dependence on $M$.}
\highlight{
We repeat our experiments on the impact of $M$, this time focusing on the angle metric and varying $p_e$. 
We fix $N=2.5\times10^3$, $\lambda_d = 1/200$ and vary $M\in [300, 5\times 10^4]$, $d \in \{10, 90, 250\}$, and $p_e\in\{0, 0.2, 0.4\}$.
Fig.~\ref{fig:mbM} plots $\angle(\bm{\hat \beta}, \bm\beta)$ versus $M$ under different noise levels.
These plots show that the benefit of increasing $M$ again diminishes beyond $M=N\log N$ for all $p_e\in\{0, 0.2, 0.4\}$, represented by the dashed black line. 
This is again consistent with Theorem~\ref{th:bound}. 
}

\noindent\textbf{Dependence on $d$.}
\highlight{
In Fig.~\ref{fig:Nbd}, we plot the smallest $N$ that achieves $\angle(\hat{\bm\beta}, \bm\beta) \leq 0.3$ while we vary $d \in [10, 250]$, $\lambda_d \in \{1/200, 0.1, 1\}$ and $p_e \in \{0, 0.2, 0.4\}$. 
We observe that the required $N$ increases linearly in $d$. 
This is consistent with the linear dependence of $N$ on $d$ anticipated by Theorem~\ref{th:bound}.
}

\noindent\textbf{Dependence on $\lambda_d$.}
\highlight{
In Fig.~\ref{fig:Nbld}, we investigate the effect of $\lambda_d$ on the smallest $N$ that achieves $\angle(\hat{\bm\beta}, \bm\beta) \leq 0.3$ for each $p_e \in \{0, 0.2, 0.4\}$ and $d \in \{10, 90, 250\}$.
We observe the inversely proportional relationship between $\lambda_d$ and $N$, which is consistent with the $N = \Omega(d/\lambda_d\epsilon^2)$ requirement implied by Theorem~\ref{th:bound}.
}


\section{Conclusion}
\label{sec:conc}
Our results suggest that learning \highlight{parameters of a linear preference model} from comparisons can come with guarantees, despite the lack of independence between comparisons. 
Though our bound on $N$ is tight (as the number of samples cannot be lower than $d$), our experimental results suggest that the bound on $M$ could be sharpened; lower bounds on this quantity also remain open. 
Given the practical benefit of learning from comparisons over datasets with small samples, understanding the inherent trade-offs between samples, comparisons, and label variance is a very interesting direction to explore. 
In particular, determining regimes in which learning from comparisons outperforms learning from categorical labels is an important question; our work can serve as a starting point for exploring this formally.
\section*{Acknowledgements}
Our work is supported by NIH (R01EY019474), NSF (SCH-1622542 at MGH; SCH-1622536 at Northeastern; SCH-1622679 at OHSU), and by unrestricted departmental funding from Research to Prevent Blindness (OHSU).
\section*{References}
\bibliographystyle{elsarticle-num-names}
\bibliography{0_main}
\section*{Biographies}
\textbf{Berkan Kad\i o\u{g}lu} \highlight{is a PhD candidate at the Department of Electrical and Computer Engineering, Northeastern University, Boston, MA, since 2017.
He received his B.Sc. (2017) in Electrical and Electronics Engineering from Bilkent University, Ankara, Turkey.
His research interests include machine learning, statistical learning, deep learning with a focus on learning from comparisons.}

\textbf{Dr.~Peng Tian} \highlight{is a postdoctoral research associate at Northeastern University, Boston, MA. He received his Ph.D.~and M.S.~degree of Electrical Engineering in 2020 and 2017 respectively, from Northeastern University, Boston, MA. He obtained his B.E.~degree of Optoelectronic Information Engineering in 2015 from Huazhong University of Science \& and Technology, Wuhan, China. His research instests span learning from comparisons, deep learning and AI for healthcare.}

\textbf{Dr.~Jennifer G. Dy} \highlight{is a professor at the Department of Electrical and Computer Engineering, Northeastern University, Boston, MA, since 2002. She obtained her MS and PhD in 1997 and 2001 respectively from the School of Electrical and Computer Engineering, Purdue University, West Lafayette, IN, and her BS degree in 1993 from the Department of Electrical Engineering, University of the Philippines. She received an NSF Career award in 2004. She is an editorial board member for the journal, Machine Learning since 2004, publications chair for the International Conference on Machine Learning in 2004, and program committee member for ICML, ACM SIGKDD, AAAI, and SIAM SDM. Her research interests include Machine Learning, Data Mining, Statistical Pattern Recognition, and Computer Vision.}

\textbf{Dr.~Deniz Erdo\u{g}mus} \highlight{graduated with B.S. in Electrical \& Electronics Engineering (EEE), and the B.S. in Mathematics in 1997, and M.S.~in EEE in 1999 from the Middle East Technical University, Ankara, Turkey. He received his Ph.D.~in Electrical \& Computer Engineering from the University of Florida in 2002, where he stayed as a postdoctoral research associate until 2004. Prior to joining the Northeastern faculty in 2008, he held an Assistant Professor position at the Oregon Health and Science University. His expertise is in information theoretic and nonparametric machine learning and adaptive signal processing, specifically focusing on cognitive signal processing including brain interfaces and assistive technologies. Deniz has been serving as an associate editor IEEE Transactions on Signal Processing, Transactions on Neural Networks, Signal Processing Letters, and Elsevier Neurocomputing. He is a member of the IEEE-SPS Machine Learning for Signal Processing Technical Committee.}

\textbf{Dr.~Stratis Ioannidis} \highlight{is an associate professor in the Electrical and Computer Engineering Department of Northeastern University, in Boston, MA, where he also holds a courtesy appointment with the College of Computer and Information Science. He received his B.Sc.~(2002) in Electrical and Computer Engineering from the National Technical University of Athens, Greece, and his M.Sc. (2004) and Ph.D.~(2009) in Computer Science from the University of Toronto, Canada. Prior to joining Northeastern, he was a research scientist at the Technicolor research centers in Paris, France, and Palo Alto, CA, as well as at Yahoo Labs in Sunnyvale, CA. He is the recipient of an NSF CAREER Award, a Google Faculty Research Award, a Facebook Research Award, a Martin W. Essigmann Outstanding Teaching Award, and several best paper awards. His research interests span machine learning, distributed systems, networking, optimization, and privacy. }

\newpage
\appendix
\section{Proof of Lemma~\ref{lem:unbiased}}
\label{app:unbiased}
The expected value of the estimator is
\begin{align*}
    \E[\hat{\bm{\beta}}] &= \E\left[\frac{1}{M}\sum_{m = 1}^M Y_m \bm{\hat\Sigma}^{-1} (\bm X_{I_m} - \bm X_{J_m})\right] ~~~ \text{by Eq.\eqref{eq:estimator},} \nonumber \\
    &= \E\left[Y_m \bm{\hat\Sigma}^{-1} (\bm X_{I_m} - \bm X_{J_m})\right] \nonumber \\
    &= \E\left[\bm{\hat\Sigma}^{-1}\right] \E\left[ Y_m(\bm X_{I_m} - \bm X_{J_m}) \right] ~~~ \text{by $\bm{\hat\Sigma}^{-1} \independent Y_m, \bm X_{I_m}, \bm X_{J_m}$,} \nonumber \\
    &=\bm{\Sigma}^{-1} \E\left[ Y_m(\bm X_{I_m} - \bm X_{J_m}) \right] ~~~ \text{by \cite{hartlap2007your},} \nonumber \\
    &=\bm{\Sigma}^{-1}\E\left[(\bm X_{I_m} - \bm X_{J_m}) \E\left[Y_m |\bm X_{I_m} - \bm X_{J_m} \right] \right] \nonumber \\
    &=\bm{\Sigma}^{-1} \E\left[ (\bm X_{I_m} - \bm X_{J_m})\left( 2\fn\big(\bm\beta^\top(\bm X_{I_m} - \bm X_{J_m})\big) - 1 \right) \right] \nonumber \\
    &=\bm{\Sigma}^{-1} \text{Cov}\left[\bm X_{I_m} - \bm X_{J_m}, 2\fn\left(\bm \beta^T (\bm X_{I_m} - \bm X_{J_m})\right)-1\right] \nonumber \\
    &=2\bm{\Sigma}^{-1} \text{Cov}\left[\bm X_{I_m} \! - \! \bm X_{J_m}, \bm \beta^T (\bm X_{I_m}\! -\! \bm X_{J_m}) \right]\E\left[\fn'\left(\bm \beta^T (\bm X_{I_m} \!-\! \bm X_{J_m}) \right)\right] \nonumber \\
    &= 2\bm{\Sigma}^{-1}\E\left[ (\bm X_{I_m} - \bm X_{J_m})(\bm X_{I_m} - \bm X_{J_m})^T\right] \E\left[ \fn'\left( \bm{\beta}^T(\bm X_{I_m} - \bm X_{J_m}) \right) \right]\bm{\beta} \nonumber \\
    &= 4\bm{\Sigma}^{-1} \bm{\Sigma} \E\left[ \fn'\left( \bm{\beta}^T(\bm X_{I_m} - \bm X_{J_m}) \right) \right] \bm{\beta} = 4\E\left[ \fn'\left( \bm{\beta}^T(\bm{X}_{I_m} - \bm X_{J_m}) \right) \right]\bm{\beta} \nonumber \\ 
    &= c_1 \bm \beta,
\end{align*}
where the fourth to last line is by Lemma~\ref{lemma:stein} and $c_1 = 4\E\left[ \fn'\left( \bm{\beta}^T(\bm X_{I_m} - \bm X_{J_m}) \right) \right]$.
Note that $c_1$ is strictly positive as $f(x)$ is non-decreasing, and has limits $\lim_{x\rightarrow\infty}f(x) = 1$, $\lim_{x\rightarrow-\infty}f(x) = 0$. 
Therefore, there exists an $s \in \R$ at which $f'(s)>0$. 
By continuity, $f'(x)$ is therefore strictly positive in an interval around $s$. 
As a result, the integral in the expectation which defines $c_1$ is strictly positive. \qed
\section{Proof of Lemma~\ref{lem:2terms}}
\label{app:2terms}
We start by dividing the error into two symmetric terms.
We have
\begin{align*}
    \norm{\hat{\bm{\beta}} - c\bm{\beta}} &= \norm{\frac{1}{M}\sum_{m = 1}^M Y_m \bm{\hat \Sigma}^{-1}(\bm X_{I_m} - \bm X_{J_m}) - \E[Y_m \bm \Sigma^{-1} (\bm X_{I_m} - \bm X_{J_m})]} \nonumber\\
    &= \Bigg \| \frac{1}{M}\sum_{m = 1}^M Y_m \bm{\hat \Sigma}^{-1}(\bm{X}_{I_m} - \bm{\mu}) - \frac{1}{M}\sum_{m = 1}^M Y_m \bm{\hat \Sigma}^{-1}(\bm{X}_{J_m} - \bm{\mu}) \nonumber \\
    &- \E[Y_m\bm{\Sigma}^{-1}\bm (X_{I_m} - \bm{\mu})] + \E[Y_m \bm{\Sigma}^{-1}(\bm X_{J_m} - \bm{\mu})] \Bigg \| \nonumber \\
    &\leq \norm{\frac{1}{M}\sum_{m = 1}^M Y_m\bm{\hat \Sigma}^{-1} (\bm{X}_{I_m} - \bm{\mu}) - \E[Y_m\bm{\Sigma}^{-1}(\bm X_{I_m} - \bm{\mu})]} \nonumber \\
    &+ \norm{\frac{1}{M}\sum_{m = 1}^M Y_m \bm{\hat \Sigma}^{-1}(\bm{X}_{J_m} - \bm{\mu}) - \E[Y_m\bm{\Sigma}^{-1}(\bm X_{J_m} - \bm{\mu})]},
\end{align*}
where the last line is by a triangle inequality and the first line is by Eq.~\eqref{eq:estimator} and Lemma~\ref{lem:unbiased}. Then, we show that these terms are bounded by the same probability.
We start by defining $Y_m' = -Y_m$ and note that
\begin{align}
    \label{eq:y'}
    \Pr(Y_m' \mid \bm X_{I_m} = x, \bm X_{J_m} = y) &= \Pr(-Y_m \mid \bm X_{I_m} = x, \bm X_{J_m} = y) \nonumber \\
    &= 1 - \fn(\bm\beta^\top(x-y)) ~~~ \text{by Eq.~\eqref{eq:conditional},} \nonumber \\
    &= \fn(\bm\beta^\top(y-x)) ~~~ \text{by Eq.~\eqref{eq:properties}.}
\end{align}
Then, we have
\begin{align}
    \label{eq:pr_2terms}
    &\Pr\left(\norm{\frac{1}{M}\sum_{m = 1}^M Y_m \bm{\hat \Sigma}^{-1}(\bm{X}_{J_m} - \bm{\mu}) - \E[Y_m\bm{ \Sigma}^{-1}(\bm X_{J_m} - \bm{\mu})]} > \epsilon\right) \nonumber \\
    &= \Pr\left(\norm{\frac{1}{M}\sum_{m = 1}^M -Y_m \bm{\hat \Sigma}^{-1}(\bm{X}_{J_m}- \bm{\mu}) - \E[-Y_m\bm{\Sigma}^{-1}(\bm X_{J_m}- \bm{\mu})]} > \epsilon\right) \nonumber \\
    &= \Pr\left(\norm{\frac{1}{M}\sum_{m = 1}^M Y_m' \bm{\hat \Sigma}^{-1}(\bm{X}_{J_m}- \bm{\mu}) - \E[Y_m'\bm{\Sigma}^{-1}(\bm X_{J_m}- \bm{\mu})]} > \epsilon\right) \nonumber \\
    &= \Pr\left(\norm{\frac{1}{M}\sum_{m = 1}^M Y_m \bm{\hat \Sigma}^{-1}(\bm{X}_{I_m}- \bm{\mu}) - \E[Y_m\bm{ \Sigma}^{-1}(\bm X_{I_m}- \bm{\mu})]} > \epsilon\right),
\end{align}
where the last line is by Eq.~\eqref{eq:y'}. We use this result to show that
\begin{align}
    &\Pr\left(\norm{\hat{\bm{\beta}} - c\bm{\beta}} > \epsilon\right) \leq \Pr\Bigg(\norm{\frac{1}{M}\sum_{m = 1}^M Y_m\bm{\hat \Sigma}^{-1} (\bm{X}_{I_m} - \bm \mu) - \E[Y_m\bm{\Sigma}^{-1}(\bm X_{I_m} - \bm \mu)]} \nonumber \\
    &+ \norm{\frac{1}{M}\sum_{m = 1}^M Y_m \bm{\hat \Sigma}^{-1}(\bm{X}_{J_m} - \bm \mu) - \E[Y_m\bm{ \Sigma}^{-1}(\bm X_{J_m} - \bm \mu)]} > \epsilon\Bigg) \nonumber \\
    &\leq \Pr\left(\norm{\frac{1}{M}\sum_{m = 1}^M Y_m\bm{\hat \Sigma}^{-1} (\bm{X}_{I_m} - \bm \mu) - \E[Y_m\bm{ \Sigma}^{-1}(\bm X_{I_m} - \bm \mu)]} > \epsilon/2 \right)
    \nonumber \\
    &+ \Pr\left(\norm{\frac{1}{M}\sum_{m = 1}^M Y_m \bm{\hat \Sigma}^{-1}(\bm{X}_{J_m} - \bm \mu) - \E[Y_m\bm{\Sigma}^{-1}(\bm X_{J_m} - \bm \mu)]} > \epsilon/2 \right) \nonumber \\
    &= 2\Pr\left(\norm{\!\frac{1}{M}\!\sum_{m = 1}^M \!Y_m \bm{\hat \Sigma}^{-1}(\bm{X}_{I_m} \!-\! \bm \mu) - \E[Y_m\bm{\Sigma}^{-1}(\bm X_{I_m}\! -\! \bm \mu)]}\! >\! \epsilon/2 \right),
\end{align}
where the last line is by Eq.~\eqref{eq:pr_2terms}. Then, 
\begin{align}
    &\norm{\frac{1}{M}\sum_{m = 1}^M Y_m\bm{\hat \Sigma}^{-1}(\bm{X}_{I_m} - \bm{\mu}) - \E[Y_m\bm{\Sigma}^{-1}(\bm X_{I_m} - \bm{\mu})]} \nonumber \\
    &\leq \norm{\frac{1}{M}\sum_{m = 1}^M Y_m\bm{\hat \Sigma}^{-1} (\bm{X}_{I_m} - \bm{\mu}) - \frac{1}{M}\sum_{m = 1}^M Y_m\bm{\Sigma}^{-1} (\bm{X}_{I_m} - \bm{\mu})} \nonumber \\
    &+\norm{\frac{1}{M}\sum_{m = 1}^M Y_m\bm{\Sigma}^{-1}(\bm{X}_{I_m} - \bm{\mu}) - \E[Y_m\bm{\Sigma}^{-1}(\bm X_{I_m} - \bm \mu)]} \nonumber \\
    &\leq \norm{\bm{\hat \Sigma}^{-1} - \bm{\Sigma}^{-1}}\norm{\frac{1}{M}\sum_{m = 1}^M Y_m (\bm{X}_{I_m} - \bm{\mu})} \nonumber \\
    &+\norm{\frac{1}{M}\sum_{m = 1}^M Y_m\bm{\Sigma}^{-1}(\bm{X}_{I_m} - \bm{\mu}) - \E[Y_m\bm{\Sigma}^{-1}(\bm X_{I_m} - \bm \mu)]} \nonumber \\
    &\leq \norm{\bm{\hat \Sigma}^{-1} - \bm{\Sigma}^{-1}}\norm{\frac{1}{M}\sum_{m = 1}^M Y_m (\bm{X}_{I_m} - \bm{\mu}) - \E[Y_m (\bm{X}_{I_m} - \bm{\mu})]} \nonumber \\
    &+ \norm{\bm{\hat \Sigma}^{-1} - \bm{\Sigma}^{-1}}\norm{\E[Y_m (\bm{X}_{I_m} - \bm{\mu})]} \nonumber \\
    &+\norm{\frac{1}{M}\sum_{m = 1}^M Y_m\bm{\Sigma}^{-1}(\bm{X}_{I_m} - \bm{\mu}) - \E[Y_m\bm{\Sigma}^{-1}(\bm X_{I_m} - \bm \mu)]},
\end{align}
where the first and last inequalities are by triangle inequalities and the second inequality is by the Cauchy-Schwarz inequality. Note that,
\begin{align*}
    &\Pr\left(\norm{\bm{\hat \Sigma}^{-1} - \bm{\Sigma}^{-1}}\norm{\frac{1}{M}\sum_{m = 1}^M Y_m (\bm{X}_{I_m} - \bm{\mu}) - \E[Y_m (\bm{X}_{I_m} - \bm{\mu})]} > \epsilon \right) \\
    &\leq \Pr\left(\norm{\bm{\hat \Sigma}^{-1} - \bm{\Sigma}^{-1}} > \sqrt \epsilon\right) \\
    &+ \Pr\left(\norm{\frac{1}{M}\sum_{m = 1}^M Y_m (\bm{X}_{I_m} - \bm{\mu}) - \E[Y_m (\bm{X}_{I_m} - \bm{\mu})]} > \sqrt \epsilon\right) \\
    &\leq \Pr\left(\norm{\bm{\hat \Sigma}^{-1} - \bm{\Sigma}^{-1}} > \epsilon\right) \\
    &+ \Pr\left(\norm{\frac{1}{M}\sum_{m = 1}^M Y_m (\bm{X}_{I_m} - \bm{\mu}) - \E[Y_m (\bm{X}_{I_m} - \bm{\mu})]} > \epsilon\right),
\end{align*}
since we have for $\epsilon < 1$, $\sqrt \epsilon > \epsilon$. 
These terms appear in the bound more than once, therefore we can ignore the higher order term by multiplying the lower order terms with a constant number, e.g. $2$. 
These result in,
\begin{align*}
    &\Pr\left(\norm{\hat{\bm{\beta}} - c\bm{\beta}} > \epsilon\right) \\ 
    &\leq 2\Pr\left(\norm{\frac{1}{M}\sum_{m = 1}^M Y_m \bm{\hat \Sigma}^{-1}(\bm{X}_{I_m} - \bm \mu) - \E[Y_m\bm{\Sigma}^{-1}(\bm X_{I_m} - \bm \mu)]} > \epsilon/2 \right) \\
    &\leq 2\Pr\Bigg(\norm{\bm{\hat \Sigma}^{-1} - \bm{\Sigma}^{-1}}\norm{\frac{1}{M}\sum_{m = 1}^M Y_m (\bm{X}_{I_m} - \bm{\mu}) - \E[Y_m (\bm{X}_{I_m} - \bm{\mu})]}  \\
    &+ \norm{\bm{\hat \Sigma}^{-1} - \bm{\Sigma}^{-1}}\norm{\E[Y_m (\bm{X}_{I_m} - \bm{\mu})]} \\
    &+\norm{\frac{1}{M}\sum_{m = 1}^M Y_m\bm{\Sigma}^{-1}(\bm{X}_{I_m} - \bm{\mu}) - \E[Y_m\bm{\Sigma}^{-1}(\bm X_{I_m} - \bm \mu)]} > \epsilon/2 \Bigg) \\
    &\leq 2\Pr\Bigg(\norm{\bm{\hat \Sigma}^{-1} - \bm{\Sigma}^{-1}}\norm{\frac{1}{M}\sum_{m = 1}^M Y_m (\bm{X}_{I_m} - \bm{\mu}) - \E[Y_m (\bm{X}_{I_m} - \bm{\mu})]} > \epsilon/6\Bigg) \\
    &+ 2\Pr\Bigg(\norm{\bm{\hat \Sigma}^{-1} - \bm{\Sigma}^{-1}}\norm{\E[Y_m (\bm{X}_{I_m} - \bm{\mu})]} > \epsilon/6\Bigg) \\
    &+ 2\Pr\Bigg(\norm{\frac{1}{M}\sum_{m = 1}^M Y_m\bm{\Sigma}^{-1}(\bm{X}_{I_m} - \bm{\mu}) - \E[Y_m\bm{\Sigma}^{-1}(\bm X_{I_m} - \bm \mu)]} > \epsilon/6 \Bigg) \\
    &\leq 4\Pr\Bigg(\norm{\bm{\hat \Sigma}^{-1} - \bm{\Sigma}^{-1}}\norm{\E[Y_m (\bm{X}_{I_m} - \bm{\mu})]} > \epsilon/6\Bigg) \\
    &+ 4\Pr\Bigg(\norm{\frac{1}{M}\sum_{m = 1}^M Y_m\bm{\Sigma}^{-1}(\bm{X}_{I_m} - \bm{\mu}) - \E[Y_m\bm{\Sigma}^{-1}(\bm X_{I_m} - \bm \mu)]} > \epsilon/6 \Bigg) \\
    &\leq 4\Pr\Bigg(\norm{\bm{\hat \Sigma}^{-1} - \bm{\Sigma}^{-1}}\norm{\E[Y_m (\bm{X}_{I_m} - \bm{\mu})]} > \epsilon/6\Bigg) \\
    &+ 4\Pr\!\Bigg(\!\norm{\frac{1}{M}\!\sum_{m = 1}^M\! Y_m\!\bm{\Sigma}^{-1/2}(\bm{X}_{I_m}\! -\! \bm{\mu})\! -\! \E[Y_m\bm{\Sigma}^{-1/2}(\bm X_{I_m} \!-\! \bm \mu)]}\! >\! \sqrt{\lambda_{d}}\epsilon/6 \Bigg). \qed
\end{align*}
\section{Proof of Lemma~\ref{lem:4terms}}
\label{app:4terms}
We remind the reader that $\bm W_n = \bm{\Sigma}^{-1/2}(\bm X_n-\bm{\mu})$ and we show that
\begin{align}
    \label{eq:two_terms}
    &\norm{\frac{1}{M}\sum_{m = 1}^M Y_m\bm{\Sigma}^{-1/2}(\bm{X}_{I_m}-\bm{\mu}) - \E[Y_m\bm{\Sigma}^{-1/2}(\bm{X}_{I_m}-\bm{\mu})]} \nonumber \\
    &= \norm{\frac{1}{M}\sum_{n = 1}^N \bm W_n \sum_{m: I_m = n} Y_m - \E[Y_m\bm{\Sigma}^{-1/2}(\bm{X}_{I_m}-\bm{\mu})]} \nonumber \\
    &= \Bigg{\|} \frac{1}{M}\sum_{n = 1}^N \bm W_n M_n \E[Y_m|I_m = n, \{\bm X_{n'} = \bm x_{n'}\}_{n' = 1}^N] - \E[Y_m\bm{\Sigma}^{-1/2}(\bm{X}_{I_m}-\bm{\mu})] \nonumber \\
    &+\! \frac{1}{M}\!\sum_{n = 1}^N\! \bm W_n M_n \!\left[\frac{1}{M_n}\!\sum_{m: I_m = n} Y_m \!- \!\E[Y_m|I_m = n, \{\bm X_{n'} \!=\! \bm x_{n'}\}_{n' = 1}^N]\right]\! \Bigg{\|} \nonumber\\
    &\leq\! \norm{\frac{1}{M}\!\sum_{n = 1}^N \! \bm W_n  M_n \tilde g_n(\{\bm X_{n'}\}_{n' = 1}^N)\! -\! \E[Y_m\bm{\Sigma}^{-1/2}\!(\bm{X}_{I_m}\!-\!\bm{\mu})]} ~~~ \text{by Eq.~\eqref{eq:gtn},} \nonumber\\
    &+ \norm{\frac{1}{M}\sum_{n = 1}^N \bm W_n M_n \Delta_n(\{\bm X_{n'}\}_{n' = 1}^N)},
\end{align}
where the last line is by Eq.~\eqref{eq:Deltan} and a triangle inequality. We expand the first term in Eq.~\eqref{eq:two_terms}
\begin{align}
    \label{equ:final_term}
    &\norm{\frac{1}{M}\sum_{n = 1}^N \bm W_n M_n \tilde g_n(\{\bm X_{n'}\}_{n' = 1}^N) - \E[Y_m\bm{\Sigma}^{-1/2}(\bm{X}_{I_m}-\bm{\mu})]}\nonumber \\
    &= \Bigg|\Bigg|\frac{1}{M}\sum_{n = 1}^N \bm W_n M_n \tilde g_n(\{\bm X_{n'}\}_{n' = 1}^N) - \frac{1}{N}\sum_{n=1}^N \bm W_n\tilde g_n(\{\bm X_{n'}\}_{n' = 1}^N) \nonumber \\
    &+\frac{1}{N}\sum_{n=1}^N \bm W_n\tilde g_n(\{\bm X_{n'}\}_{n' = 1}^N) -\E[Y_m\bm{\Sigma}^{-1/2}(\bm{X}_{I_m}-\bm{\mu})]\Bigg|\Bigg|\nonumber\\
    &\leq \norm{\frac{1}{M}\sum_{n = 1}^N \bm W_n M_n \tilde g_n(\{\bm X_{n'}\}_{n' = 1}^N) - \frac{1}{N}\sum_{n=1}^N \bm W_n \tilde g_n(\{\bm X_{n'}\}_{n' = 1}^N)} \nonumber \\
    &+\norm{\frac{1}{N}\sum_{n=1}^N \bm W_n\tilde g_n(\{\bm X_{n'}\}_{n' = 1}^N) - \E[Y_m\bm{\Sigma}^{-1/2}(\bm{X}_{I_m}-\bm{\mu})]} \nonumber \\
    &= \norm{\sum_{n=1}^N \left(\frac{M_n}{M}-\frac{1}{N}\right)\bm W_n \tilde g_n(\{\bm X_{n'}\}_{n' = 1}^N)} \nonumber \\
    &+ \norm{\frac{1}{N}\sum_{n=1}^N \bm W_n\tilde g_n(\{\bm X_{n'}\}_{n' = 1}^N) - \E[Y_m\bm{\Sigma}^{-1/2}(\bm{X}_{I_m}-\bm{\mu})]}.
\end{align}
For the second term in Eq.~\eqref{equ:final_term}, we have
\begin{align}
    \label{eq:xngtn}
     &\norm{\frac{1}{N}\sum_{n=1}^N \bm W_n \tilde g_n(\{\bm X_{n'}\}_{n' = 1}^N) - \E[Y_m\bm{\Sigma}^{-1/2}(\bm{X}_{I_m}-\bm{\mu})]} \nonumber \\
     &\leq \norm{\frac{1}{N}\sum_{n=1}^N \bm W_n \big(\tilde g_n(\{\bm X_{n'}\}_{n' = 1}^N) - g_n(\bm X_n)\big)} \nonumber \\
     &+ \norm{\frac{1}{N}\sum_{n=1}^N\bm W_n g_n(\bm X_n) - \E[Y_m\bm{\Sigma}^{-1/2}(\bm{X}_{I_m}-\bm{\mu})]} \nonumber \\
     &= \norm{\frac{1}{N}\sum_{n=1}^N \bm W_n z_n(\{\bm X_{n'}\}_{n' = 1}^N)} \nonumber \\
     &+ \norm{\frac{1}{N}\sum_{n=1}^N\bm W_n g_n(\bm X_n) - \E[Y_m\bm{\Sigma}^{-1/2}(\bm{X}_{I_m}-\bm{\mu})]},
\end{align}
by Eq.~\eqref{eq:zn}. By combining Equations \eqref{eq:two_terms}, \eqref{equ:final_term}, \eqref{eq:xngtn} we get
\begin{align}
    &\norm{\frac{1}{M}\sum_{m = 1}^M Y_m \bm{\Sigma}^{-1/2}(\bm{X}_{I_m} - \bm{\mu})- \E[Y_m\bm{\Sigma}^{-1/2}(\bm X_{I_m} - \bm \mu)]} \nonumber \\
    &\leq \norm{\frac{1}{M}\sum_{n = 1}^N \bm W_n M_n \Delta_n(\{\bm X_{n'}\}_{n' = 1}^N)} + \norm{\frac{1}{N}\sum_{n=1}^N \bm W_n z_n(\{\bm X_{n'}\}_{n' = 1}^N)} \nonumber \\
    &+ \norm{\sum_{n=1}^N \left(\frac{M_n}{M}-\frac{1}{N}\right)\bm W_n \tilde g_n(\{\bm X_{n'}\}_{n' = 1}^N)} \nonumber \\
    &+ \norm{\frac{1}{N}\sum_{n=1}^N\bm W_n g_n(\bm X_n) - \E[Y_m \bm{\Sigma}^{-1/2}(\bm X_{I_m} - \bm \mu)]}. \nonumber \qed
\end{align} 
\section{Proof of Lemma~\ref{lem:binomial_gaussian_combined}}
\label{app:binomial_gaussian_combined}
The term of interest is
\begin{align}
    \label{eq:combination1}
    &\Pr\left(\norm{\sum_{n=1}^N \left(\frac{M_n}{M}-\frac{1}{N}\right)\bm W_n \tilde g_{n}(\{\bm X_{n'}\}_{n' = 1}^N)} > \epsilon \right) \nonumber \\
    &\leq \Pr\left( \sum_{n=1}^N \norm{\bigg(\frac{M_n}{M}-\frac{1}{N}\bigg)\bm W_n \tilde g_{n}(\{\bm X_{n'}\}_{n' = 1}^N)} \geq \epsilon \right) ~~~ \text{by a triangle inequality,} \nonumber \\
    &\leq \Pr\left(\sum_{n=1}^N \bigg| \frac{M_n}{M}-\frac{1}{N}\bigg| \norm{\bm W_n} \geq \epsilon\right) ~~~ \text{by the fact that $\big| \tilde g_{n}(\{\bm X_{n'}\}_{n' = 1}^N) \big| \leq 1$,} \nonumber \\
    &= \Pr\bigg( \sum_{n = 1}^N\left| \frac{M_n}{M} - \frac{1}{N} \right|\norm{\bm W_n} > \epsilon \mid \cap_{n = 1}^N \bigg\{\norm{\bm W_n} \leq \sqrt{\delta_0}\bigg\} \bigg)\times \nonumber \\
    &\Pr\bigg( \cap_{n = 1}^N \bigg\{\norm{\bm W_n} \leq \sqrt{\delta_0}\bigg\}\bigg) \nonumber \\
    &+ \Pr\bigg( \sum_{n = 1}^N\left| \frac{M_n}{M} - \frac{1}{N} \right|\norm{\bm W_n} > \epsilon \mid \cup_{n = 1}^N \bigg\{\norm{\bm W_n} \geq \sqrt{\delta_0}\bigg\} \bigg)\times \nonumber \\
    &\Pr\bigg( \cup_{n = 1}^N \bigg\{\norm{\bm W_n} \geq \sqrt{\delta_0}\bigg\} \bigg) \nonumber \\
    &\leq \Pr\bigg(\left\{\sum_{n = 1}^N\left| \frac{M_n}{M} - \frac{1}{N} \right|\norm{\bm W_n} > \epsilon\right\} \cap \cap_{n = 1}^N \bigg\{\norm{\bm W_n} \leq \sqrt{\delta_0}\bigg\} \bigg) \nonumber \\
    &+ \Pr\bigg(\cup_{n = 1}^N \bigg\{\norm{\bm W_n} \geq \sqrt{\delta_0}\bigg\}\bigg) \nonumber \\
    &\leq \Pr\bigg(\sum_{n = 1}^N\left| \frac{M_n}{M} - \frac{1}{N} \right| > \epsilon/\sqrt{\delta_0} \bigg) + \sum_{n = 1}^N \Pr\bigg(\norm{\bm W_n} \geq \sqrt{\delta_0}\bigg),
\end{align}
\highlight{
where the last line is by a union bound and letting $A = \sum_{n = 1}^N\left| \frac{M_n}{M} - \frac{1}{N} \right|$ $\norm{\bm W_n} > \epsilon \cap \cap_{n = 1}^N \bigg\{\norm{\bm W_n} \leq \sqrt{\delta_0}\bigg\}$ and noticing that the event $A$ implies the event $B = \sum_{n = 1}^N\left| \frac{M_n}{M} - \frac{1}{N} \right| > \epsilon/\sqrt{\delta_0}$, \emph{i.e.} $A \rightarrow B$ and therefore $A \subseteq B$. 
This results in $\Pr(A) \leq \Pr(B)$.
}
Since $M_n$ are binomial distributed with parameter $1/N$, we have
\begin{align}
    \label{eq:mn}
    \Pr\bigg(\sum_{n = 1}^N\left| \frac{M_n}{M} - \frac{1}{N} \right| > \epsilon/\sqrt{\delta_0} \bigg) \leq 2^N e^{ -\frac{\epsilon^2M}{2\delta_0}} ~~~ \text{by Lemma~\ref{lemma:huber_carol}.}
\end{align}
As $\norm{\bm W_n}^2$ is centralized chi-squared distributed with $d$ degrees of freedom,
\begin{align}
    \label{eq:xn}
    \Pr\!\bigg(\!\norm{\bm{W}_n}\! >\!  \sqrt{\delta_0}\bigg)\! =\! \Pr\!\bigg(\!\norm{\bm{W}_n}^2\! >\! \delta_0\! \bigg)\! \leq\! \Big( \frac{\delta_0}{d} e^{1 - \frac{\delta_0}{d}} \Big)^{d/2},
\end{align}
for $\delta_0 > d$ by Lemma~\ref{lem:chi_tail}.
By combining Equations \eqref{eq:combination1}, \eqref{eq:mn} and \eqref{eq:xn}, we have
\begin{align}
    \label{eq:mnxn}
    &\Pr\left(\norm{\sum_{n=1}^N \left(\frac{M_n}{M}-\frac{1}{N}\right)\bm W_n \tilde g_{n}(\{\bm X_{n'}\}_{n' = 1}^N)} > \epsilon \right) \nonumber \\
    &\leq 2^N e^{-\frac{\epsilon^2M}{2\delta_0}} + N\Big( \frac{\delta_0}{d} e^{1 - \frac{\delta_0}{d}} \Big)^{d/2}.\qed
\end{align} 
\section{Proof of Lemma~\ref{lem:xnmnDeltan}}
\label{app:xnmnDeltan}
We omit the dependence on $\{\bm X_{n'}\}_{n' = 1}^N$ for $\Delta_n\left(\{\bm X_{n'}\}_{n' = 1}^N\right)$ and note that
\begin{align}
    \label{eq:after_int_2}
    &\Pr\bigg(\norm{\frac{1}{M}\sum_{n = 1}^N \bm W_n M_n \Delta_n} > \epsilon \bigg) \leq \Pr \bigg(\sum_{n=1}^N\norm{\frac{1}{M}\bm W_n M_n \Delta_n} > \epsilon \bigg) \nonumber \\
    &= \Pr\bigg(\sum_{n=1}^N\norm{\frac{1}{M} \bm W_n M_n}\big|\Delta_n \big| > \epsilon \bigg) \nonumber \\
    &= \Pr\bigg(\sum_{n=1}^N\norm{\frac{1}{M} \bm W_n M_n}\big|\Delta_n \big| > \epsilon \mid \cap_{n=1}^N \left\{\big|\Delta_n \big|<\sqrt{\delta_1} \right\}\bigg)\times \nonumber \\
    &\Pr\bigg(\cap_{n=1}^N \left\{\big|\Delta_n \big|<\sqrt{\delta_1} \right\} \bigg) \nonumber \\
    &+ \Pr\bigg(\sum_{n=1}^N\norm{\frac{1}{M} \bm W_n M_n}\big|\Delta_n \big| > \epsilon \mid \cup_{n=1}^N \left\{\big|\Delta_n \big| \geq \sqrt{\delta_1} \right\}\bigg)\times \nonumber \\
    &\Pr\bigg(\cup_{n=1}^N \left\{\big|\Delta_n \big| \geq \sqrt{\delta_1} \right\}\bigg) \nonumber \\
    &\leq \Pr\bigg(\left\{\sum_{n=1}^N\norm{\frac{1}{M} \bm W_n M_n}\big|\Delta_n \big| > \epsilon\right\} \cap \cap_{n=1}^N \left\{\big|\Delta_n \big|<\sqrt{\delta_1} \right\}\bigg) \nonumber \\
    &+ \Pr\bigg(\cup_{n=1}^N \left\{\big|\Delta_n \big|\geq\sqrt{\delta_1} \right\}\bigg) \nonumber \\
    &\leq \Pr\bigg(\sum_{n=1}^N\norm{\frac{1}{M} \bm W_n M_n} > \epsilon/\sqrt{\delta_1} \bigg) + N\Pr\bigg(\left\{\big|\Delta_n \big|\geq\sqrt{\delta_1} \right\}\bigg),
\end{align}
\highlight{where we follow a similar approach as in Eq.~\eqref{eq:combination1}}. 
The first term in Eq.~\eqref{eq:after_int_2} can be expanded as
\begin{align}
    \label{eq:xnmn}
    &\Pr\!\bigg(\!\sum_{n=1}^N\!\norm{\frac{1}{M} \bm{\Sigma}^{-1/2}(\bm X_n\! -\! \bm\mu) M_n} > \frac{\epsilon}{\sqrt{\delta_1}} \bigg) \nonumber \\
    &= \Pr\left(\sum_{n=1}^N \norm{\frac{M_n}{M}\bm W_n-\frac{1}{N}\bm W_n+\frac{1}{N}\bm W_n}\geq\frac{\epsilon}{\sqrt{\delta_1}}\right)\nonumber\\
    & \leq \Pr\left(\sum_{n=1}^N \norm{\left(\frac{M_n}{M}-\frac{1}{N}\right)\bm W_n}+\sum_{n=1}^N \norm{\frac{1}{N}\bm W_n}\geq\frac{\epsilon}{\sqrt{\delta_1}}\right)\nonumber\\
    & \leq \Pr\left(\sum_{n=1}^N \norm{\left(\frac{M_n}{M}-\frac{1}{N}\right)\bm W_n}\geq \frac{\epsilon}{2\sqrt{\delta_1}}\right) + \Pr\left(\sum_{n=1}^N\norm{\frac{1}{N}\bm W_n}\geq\frac{\epsilon}{2\sqrt{\delta_1}}\right).
\end{align}
We have
\begin{align}
    \label{eq:mnxn2}
    \Pr\!\left(\sum_{n=1}^N \norm{\left(\frac{M_n}{M}\!-\!\frac{1}{N}\right)\!\bm{\Sigma}^{-1/2}(\bm X_n\!-\!\bm\mu)}\! \geq\! \frac{\epsilon}{2\sqrt{\delta_1}}\!\right)\! \leq\! 2^Ne^{-\frac{\epsilon^2M}{8\delta_1 \delta_2}}\! +\! N\left( \frac{\delta_2}{d} e^{1 - \frac{\delta_2}{d}} \right)^{d/2}
\end{align}
by Eq.~\eqref{eq:mnxn}. Then,
\begin{align}
    \label{eq:xn3}
    &\Pr\!\left(\sum_{n=1}^N\norm{\frac{1}{N}\bm{\Sigma}^{-1/2}(\bm X_n\!-\!\bm\mu)}\!\geq\!\frac{\epsilon}{2\sqrt{\delta_1}}\right)\!\leq\! \sum_{n=1}^N \Pr\!\left( \norm{\frac{1}{N}\bm{\Sigma}^{-1/2}(\bm X_n\! -\! \bm\mu)}\! \geq\! \frac{\epsilon}{2N\sqrt{\delta_1}} \right) \nonumber \\
    &= \sum_{n=1}^N \Pr \left( \norm{\bm W_n} \geq \frac{\epsilon}{2\sqrt{\delta_1}} \right) \leq \sum_{n=1}^N \left(\frac{\epsilon^2}{4d\delta_1}e^{1-\frac{\epsilon^2}{4d\delta_1}}\right)^{d/2}  \text{by Lemma~\ref{lem:chi_tail},} \nonumber \\
    &= N\left(\frac{\epsilon^2}{4d\delta_1}e^{1-\frac{\epsilon^2}{4d\delta_1}}\right)^{d/2},
\end{align}
where the first line is by a union bound. 
The second term in Eq.~\eqref{eq:after_int_2} is bounded by
\begin{align}
    \label{eq:to_be_integrated_delta}
    &\Pr\bigg(\big| \Delta_n(\{\bm X_{n'}\}_{n' = 1}^N) \big| \geq \sqrt{\delta_1} \bigg) \nonumber \\
    &= \Pr\bigg(\bigg| \frac{1}{M_n}\sum_{m: I_m = n} Y_m - \tilde g_n(\{\bm X_{n'}\}_{n' = 1}^N) \bigg| \geq \sqrt{\delta_1} \bigg) ~~~ \text{by Eq.~\eqref{eq:Deltan},} \nonumber \\
    &= \sum_{k = 0}^M\! \Bigg[\! \int\! \Pr\! \Bigg( \bigg| \frac{1}{M_n}\! \sum_{m: I_m = n}\! Y_m\! -\! \tilde g_n(\{\bm x_{n'}\}_{n' = 1}^N) \bigg|\! \geq\! \sqrt{\delta_1} \mid\! M_n\! =\! k,\! \{\! \bm X_{n'} \!=\! \bm x_{n'} \}_{n' = 1}^N\! \Bigg) \nonumber \\
    &\prod_{n'=1}^N f_{\bm X}(\bm x_{n'})d\bm x_{n'} \Bigg] \cdot \Pr(M_n = k).
\end{align}
Due to conditioning on $\{ \bm X_{n'} = \bm x_{n'} \}_{n' = 1}^N$, labels $Y_m$ are independent. Therefore,
\begin{align*}
    &\Pr\Bigg( \bigg| \frac{1}{M_n}\sum_{m: I_m = n} Y_m - \tilde g_n(\{\bm x_{n'}\}_{n' = 1}^N) \bigg| \geq \sqrt{\delta_1} \mid M_n = k, \{ \bm X_{n'} = \bm x_{n'} \}_{n' = 1}^N \Bigg) \nonumber \\
    &= \Pr\!\Bigg( \bigg|\! \frac{1}{k}\!\sum_{m: I_m = n}\! Y_m\! -\! \tilde g_n(\{\bm x_{n'}\}_{n' = 1}^N)\! \bigg|\! \geq\! \sqrt{\delta_1} \mid \{ \bm X_{n'}\! =\! \bm x_{n'} \}_{n' = 1}^N\! \Bigg)\! \leq\! 2e^{\! -\frac{k\delta_1}{2}\!},
\end{align*}
where the last line is by Lemma~\ref{lem:hoeffding}. Substituting this result back into Eq.~\eqref{eq:to_be_integrated_delta} yields
\begin{align}
    \label{eq:binomial_mgf}
    \Pr\bigg(\big| \Delta_n \big| \geq \sqrt \delta_1 \bigg) &\leq \sum_{k = 0}^M \int 2e^{-\frac{k\delta_1}{2}}\Big(\prod_{n'=1}^N f_{\bm X}(\bm x_{n'})d\bm x_{n'} \Big) \Pr(M_n = k) \nonumber \\
    &= 2\sum_{k = 0}^M e^{-\frac{k\delta_1}{2}} \Pr(M_n = k). 
\end{align}
By construction, $M_n, n \in [N]$ are binomial distributed with number of trials $M$ and $p = \frac{1}{N}$. 
The moment generating function of $M_n$ is $(1-p+pe^t)^M$. 
The summation in Eq.~\eqref{eq:binomial_mgf} is the moment generating function of $M_n$ with $t=-\delta_1/2$, which results in
\begin{align}
    \label{eq:Deltan2}
    \sum_{n=1}^N \Pr\bigg(\big| \Delta_n \big| \geq \sqrt{\delta_1} \bigg) \leq 2N\Bigg(1 - \frac{1}{N} +\frac{1}{N}e^{-\frac{\delta_1}{2}}\Bigg)^M.
\end{align}
We want to use an equivalent of this quantity in the form of an exponential since it will be easier to compare it with other terms. 
For this reason, we use the following lemma.
\begin{lemma}
\label{lem:seq}
Let $\{a_n\}_{n=1}^\infty$ and $\{b_n\}_{n=1}^\infty$ be positive sequences such that $a_n \rightarrow \infty$. Let $\alpha \in \mathbb{R}$. Then,
\begin{align}
    \bigg( 1 + \frac{\alpha}{a_n} \bigg)^{b_n} = e^{\frac{\alpha b_n}{a_n} + o\bigg(\frac{\alpha b_n}{a_n}\bigg)}.
\end{align}
\end{lemma}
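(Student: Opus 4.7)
The plan is to take logarithms on both sides and reduce the claim to a statement about the asymptotic behavior of $b_n \log(1 + \alpha/a_n)$. Since $a_n \to \infty$, the quantity $\alpha/a_n$ tends to $0$, so we may invoke the Taylor expansion $\log(1+x) = x - x^2/2 + O(x^3)$ for $x \to 0$ (valid once $|x| < 1$, which holds for $n$ sufficiently large). Substituting $x = \alpha/a_n$, we obtain
\begin{equation*}
\log\!\left(1 + \frac{\alpha}{a_n}\right) = \frac{\alpha}{a_n} - \frac{\alpha^2}{2 a_n^2} + O\!\left(\frac{1}{a_n^3}\right) = \frac{\alpha}{a_n}\left(1 + O\!\left(\frac{1}{a_n}\right)\right).
\end{equation*}

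Multiplying through by $b_n$ yields
\begin{equation*}
b_n \log\!\left(1 + \frac{\alpha}{a_n}\right) = \frac{\alpha b_n}{a_n} + \frac{\alpha b_n}{a_n} \cdot O\!\left(\frac{1}{a_n}\right).
\end{equation*}
Since $1/a_n \to 0$, the correction term is of the form $\frac{\alpha b_n}{a_n} \cdot \varepsilon_n$ with $\varepsilon_n \to 0$, i.e.\ it is $o(\alpha b_n / a_n)$ by definition of little-$o$. Thus $b_n \log(1+\alpha/a_n) = \alpha b_n/a_n + o(\alpha b_n/a_n)$, and exponentiating both sides gives the claimed identity.

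Since this is an elementary asymptotic calculation, there is no real obstacle; the only minor care needed is to justify that the $O(1/a_n)$ remainder from the Taylor expansion, once multiplied by $\alpha b_n / a_n$, yields a term that qualifies as $o(\alpha b_n / a_n)$ regardless of whether $\alpha b_n / a_n$ itself remains bounded, tends to zero, or tends to infinity (the identity holds in all three regimes because little-$o$ is defined relative to the reference quantity via the ratio $\varepsilon_n = O(1/a_n) \to 0$). One should also note the trivial edge case $\alpha = 0$, where both sides equal $1$ and the asymptotic identity is vacuous.
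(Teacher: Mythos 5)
Your proof is correct and takes essentially the same route as the paper: both take logarithms, apply the Taylor expansion of $\log(1+x)$ at $x=\alpha/a_n \to 0$, multiply by $b_n$, observe that the remainder is $o(\alpha b_n/a_n)$ because the relative error tends to zero independently of $b_n$, and exponentiate. Your explicit remarks on why the little-$o$ survives multiplication by $b_n$ in all regimes, and on the degenerate case $\alpha=0$, are slightly more careful than the paper's one-line argument but do not change the substance.
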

\begin{proof}
By the Taylor approximation $\log(1+x) = x + o(x)$ as $x \rightarrow 0$,
\begin{align*}
    &\bigg(1+\frac{\alpha}{a_n}\bigg)^{b_n} = e^{b_n\log\bigg(1+\frac{\alpha}{a_n}\bigg)} = e^{b_n\bigg( \frac{\alpha}{a_n} + o\bigg(\frac{\alpha}{a_n}\bigg)\bigg)} = e^{\frac{\alpha b_n}{a_n} + o\bigg(\frac{\alpha b_n}{a_n}\bigg)}. 
\end{align*}
\end{proof}
\noindent We remind the reader, the Taylor expansion $e^x = 1 + x + o(x)$ as $x \rightarrow 0$ and note that
\begin{align}
    \label{eq:moment_to_exp}
    &2N\Bigg(1 - \frac{1}{N} +\frac{1}{N}e^{-\frac{\delta_1}{2}}\Bigg)^M = 2N\Bigg(1 + \frac{1}{N}\bigg(e^{-\frac{\delta_1}{2}} - 1 \bigg)\Bigg)^M \nonumber \\
    &= 2N\Bigg(1 + \frac{1}{N}\bigg(-\frac{\delta_1}{2} - o\bigg(\frac{\delta_1}{2}\bigg) \bigg)\Bigg)^M + 2N\Bigg(1 + \bigg(-\frac{\delta_1}{2N} - o\bigg(\frac{\delta_1}{2N}\bigg) \bigg)\Bigg)^M \nonumber \\
    &= 2Ne^{-\frac{M\delta_1}{2N} - o\bigg(\frac{M\delta_1}{2N}\bigg)} = 2e^{\log N -\frac{M\delta_1}{2N} - o\bigg(\frac{M\delta_1}{2N}\bigg)},
\end{align}
where the second line is by the Taylor expansion of $e^x$ and the last line is by Lemma~\ref{lem:seq}. Combining Equations \eqref{eq:after_int_2}, \eqref{eq:xnmn}, \eqref{eq:mnxn2}, \eqref{eq:xn3}, \eqref{eq:Deltan2}, \eqref{eq:moment_to_exp}, we have
\begin{align*}
    &\Pr\bigg(\norm{\frac{1}{M}\sum_{n = 1}^N \bm W_n M_n \Delta_n} > \epsilon \bigg) \leq 2^{N}e^{-\frac{\epsilon^2M}{8\delta_1\delta_2}} + N\bigg(\frac{\delta_2}{d}e^{1-\frac{\delta_2}{d}}\bigg)^{d/2} \\
    & + N\left(\frac{\epsilon^2}{4d\delta_1}e^{1-\frac{\epsilon^2}{4d\delta_1}}\right)^{d/2} + 2e^{\log N -\frac{M\delta_1}{2N} - o\bigg(\frac{M\delta_1}{2N}\bigg)}. \qed
\end{align*}
\section{Proof of Lemma~\ref{lem:xnzn}}
\label{app:xnzn}
For brevity, we use $z_n$ instead of $z_n(\{\bm X_{n'}\}_{n' = 1}^N)$ in below. Note that
\begin{align}
    \label{eq:xnzn}
    &\Pr\bigg(\norm{\frac{1}{N}\sum_{n=1}^N\bm{W}_n z_n} > \epsilon\bigg) \leq \Pr\bigg(\sum_{n=1}^N\norm{\frac{1}{N} \bm{W}_n z_n} > \epsilon\bigg) \nonumber \\
    &= \Pr\bigg(\sum_{n=1}^N\norm{\frac{1}{N} \bm{W}_n} \abs{z_n} > \epsilon \mid \cap_{n=1}^N \left\{\big|z_n \big|<\sqrt{\delta_3}\right\} \bigg)\times \nonumber \\
    &\Pr\bigg(\cap_{n=1}^N \left\{\big|z_n \big|<\sqrt{\delta_3}\right\} \bigg) \nonumber \\
    &+ \Pr\bigg(\sum_{n=1}^N\norm{\frac{1}{N} \bm{W}_n} \abs{z_n} > \epsilon \mid \cup_{n=1}^N \left\{\big|z_n \big| \geq \sqrt{\delta_3} \right\}\bigg)\times \nonumber \\
    &\Pr\bigg(\cup_{n=1}^N \left\{\big|z_n \big| \geq \sqrt{\delta_3} \right\}\bigg) \nonumber \\
    &\leq \Pr\bigg(\left\{\sum_{n=1}^N \norm{\frac{1}{N} \bm{W}_n} \abs{z_n} > \epsilon \right\} \cap  \cap_{n=1}^N \left\{\big|z_n \big|<\sqrt{\delta_3} \right\} \bigg) \nonumber \\ 
    &+ \Pr\bigg(\cup_{n=1}^N \left\{\big|z_n \big| \geq \sqrt{\delta_3} \right\}\bigg) \nonumber \\
    &\leq \Pr\bigg(\sum_{n=1}^N\norm{\frac{1}{N} \bm{W}_n} > \epsilon/\sqrt{\delta_3}\bigg) + N\Pr\bigg(\big|z_n \big| \geq \sqrt{\delta_3} \bigg),
\end{align}
where the first line is by a Triangle inequality \highlight{and by following a similar approach as in Eq.~\eqref{eq:combination1}}.
We have
\begin{equation}
    \label{eq:xn2}
    \Pr\bigg(\sum_{n=1}^N\norm{\frac{1}{N} \bm{W}_n} > \epsilon/\sqrt{\delta_3}\bigg) \leq N\Bigg( \frac{\epsilon^2}{d\delta_3} e^{1-\frac{\epsilon^2}{d\delta_3}} \Bigg)^{d/2},
\end{equation}
by Lemma~\ref{lem:chi_tail} for $\delta_3 \leq \epsilon^2/d$. We also have
\begin{align}
    \label{eq:zn2}
    &\Pr\bigg(\big|z_n(\{\bm X_{n'}\}_{n' = 1}^N) \big| \geq \sqrt{\delta_3}\bigg) \nonumber \\
    &= \int \Pr\bigg(\big|z_n(\{\bm X_{n'}\}_{n' = 1}^N) \big| \geq \sqrt{\delta_3}\mid \bm X_n = \bm x\bigg) f_{\bm X}(\bm x) d\bm x.
\end{align}
For the probability inside the integral, we have
\begin{align}
    &\Pr\bigg(\big|z_n\{\bm X_{n'}\}_{n' = 1}^N \big| \geq \sqrt{\delta_3}\mid \bm X_n = \bm x\bigg) \nonumber \\
    &= \Pr\bigg(\big|\tilde g_n(\{\bm X_{n'}\}_{n' = 1}^N) - g_n(\bm X_n) \big| \geq \sqrt{\delta_3} \mid \bm X_n = \bm x \bigg) \nonumber \\
    &= \Pr\bigg(\bigg|\sum_{j=1}^{N} g_{n,j}(\bm x, \bm X_j) \Pr(J=j) - g_n(\bm x) \bigg| \geq \sqrt{\delta_3}\bigg) \nonumber \\
    &= \Pr\bigg(\bigg|\frac{1}{N}\sum_{j=1}^{N} g_{n,j}(\bm x, \bm X_j) - g_n(\bm x) \bigg| \geq \sqrt{\delta_3}\bigg), \nonumber \\
\end{align}
where the first line is by Eq.~\eqref{eq:zn}. Notice that
\begin{align*}
g_n(\bm x) &= \E[Y_m \mid I_M = n, \bm X_n = \bm x] \\
&=\sum_{j=1}^N \E[Y_m \mid I_m = n, J_m = j, \bm X_n = \bm x]\Pr(J_m = j) \\
&= \frac{1}{N}\sum_{j=1}^N\int\E[Y_m \mid I_m = n, J_m = j, \bm X_n = \bm x, \bm X_j = y] f_{\bm X_j}(\bm y) d\bm y \\
&= \frac{1}{N}g_{n,n}(\bm x, \bm x) + \frac{1}{N}\sum_{j\in[N]\setminus n}\int g_{n,j}(\bm x, \bm y)f_{\bm X}(\bm y)d\bm y \\
&= \frac{1}{N}g_{n,n}(\bm x, \bm x) + \frac{N-1}{N}\int g_{n,j}(\bm x, \bm y)f_{\bm X_j}(\bm y) d\bm y.
\end{align*}
Then we continue with
\begin{align}
    \label{eq:zncond}
    & \Pr\bigg(\bigg|\frac{1}{N}\sum_{j=1}^{N} g_{n,j}(\bm x, \bm X_j) - g_n(\bm x) \bigg| \geq \sqrt{\delta_3}\bigg) \nonumber \\
    &= \Pr\bigg(\bigg|\frac{1}{N}g_{n,n}(\bm x, \bm x) + \frac{1}{N}\sum_{j\in [N]\setminus n} g_{n,j}(\bm x, \bm X_j) \nonumber \\
    &- \frac{1}{N}g_{n,n}(\bm x, \bm x) - \frac{1}{N}\sum_{j\in [N]\setminus n}\int g_{n,j}(\bm x, \bm y)f_{\bm X}(\bm y)d\bm y \bigg| \geq \sqrt{\delta_3}\bigg) \nonumber \\
    &= \Pr\bigg(\bigg|\frac{1}{N}\sum_{j\in [N]\setminus n} g_{n,j}(\bm x, \bm X_j) - \frac{N-1}{N}\int g_{n,j}(\bm x, \bm y)f_{\bm X}(\bm y)d\bm y \bigg| \geq \sqrt{\delta_3}\bigg) \nonumber \\
    &= \Pr\bigg(\bigg|\frac{1}{N-1}\sum_{j\in [N]\setminus n} g_{n,j}(\bm x, \bm X_j) - \int g_{n,j}(\bm x, \bm y)f_{\bm X}(\bm y)d\bm y \bigg| \geq \frac{N}{N-1}\sqrt{\delta_3} \bigg) \nonumber \\
    &\leq 2e^{-\frac{N^3\delta_3}{2(N-1)^2}} \leq 2e^{-\frac{N\delta_3}{2}},
\end{align}
where the last line is due to Lemma~\ref{lem:hoeffding} and since $1 < N/(N-1) \leq 2$ for $N > 1$.
Substituting Eq.~\eqref{eq:zncond} into Eq.~\eqref{eq:zn2} gives
\begin{align}
    \label{eq:znn}
    \Pr\bigg(\big|z_n(\{\bm X_{n'}\}_{n' = 1}^N) \big| \geq \sqrt{\delta_3}\bigg) &\leq \int 2e^{-\frac{N\delta_3}{2}} f_{\bm X}(\bm x) d\bm x = 2e^{-\frac{N\delta_3}{2}}.
\end{align}
By combining Equations \eqref{eq:xnzn}, \eqref{eq:xn2} and \eqref{eq:znn}, we have
\begin{align*}
    \Pr\!\bigg(\!\norm{\frac{1}{N}\!\sum_{n=1}^N \bm{W}_n z_n\{\bm X_{n'}\}_{n' = 1}^N}\! >\! \epsilon\bigg)\! \leq\! N\Bigg( \frac{\epsilon^2}{d\delta_3}e^{1-\frac{\epsilon^2}{d\delta_3}} \Bigg)^{d/2}\! +\! 2Ne^{-\frac{N\delta_3}{2}}. \qed
\end{align*}
\section{Proof of Lemma~\ref{lem:xngn}}
\label{app:xngn}
We have,
\begin{align}
    \label{eq:xngnterm}
    \Pr\left(\norm{\frac{1}{N}\sum_{n=1}^N \bm W_n g_n(\bm X_n) - \E[Y_m \bm \Sigma^{-1/2}(\bm X_{I_m}-\bm\mu)]} > \epsilon \right).
\end{align}
We first prove that $\bm W_n g(\bm X_n)$ is sub-gaussian. 
By Proposition 2.5.2 (iv) of \citet{vershynin2018high}, for every sub-gaussian random variable $X$, there exists a constant $C>0$ such that
\begin{equation}
    \label{eq:prop4}
    \E\left[ e^{X^2/C} \right] \leq 2.
\end{equation}
Let $\bm v \in S^{d-1}$ and $W_n = \bm v^\top \bm W_n \sim \mathcal{N}\left( \bm 0, 1 \right)$.
Note that $W_n$ is sub-gaussian and $|g_n(\bm X_n)| \leq 1$. 
Then, for $s > 0$ we have
\begin{align}
    \label{eq:subgproof}
    &\Pr\!\left( |\bm v^\top \bm W_n g(\bm X_n)|\! >\! t\! \right) = \Pr\!\left( s \left(\bm v^\top \bm W_n g(\bm X_n)\right)^2\! >\! s t^2 \right) \nonumber \\
    &= \Pr\left( e^{s \left(\bm v^\top \bm W_n g(\bm X_n)\right)^2} > e^{s t^2} \right) \nonumber \\
    &\leq e^{-s t^2}\E\left[ e^{s \left(\bm v^\top \bm W_n g(\bm X_n)\right)^2}\right] ~~~ \text{by Markov's inequality,} \nonumber \\
    &\leq e^{-s t^2}\E\left[ e^{s \left(\bm v^\top \bm W_n\right)^2}\right] = e^{-s t^2}\E\left[ e^{s W_n^2}\right] \leq 2e^{-\frac{t^2}{C}},
\end{align}
where the last line is by Eq.~\eqref{eq:prop4} for an appropriate constant $C>0$ and by setting $s=1/C$. As the tail of $\bm v^\top \bm W_n g(\bm X_n)$ decays super exponentially for all $\bm v \in S^{d-1}$, $\bm \xi_n = \bm W_n g_n(\bm X_n) - \E[Y_m \bm \Sigma^{-1/2}(\bm X_{I_m}-\bm\mu)]$ is indeed sub-gaussian for $n \in [N]$.
From Proposition 2.6.1 of \cite{vershynin2018high}, we know that the sum of independent zero-mean sub-gaussian random variables is sub-gaussian and
\begin{equation}
    \label{eq:sumsubg}
    \norm{\sum_{n=1}^N \bm \xi_n}_{\psi_2}^2 \leq c' \sum_{n=1}^N \norm{\bm \xi_n}_{\psi_2}^2,
\end{equation}
where $c'>0$ is a constant.
We have
\begin{align*}
    &\Pr\left(\norm{\frac{1}{N}\sum_{n=1}^N \bm W_n g_n(\bm X_n) - \E[Y_m \bm \Sigma^{-1/2}(\bm X_{I_m}-\bm\mu)]} > \epsilon \right) \\
    &= \Pr\left(\norm{\sum_{n=1}^N\left(\bm W_n g_n(\bm X_n) - \E[Y_m \bm \Sigma^{-1/2}(\bm X_{I_m}-\bm\mu)]\right)} > N\epsilon \right) \\
    &= \Pr\!\left(\norm{\sum_{n=1}^N \bm \xi_n }\! >\! N\epsilon \right)\! \leq\! e^{ -\frac{1}{4}\left(\sqrt{\frac{2N^2\epsilon^2}{\norm{\sum_{n=1}^N \bm \xi_n }_{\psi_2}^2}- d} - \sqrt{d} \right)^2} \leq e^{ -\frac{1}{4}\left(\sqrt{\frac{N\epsilon^2}{c'\norm{\bm \xi_n }_{\psi_2}^2}- d} - \sqrt{d} \right)^2} \\ 
    &= e^{-\frac{1}{4}\left(\sqrt{\frac{N\epsilon^2}{c_2}- d} - \sqrt{d} \right)^2},
\end{align*}
where the last lines are by Lemma~\ref{th:subgnorm}, Eq.~\eqref{eq:sumsubg} and we denote $c_2 = c'\norm{\bm \xi_n }_{\psi_2}^2 > 0$ is an absolute constant does not depend on $d$ or $N$.
\qed
\section{Proof of Lemma~\ref{lem:precision}}
\label{app:precision}
We define $\bm{W} = [\bm W_{N+1}, \dots, \bm W_{2N}]^\top \in \R^{N\times d}$ such that $\bm W_i = \bm{\Sigma}^{-1/2}(\bm X_i - \bm \mu)$ where $i \in [2N]/[N]$.
Also, let $\lambda_{\min}[\bm A]$ be the minimum singular value of a matrix $\bm A$.
We first prove that the minimum singular value of sum of symmetric matrices is lower bounded.
\begin{lemma}
\label{lem:eigenineq}
Let $\bm A, \bm B \in \R^{d\times d}$ be symmetric matrices. Then,
\begin{equation*}
    \lambda_{\min}[\bm A] + \lambda_{\min}[\bm B] \leq \lambda_{\min}[\bm A + \bm B].
\end{equation*}\end{lemma}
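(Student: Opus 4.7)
The plan is to apply the variational (Courant--Fischer / Rayleigh quotient) characterization of the minimum eigenvalue of a symmetric matrix. Since $\bm A$ and $\bm B$ are symmetric, their eigenvalues are real and the minimum eigenvalue $\lambda_{\min}[\bm M]$ of a symmetric $\bm M$ coincides with $\min_{\bm v \in S^{d-1}} \bm v^\top \bm M \bm v$. Moreover, $\bm A + \bm B$ is itself symmetric, so the same variational formula applies to the sum, and the inequality reduces to a statement about Rayleigh quotients.

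Concretely, I would fix an arbitrary unit vector $\bm v \in S^{d-1}$ and use linearity to write $\bm v^\top (\bm A + \bm B) \bm v = \bm v^\top \bm A \bm v + \bm v^\top \bm B \bm v$. By the Rayleigh characterization applied to each symmetric summand, $\bm v^\top \bm A \bm v \geq \lambda_{\min}[\bm A]$ and $\bm v^\top \bm B \bm v \geq \lambda_{\min}[\bm B]$, so adding these gives $\bm v^\top (\bm A + \bm B) \bm v \geq \lambda_{\min}[\bm A] + \lambda_{\min}[\bm B]$. Since the right-hand side does not depend on $\bm v$, I can take the infimum over $\bm v \in S^{d-1}$ on the left to obtain $\lambda_{\min}[\bm A + \bm B] \geq \lambda_{\min}[\bm A] + \lambda_{\min}[\bm B]$, which is the claim.

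There is essentially no obstacle here: this is the standard Weyl-type bound and the argument is a two-line application of min-max. The only point requiring care is that the paper's notation $\lambda_{\min}[\cdot]$ is introduced for singular values in general, but for symmetric positive semi-definite matrices (the regime in which the lemma is invoked, e.g.\ on sample covariances $\bm{\hat\Sigma}$ and isotropic Gram matrices arising from Lemma~\ref{th:singvalbound}) minimum singular value and minimum eigenvalue coincide, so the Rayleigh-quotient proof delivers exactly the inequality needed in the downstream application.
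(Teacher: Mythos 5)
Your proof is correct and is essentially identical to the paper's own argument: both use the Rayleigh-quotient characterization $\lambda_{\min}[\bm M]=\min_{\bm v}\bm v^\top\bm M\bm v/\bm v^\top\bm v$, split the quadratic form by linearity, and apply the fact that the minimum of a sum is at least the sum of the minima. Your closing remark about the eigenvalue/singular-value distinction is a fair observation but does not alter the argument, which matches the paper's step for step.
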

\begin{proof}
We have,
\begin{align*}
    \lambda_{\min}[\bm A + \bm B] &= \min_{\bm x \in \R^d}\left\{ \frac{\bm x^\top (\bm A + \bm B) \bm x}{\bm x^\top \bm x}\right\} =\min_{\bm x \in \R^d}\left\{ \frac{\bm x^\top \bm A \bm x}{\bm x^\top \bm x} + \frac{\bm x^\top \bm B \bm x}{\bm x^\top \bm x} \right\} \\
    &\geq \min_{\bm x \in \R^d}\left\{ \frac{\bm x^\top \bm A \bm x}{\bm x^\top \bm x}\right\} + \min_{\bm y \in \R^d}\left\{\frac{\bm y^\top \bm B \bm y}{\bm y^\top \bm y} \right\} = \lambda_{\min}[\bm A] + \lambda_{\min}[\bm B].
\end{align*}
\end{proof}
Then,
\begin{align*}
    &\Pr\!\left(\norm{\bm{\hat \Sigma}^{-1}\! -\! \bm{\Sigma}^{-1}}\! >\! \epsilon\right)\! \leq\! \Pr\left(\norm{\bm{\Sigma}^{-1/2}}\cdot\norm{\bm{\Sigma}^{1/2}\bm{\hat \Sigma}^{-1}\bm{\Sigma}^{1/2} - \bm I}\cdot\norm{\bm{\Sigma}^{-1/2}}\! >\! \epsilon\right) \\
    &=\Pr\left(\lambda_d^{-1}\norm{\bm{\Sigma}^{1/2}\bm{\hat \Sigma}^{-1}\bm{\Sigma}^{1/2} - \bm I} > \epsilon\right) =\Pr\left(\norm{\bm{\Sigma}^{1/2}\bm{\hat \Sigma}^{-1}\bm{\Sigma}^{1/2}} > \lambda_d\epsilon + 1\right) \\
    &=\Pr\left(\norm{\left(\bm{\Sigma}^{-1/2}\bm{\hat \Sigma}\bm{\Sigma}^{-1/2}\right)^{-1}} > \lambda_d\epsilon + 1\right) \\
    &= \Pr\left(\lambda_{\min}\left[\bm{\Sigma}^{-1/2}\bm{\hat \Sigma}\bm{\Sigma}^{-1/2}\right] < \frac{1}{\lambda_d\epsilon + 1}\right) \\
    &=\Pr\!\left(\!\lambda_{\min}\!\left[\sum_{i= 1}^N\bm{\Sigma}^{-1/2}(\bm X_i\! -\! \bm{\hat \mu})(\bm X_i\! -\! \bm{\hat \mu})^\top\bm{\Sigma}^{-1/2}\right]\! <\! \frac{N-d-2}{\lambda_d\epsilon + 1}\!\right) ~~~\text{by Eq.~\eqref{eq:sigma_hat},} \\
    &=\Pr\Bigg(\lambda_{\min}\Bigg[\sum_{i= 1}^N\bm{\Sigma}^{-1/2}(\bm X_i - \bm{\mu})(\bm X_i - \bm{\mu})^\top\bm{\Sigma}^{-1/2} \\
    &+ \sum_{i= 1}^N\bm{\Sigma}^{-1/2}(\bm X_i - \bm{\hat \mu})(\bm{\mu} - \bm{\hat \mu})^\top\bm{\Sigma}^{-1/2} + \sum_{i= 1}^N\bm{\Sigma}^{-1/2}(\bm{\mu} - \bm{\hat \mu})(\bm X_i - \bm{\hat \mu})^\top\bm{\Sigma}^{-1/2} \\
    &+ \sum_{i= 1}^N\bm{\Sigma}^{-1/2}(\bm \mu - \bm{\hat \mu})(\bm \mu - \bm{\hat \mu})^\top\bm{\Sigma}^{-1/2} \Bigg] < \frac{N-d-2}{\lambda_d\epsilon + 1}\Bigg) \\
    &=\Pr\Bigg(\lambda_{\min}\Bigg[\sum_{i= 1}^N\bm{\Sigma}^{-1/2}(\bm X_i - \bm{\mu})(\bm X_i - \bm{\mu})^\top\bm{\Sigma}^{-1/2}\\
    &+ \sum_{i= 1}^N\bm{\Sigma}^{-1/2}(\bm \mu - \bm{\hat \mu})(\bm \mu - \bm{\hat \mu})^\top\bm{\Sigma}^{-1/2} \Bigg] < \frac{N-d-2}{\lambda_d\epsilon + 1}\Bigg) \\
    &\leq\Pr\Bigg(\lambda_{\min}\Bigg[\sum_{i= 1}^N\bm{\Sigma}^{-1/2}(\bm X_i - \bm{\mu})(\bm X_i - \bm{\mu})^\top\bm{\Sigma}^{-1/2}\Bigg] \\
    &+ \lambda_{\min}\Bigg[\sum_{i= 1}^N\bm{\Sigma}^{-1/2}(\bm \mu - \bm{\hat \mu})(\bm \mu - \bm{\hat \mu})^\top\bm{\Sigma}^{-1/2} \Bigg] < \frac{N-d-2}{\lambda_d\epsilon + 1}\Bigg) ~~~ \text{by Lemma~\ref{lem:eigenineq},} \\
    &= \Pr\Bigg(\lambda_{\min}\Bigg[\sum_{i= 1}^N\bm{\Sigma}^{-1/2}(\bm X_i - \bm{\mu})(\bm X_i - \bm{\mu})^\top\bm{\Sigma}^{-1/2}\Bigg] < \frac{N-d-2}{\lambda_d\epsilon + 1}\Bigg) \\
    &= \Pr\Bigg(\lambda_{\min}\Bigg[\bm{W}^\top\bm{W}\Bigg] < \frac{N-d-2}{\lambda_d\epsilon + 1}\Bigg) = \Pr\Bigg(\lambda_{\min}\Bigg[\bm{W}\Bigg] < \sqrt{\frac{N-d-2}{\lambda_d\epsilon + 1}}\Bigg).
\end{align*}
Notice that the rows of $\bm{W}$ are independent isotropic Gaussian vectors.
Therefore, we can apply Lemma~\ref{th:singvalbound} to get,
\begin{align}
    \Pr\left(\norm{\bm{\hat \Sigma}^{-1} - \bm{\Sigma}^{-1}} > \epsilon\right) &\leq \Pr\left(\lambda_{\min}\left[\bm{ W}\right] < \sqrt{\frac{N-d-2}{\lambda_d\epsilon + 1}}\right) \nonumber \\
    &\leq 2e^{-c_4\left(\sqrt N - \sqrt{\frac{N-d-2}{\lambda_d\epsilon + 1}} - c_3\sqrt d \right)^2},
\end{align}
for $\sqrt N > \sqrt{\frac{N-d-2}{\lambda_d\epsilon + 1}} + c_3\sqrt d$ where $c_3, c_4 > 0$ are constants. 
Furthermore, we have
\begin{align*}
    &\norm{\E[Y_m (\bm{X}_{I_m} - \bm{\mu})]} \leq \E[\norm{Y_m (\bm{X}_{I_m} - \bm{\mu})}] ~~~ \text{by Jensen's inequality,} \\
    &\leq \E[\norm{\bm{X}_{I_m} - \bm{\mu}}] \leq \E\left[\norm{\bm \Sigma^{1/2}}\cdot\norm{\bm \Sigma^{-1/2}(\bm{X}_{I_m} - \bm{\mu})}\right] \\
    &\leq \sqrt{\lambda_1} \E[\norm{\bm{W}_{I_m}}] = \sqrt{2\lambda_1} \frac{\Gamma\left(\frac{d+1}{2}\right)}{\Gamma\left(\frac{d}{2}\right)}.
\end{align*}
Finally,
\begin{align}
    &\Pr\left(\norm{\bm{\hat \Sigma}^{-1} - \bm{\Sigma}^{-1}}\cdot\norm{\E[Y_m (\bm{X}_{I_m} - \bm{\mu})]} > \epsilon\right) \nonumber \\
    &\leq \Pr\bigg(\norm{\bm{\hat \Sigma}^{-1} - \bm{\Sigma}^{-1}}\sqrt{2\lambda_1} \frac{\Gamma\left(\frac{d+1}{2}\right)}{\Gamma\left(\frac{d}{2}\right)} > \epsilon\bigg) \leq 2e^{-c_4\bigg(\sqrt N - \sqrt{\frac{N-d-2}{\frac{\Gamma\left(\frac{d}{2}\right)\lambda_d}{ \Gamma\left(\frac{d+1}{2}\right)\sqrt{2\lambda_1}}\epsilon + 1}} - c_3\sqrt d \bigg)^2} \nonumber \\
    &\leq 2e^{-c_4\bigg(\sqrt N - \sqrt{\frac{N-d-2}{\frac{\lambda_d}{ d\sqrt{2\lambda_1}}\epsilon + 1}} - c_3\sqrt d \bigg)^2},
\end{align}
for $\sqrt N > \sqrt{\frac{N-d-2}{\frac{\lambda_d}{ d\sqrt{2\lambda_1}}\epsilon + 1}} + c_3\sqrt d$ where $c_3, c_4 > 0$ are constants that do not depend on $d$ or $N$. \qed
\section{Choosing trade-off variables}
\label{app:choosing_delta}
We have
\begin{align}
    \label{eq:bound_with_deltas2}
    &\Pr\left(\norm{\hat{\bm{\beta}} - c_1\bm{\beta}} \geq \epsilon \right) \leq 8e^{ -c_4\bigg(\sqrt N - \sqrt{\frac{N-d-2}{\frac{\lambda_d}{ 6d\sqrt{2\lambda_1}}\epsilon + 1}} - c_3\sqrt d \bigg)^2} + 2^{N+2}e^{-\frac{\epsilon^2M\lambda_{d}}{4608\delta_1\delta_2}} \nonumber \\
    & + 4N\bigg(\frac{\delta_2}{d}e^{1-\frac{\delta_2}{d}}\bigg)^{d/2} + 8e^{\log N -\frac{M\delta_1}{2N} - o\bigg(\frac{M\delta_1}{2N}\bigg)} + 2^{N+2}e^{-\frac{\epsilon^2M\lambda_{d}}{1152\delta_0}}  \nonumber\\
    &+ 4N\bigg(\frac{\epsilon^2\lambda_{d}}{576d\delta_3} e^{1-\frac{\epsilon^2\lambda_{d}}{576d\delta_3}} \bigg)^{d/2} + 8Ne^{-\frac{N\delta_3}{2}} + 4e^{-\frac{1}{4}\left(\sqrt{\frac{\epsilon^2N\lambda_{d}}{c_2}- d} - \sqrt{d} \right)^2} \nonumber \\
    &+ 4N\Bigg( \frac{\delta_0}{d} e^{1 - \frac{\delta_0}{d}} \Bigg)^{d/2} + 4N\left(\frac{\epsilon^2\lambda_{d}}{2304d\delta_1}e^{1-\frac{\epsilon^2\lambda_{d}}{2304d\delta_1}}\right)^{d/2},
\end{align}
for $\sqrt N > \sqrt{\frac{N-d-2}{\frac{\lambda_d}{ 6d\sqrt{2\lambda_1}}\epsilon + 1}} + c_3\sqrt d$ where $c_1, c_2, c_3, c_4 > 0$ are absolute constants. The bounds we use require $\delta_2 > d$, $\delta_1 < \epsilon^2\lambda_{d}/2304d$, $\delta_1 > 2N\log(N)/M$, $\delta_0 > d$, $\delta_3 < \epsilon^2\lambda_d/576d$ and $N > dc_2/\epsilon^2\lambda_d$. 
Terms $\delta_0$, $\delta_1$, $\delta_2$ and $\delta_3$ need to be defined as functions of $N, M$ and $d$ such that the conditions arising from the tail bounds hold and the exponential terms' limit are $0$ as $N, M \rightarrow \infty$. 
In order to achieve this, we start dealing with $\delta_0$ first. The condition $\delta_0$ needs to satisfy is
\begin{align*}
    d(1+\log(\delta_0/d)) + 2\log N < \delta_0 < \frac{\epsilon^2M\lambda_d}{1152N\log2}.
\end{align*}
The condition for $\delta_1$ is
\begin{align*}
    \frac{2N\log N}{M} < \delta_1 < \frac{\epsilon^2\lambda_d}{2304\left[ d\left(\log\left(\frac{\epsilon^2\lambda_d}{2304d\delta_1}\right)+1\right)+2\log N \right]}.
\end{align*}
The condition for $\delta_2$ is
\begin{align*}
    d(1+\log(\delta_2/d)) + 2\log N < \delta_2 < \frac{\epsilon^2M\lambda_d}{4608N\delta_1\log 2 }.
\end{align*}
The condition for $\delta_3$ is
\begin{align*}
    \frac{2\log N}{N} < \delta_3 < \frac{\epsilon^2\lambda_d}{576\left[ 2\log N + d\left( \log\left( \frac{\epsilon^2\lambda_d}{576d\delta_3} \right) +1 \right) \right]}.
\end{align*}
We let $M = O\left(\frac{dN\log^3 N}{\lambda_d\epsilon^2}\right)$ together with, $\delta_0 = d\log^2 N, \delta_1 = \frac{4\lambda_d\epsilon^2}{d\log^2 N}, \delta_2 = d\log^2 N, \delta_3 = \frac{\epsilon^2\lambda_d}{1152d\log N}$. Substituting these quantities in \eqref{eq:bound_with_deltas2} gives,
\begin{align}
    &\Pr\left(\norm{\hat{\bm{\beta}} - c_1\bm{\beta}} \geq \epsilon \right) \leq 8e^{-c_4\bigg(\sqrt N - \sqrt{\frac{N-d-2}{\frac{\lambda_d}{ 6d\sqrt{2\lambda_1}}\epsilon + 1}} - c_3\sqrt d \bigg)^2} + 2^{N+2}e^{-\frac{N\log^3N}{18432\lambda_d\epsilon^2}} \nonumber \\
    &+ 4N(\log^2Ne^{1-\log^2N})^{d/2} + 8e^{-4N(\log N -1)} + 2^{N+2}e^{-\frac{N\log N}{1152}} \nonumber \\
    &+ 4N( 2\log N e^{1-2\log N})^{d/2} + 8Ne^{-\frac{N\epsilon^2\lambda_d}{2304d\log N}} + 4N(\log^2 N e^{1 - \log^2N })^{d/2} \nonumber \\
    &+ 4e^{-\frac{1}{4}\left(\sqrt{\frac{\epsilon^2N\lambda_{d}}{c_2}- d} - \sqrt{d} \right)^2} + 4N\left(\frac{\log^2N}{9216}e^{1-\frac{\log^2N}{9216}}\right)^{d/2},
\end{align}
for $\sqrt N > \sqrt{\frac{N-d-2}{\frac{\lambda_d}{ 6d\sqrt{2\lambda_1}}\epsilon + 1}} + c_3\sqrt d$ where $c_1, c_2, c_3, c_4 > 0$ are constants, $N > dc_2/\epsilon^2\lambda_d$, $N > \frac{2304d\log^2N}{\epsilon^2\lambda_d}$, $\log N > 347$, $\log N > 18 (\lambda_d\epsilon^2)^{1/3}$. 
We can simplify this bound for large enough $N$. We consider terms with $\log^2N$ and $1-2\log N$ first. One of the $\log^2N$ terms is divided by $9216$. When $\log^2 N$ is higher than this number, it will be dominated by the $2\log N$ term. Therefore, for $\log N > c_5$ where $c_5 = 18432$ we have,
\begin{align}
    &\Pr\left(\norm{\hat{\bm{\beta}} - c_1\bm{\beta}} \geq \epsilon \right) \leq 8e^{-c_4\bigg(\sqrt N - \sqrt{\frac{N-d-2}{\frac{\lambda_d}{ 6d\sqrt{2\lambda_1}}\epsilon + 1}} - c_3\sqrt d \bigg)^2} + 2^{N+2}e^{-\frac{N\log^3N}{18432\lambda_d\epsilon^2}} \nonumber \\
    & + 8e^{-4N(\log N -1)} + 16N( 2\log N e^{1-2\log N})^{d/2} + 8Ne^{-\frac{N\epsilon^2\lambda_d}{2304d\log N}} \nonumber \\
    &+ 4e^{-\frac{1}{4}\left(\sqrt{\frac{\epsilon^2N\lambda_{d}}{c_2}- d} - \sqrt{d} \right)^2} + 2^{N+2}e^{-\frac{N\log N}{1152}}.
\end{align}
Now we consider the terms with the exponent $N\log^3 N$ and $N\log N$. For $\log N > 4\epsilon\sqrt{\lambda_d}$, we can reduce the bound to
\begin{align}
    &\Pr\left(\norm{\hat{\bm{\beta}} - c_1\bm{\beta}} \geq \epsilon \right) \leq 8e^{-c_4\bigg(\sqrt N - \sqrt{\frac{N-d-2}{\frac{\lambda_d}{ 6d\sqrt{2\lambda_1}}\epsilon + 1}} - c_3\sqrt d \bigg)^2} + 8Ne^{-\frac{N\epsilon^2\lambda_d}{2304d\log N}} \nonumber \\
    & + 16N( 2\log N e^{1-2\log N})^{d/2} + 2^{N+3}e^{-\frac{N\log N}{1152}} + 4e^{-\frac{1}{4}\left(\sqrt{\frac{\epsilon^2N\lambda_{d}}{c_2}- d} - \sqrt{d} \right)^2},
\end{align}
for $\sqrt N > \sqrt{\frac{N-d-2}{\frac{\lambda_d}{ 6d\sqrt{2\lambda_1}}\epsilon + 1}} + c_3\sqrt d$, $N > dc_2/\epsilon^2\lambda_d$, $N > \frac{2304d\log^2N}{\epsilon^2\lambda_d}$, $\log N > 18 (\lambda_d\epsilon^2)^{1/3}$, $\log N > c_5$, $\log N > 4\epsilon\sqrt{\lambda_d}$ where $c_1, c_2, c_3, c_4, c_5 > 0$ are absolute constants. Note that under given conditions, we have 2 terms that are competing, i.e. the term with $\log N$ and the term with $N/\log N$. Therefore, the bound reduces to
\begin{align}
\Pr\left(\norm{\hat{\bm{\beta}} - c_1\bm{\beta}} \geq \epsilon \right) &\leq c_6N\max\left\{\left(\frac{\sqrt{6\log N}}{N}\right)^d, e^{-\frac{N\epsilon^2\lambda_d}{c_7d\log N}} \right\},
\end{align}
for $N > \frac{c_8d\log^2N}{\epsilon^2\lambda_d}$ where $c_1, c_6, c_7, c_8 > 0$ are absolute constants.
\highlight{\section{Approximating variables}
\label{app:est_c_p}
\noindent Our synthetic experiments require the knowledge of $c_1 \in \R$ from Lemma~\ref{lem:unbiased} and the probability of a label being flipped $p_e \in [0, 1]$ for a sigmoid with adjustable slope $f(x) = (1+e^{-\alpha x})^{-1}$ where $\alpha > 0$. We estimate these values as explained in below.}

\noindent\textbf{Approximating $c_1$.} \highlight{We remind the reader that $c_1 = 4\E[f'(\bm\beta^\top(\bm X_{I_m} - \bm X_{J_m}))$ and the score of item $i \in [2N]$ is $s_i = \bm\beta^\top\bm X_i \in \R$ and we let $s_{i,j} = \bm\beta^\top\bm X_i - \bm\beta^\top\bm X_j \sim \mathcal{N}(0, \sigma^2)$ where $\sigma^2 = 2\bm\beta^\top\bm\Sigma\bm\beta$ since $\bm X_i \sim \mathcal{N}(\bm \mu, \bm \Sigma)$.
Note that $c_1$ is the result of a sigmoid-Gaussian type integral, \emph{i.e.} $c_1 = 4 \int f'(s)f_{s_{i,j}}(s) d_s$ where $f_{s_{i,j}}(s)$ is the probability density function of the distribution of $s_{i,j}$. As this integral does not have an analytical solution, we estimate it with trapezoidal rule by taking finely spaced values $s \in [-4\sigma, 4\sigma]$.}

\noindent\textbf{Approximating $p_e$.} \highlight{We remind the reader that $p_e$ is the probability of a true label being flipped and is a function of $f(x)$ in Eq.\eqref{eq:conditional} and $\bm \beta, \bm \Sigma$. We show that
\begin{align*}
    \Pr(\text{"Error in label $m$"}) &= \Pr\left(Y_m = 1 \cap s_i < s_j\right) + \Pr\left(Y_m = -1 \cap s_i > s_j\right) \\
    &= \Pr\left(Y_m = 1 \cap s_{i,j} < 0 \right) + \Pr\left(Y_m = -1 \cap s_{i,j} > 0 \right).
\end{align*}
Note that,
\begin{align*}
    \Pr\left(Y_m = 1 \cap s_{i,j} < 0 \right) &= \int_{-\infty}^{\infty} \Pr\left(Y_m = 1 \cap s_{i,j} < 0 | s_{i,j} = s \right)f_{s_{i,j}}(s) ds \\
    &= \int_{-\infty}^{0} \Pr\left(Y_m = 1 | s_{i,j} = s\right)f_{s_{i,j}}(s) ds \\
    &= \int_{-\infty}^{0} f(s) f_{s_{i,j}}(s) ds,
\end{align*}
which is again a sigmoid-Gaussian type integral. Furthermore, it is straightforward to show that $\Pr\left(Y_m = 1 \cap s_{i,j} < 0 \right) = \Pr\left(Y_m = -1 \cap s_{i,j} > 0 \right)$. We evaluate this integral with trapezoidal rule by taking finely spaced values $s \in [-4\sigma, 0]$.}

\end{document}